\documentclass[letterpaper]{article} 
\usepackage{aaai2027}
\nocopyright
\usepackage[hyphens]{url}  
\usepackage{graphicx} 
\usepackage{natbib}  
\usepackage{caption} 
\usepackage{booktabs}
\usepackage{amsmath}
\usepackage{amssymb}
\usepackage{amsthm}
\newcommand{\E}{\mathbb{E}}
\newcommand{\R}{\mathbb{R}}
\newcommand{\KL}{\mathrm{KL}}
\newcommand{\sgm}{\sigma}
\newcommand{\ind}[1]{\mathbf{1}\{#1\}}
\newcommand{\Bern}{\mathrm{Bern}}
\DeclareMathOperator{\sign}{sign}

\newtheorem{theorem}{Theorem}
\newtheorem{proposition}{Proposition}
\newtheorem{lemma}{Lemma}
\newtheorem{corollary}{Corollary}
\theoremstyle{remark}
\newtheorem{remark}{Remark}

\theoremstyle{plain}
\newtheorem{sproposition}{Proposition}[section]
\newtheorem{stheorem}[sproposition]{Theorem}

\theoremstyle{remark}
\newtheorem{sremark}[sproposition]{Remark}

\newcommand{\corresponding}{\textsuperscript{\rm *}}

\title{Abstention as an Action Can Kill Both the Reward Gradient and the KL Anchor:\\ Collapse Law and Repair for Error-Penalized Reinforcement Learning}
\author{
    Xujun Che\textsuperscript{\rm 1\dag}\corresponding,
    Yuchen Yuan\textsuperscript{\rm 2\dag},
    Weida Zhao\textsuperscript{\rm 3},
    Chenyang Yu\textsuperscript{\rm 4}
}
\affiliations{
    \textsuperscript{\rm 1}Department of Cybersecurity, University of North Carolina at Charlotte, Charlotte, NC, USA\\
    \textsuperscript{\rm 2}Department of Information Sciences and Technology, George Mason University, Fairfax, VA, USA\\
    \textsuperscript{\rm 3}Department of Computer Science and Software Engineering, Auburn University, Auburn, AL, USA\\
    \textsuperscript{\rm 4}Department of Computer Science and Engineering, University of North Texas, Denton, TX, USA\\
    xche@charlotte.edu, yyuan21@gmu.edu, wzz0050@auburn.edu, chenyangyu@my.unt.edu
}

\begin{document}

\maketitle
\begingroup\renewcommand\thefootnote{\dag}\footnotetext{These authors contributed equally.}\endgroup

\begin{abstract}
Error-penalized scoring rules ($+1$ for a correct answer, $-\lambda$ for a wrong one, $0$ for abstaining) are increasingly prescribed against hallucination: a rational agent facing such a rule answers exactly when its correctness probability exceeds Chow's threshold $t^\ast=\lambda/(1+\lambda)$. We prove that a KL-anchored gradient learner can do the opposite. When abstention is a discrete action, the reward gradient and the anchor's restoring force are throttled by the same gate-saturation factor and die together: under explicit conditions (among them, blanket answering loses score in expectation and prompts share a bounded readout) the model drifts toward refusing everything, its mean training reward rising to zero like $1/t$ in training time $t$, so the curve reads as improvement while coverage collapses. The advantage estimator compounds the failure: in its sparse-answer regime, group normalization silently replaces every designed penalty with an effective penalty of one, moving the learned threshold from $t^\ast$ to $1/2$. The repair is structural: train a mandatory confidence report with a strictly proper score plus a correctness reward, and abstain only at deployment by thresholding the report. The always-emitted report has no gate to saturate, so no shared factor can kill its reward gradient and its anchor together, and its calibrated optimum is attracting. Simulations confirm every prediction, and experiments on language models at two scales confirm the mechanism live: the rule silences questions the models demonstrably still solve within ten optimizer steps, an ablation isolates the cause, and report-level training raises coverage, accuracy, and calibration together.
\end{abstract}

\section{Introduction}\label{sec:intro}

Language models hallucinate in part because we pay them to. \citet{kalai2025why} argue that binary-graded training and evaluation reward confident guessing over honest abstention, and prescribe the classical remedy: penalize errors, give partial or zero credit to abstention, and state the stakes. The prescription is spreading through evaluation practice, and a growing line of work adopts it directly as a reinforcement-learning objective, rewarding answers $+1/{-\lambda}/0$ \citep{truthrl2025,song2025hallucination,xu2025reliability}.

For a \emph{rational} agent the prescription is provably correct: facing the rule $(+1,-\lambda,0)$, the optimal policy answers exactly when its correctness probability exceeds Chow's threshold $t^\ast=\lambda/(1+\lambda)$ \citep{chow1970}. This paper concerns what the same rule does to a \emph{gradient learner}: a policy dragged toward the payoff table by policy-gradient fine-tuning with a KL anchor to its base model, the standard RLHF configuration \citep{christiano2017,stiennon2020,ouyang2022}. We show the two can produce opposite outcomes for policies whose prompts share a bounded readout, and that the difference has a clean mechanical cause.

\paragraph{The mechanism in one identity.}
\looseness=-1 Under a penalty rule, abstention is a discrete \emph{action}: the model answers with probability $\sgm(v(x))$, where $v$ is a learned gate logit and $\sgm$ the logistic function. Every gradient of the expected score with respect to the gate parameters carries the saturation factor $\sgm'(v)$. So far this is ordinary softmax-policy-gradient saturation \citep{mei2020,razin2024}. The new observation concerns the anchor. For Bernoulli gates,
\begin{equation}\label{eq:klgrad}
\frac{d}{dv}\,\KL\big(\Bern(\sgm(v))\,\big\|\,\Bern(\sgm(v_0))\big)\;=\;\sgm'(v)\,(v-v_0),
\end{equation}
so the KL anchor's restoring force carries the \emph{same} factor. If the base model answers nearly everything and blanket answering loses score in expectation (precisely the situation the penalty rule is designed to punish), then $v$ falls, $\sgm'(v)$ dies, and the anchor's pull back toward the base dies with it. The model does not converge to calibrated threshold abstention; it drifts toward answering nothing, ever more slowly, along a $\beta$-independent power law, onto a plateau that is metastable rather than absorbing: escape exists, but arrives exponentially late in the gain it must accumulate. Practitioners' intuition that ``the KL term keeps the policy near its base'' fails structurally: KL regularization, ordinarily a stabilizer \citep{geist2019,vieillard2020}, provides no restoring force on exactly the coordinate that is collapsing (the reverse KL would, but it is not what RLHF uses).

\paragraph{Contributions.}
\begin{enumerate}\itemsep1pt
\item \textbf{Mechanism.} A single factor, the gate's saturation $\sgm'(v)$, throttles the reward gradient and the KL anchor's restoring force alike: the two die together at the all-abstain vertex. Collapse needs three ingredients (blanket answering loses score, prompts share a bounded readout, abstention is action-level); removing any one eliminates the collapse in the analyzed class. The mechanism lifts from the two-token gate to sequence policies with branch-separable content.
\item \textbf{Law.} Under these conditions the collapse follows a finite-time law: the mean training reward rises to zero like $1/t$ up to log factors, at a time stable in the anchor strength.
\item \textbf{Estimator.} The advantage estimator selects the law: group normalization steepens the decay to $1/t^2$ above a knee at answer rate ${\approx}1/G$ ($G$ the group size) and, below it, silently replaces every designed penalty $\lambda$ by $\lambda_{\mathrm{eff}}=1$; dynamic resampling conserves the collapse per rollout, yielding an estimator-indexed menu of falsifiable predictions.
\item \textbf{Repair.} Move abstention from the action space to a mandatory confidence report, trained with a strictly proper score plus a correctness reward and thresholded only at deployment: the report channel has no shared vanishing factor, and its calibrated optimum is interior and attracting.
\item \textbf{Evidence.} Enumeration and simulation confirm every prediction; live tests on a 1.5B model reproduce the collapse on questions the model provably still solves, isolate its cause by ablation, and measure both gradient norms at zero there, while a single-seed 7B run adds the predicted late escape.
\end{enumerate}

\section{Setting}\label{sec:setting}

Prompts $x$ carry a correctness probability $q(x)\in[0,1]$ for the frozen base model's best candidate answer. A decision layer observes an internal signal $s$ with marginal density $p(s)$ and posterior $\bar q(s)=\E[q\mid s]$. Under the penalty rule $(+1,-\lambda,0)$ the expected score of answering at $s$ is $m(\bar q(s))$, where $m(u):=(1+\lambda)u-\lambda$, and $g(s):=p(s)\,m(\bar q(s))$ is the \emph{answering-gain density}. The optimal deterministic policy answers iff $g(s)>0$, i.e.\ iff $\bar q(s)>t^\ast=\lambda/(1+\lambda)$ (Chow's rule), attaining $U^\ast=\int g_+$ with rational coverage $\mathrm{cov}^\ast=\int_{g>0}p$.

\paragraph{Two mechanisms, matched heads.}
Both mechanisms read the same bounded feature $\phi(s)=\sgm(ws+a)\in(0,1)$ through a four-parameter head; boundedness reflects how common confidence readouts (sigmoid or softmax heads, verbalized scales) are built.

\emph{Action-level (penalty rule).} Gate logit $v_\theta(s)=c_4\phi(s)+c_0$ with $\theta=(w,a,c_4,c_0)$; during training the model \emph{samples} answer/abstain with probability $\sgm(v_\theta(s))$, and the objective is
\begin{align}
J_\beta(\theta)\;=\;&\int \sgm(v_\theta)\,g\,ds\notag\\
&-\beta\int p\;\KL\big(\Bern(\sgm(v_\theta))\,\|\,\Bern(\sgm(v_{0}))\big)\,ds,
\label{eq:objective}
\end{align}
with $v_0:=v_{\theta_0}$ the base gate and $\beta>0$ the anchor strength. The KL in \eqref{eq:objective} is the gate term of the sequence-level KL (the chain rule adds branch-conditional terms that vanish when branch content matches the base); Theorem~\ref{thm:lift} treats the full sequence KL with trainable content.

\emph{Report-level (proper rule).} Confidence $c_{\theta'}(s)=\kappa_0+(\kappa_1-\kappa_0)\phi(s)$ with $\theta'=(w,a,\kappa_0,\kappa_1)$ and $\kappa_0,\kappa_1\in[0,1]$, so $c\in[0,1]$ (interior optima in our calibrations; a projected flow preserves the box at the same rates toward the constrained optimum); the model answers \emph{every} question, reports $c$, and is scored by the Brier rule $1-(Y-c)^2$, $Y\sim\Bern(\bar q)$. Its per-question output is deterministic given $s$, so no action-policy KL exists; we anchor it with the proximal term $\tfrac\beta2\|\theta'-\theta'_0\|^2$ (function-space anchors on the report leave every conclusion unchanged; Proposition~\ref{prop:report}).

\emph{Deployment.} Both greedy: the penalty policy answers iff $v_\theta(s)>0$ and the report policy iff $c_{\theta'}(s)>t^\ast$; both target the same decision region.

\emph{Initialization.} Matched, overconfident, weakly discriminative: the base answers ${\approx}88\%$ of questions while the true posterior spans $[0.2,0.9]$ \citep{guo2017,tian2023}.

\paragraph{Standing assumption and scope.}
\textbf{(B1)}: $D_0:=-\int g\,ds>0$; equivalently $\E[q]<t^\ast$: blanket answering loses score in expectation. It is necessary for collapse. Two further conditions are necessary, and we state them as sharply as the first.

\begin{proposition}[Tabular policies do not collapse]\label{prop:tabular}
Under vanilla policy gradient with one free logit per prompt, the flow $\dot v(s)=\sgm'(v(s))[\,m(\bar q(s))-\beta(v(s)-v_0(s))\,]$ has stationary point $v^\ast(s)=v_0(s)+m(\bar q(s))/\beta$. On every prompt with $m(\bar q(s))\neq0$ the equilibrium decision converges to Chow's rule as $\beta\downarrow0$; at finite $\beta$ it is correct outside a band of width $O(\beta)$ in $m$ around the threshold (uniformly, whenever $|m|\ge\gamma>\beta\|v_0\|_\infty$).
\end{proposition}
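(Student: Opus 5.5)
The plan has three steps: identify the stationary point by direct computation, read off the $\beta\downarrow0$ limit, and then do a sign analysis at finite $\beta$.

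\emph{Stationary point.} With one free logit per prompt, the objective \eqref{eq:objective} decouples across $s$. The per-prompt integrand is $p(s)\big[\sgm(v)m(\bar q(s))-\beta\,\KL(\Bern(\sgm(v))\|\Bern(\sgm(v_0)))\big]$. By \eqref{eq:klgrad} its derivative in $v$ is $p(s)\sgm'(v)[m-\beta(v-v_0)]$, and the factor $p(s)$ can be absorbed into a per-coordinate step size. This gives exactly the stated flow. Because $\sgm'(v)>0$ for every finite $v$, the zeros of the vector field are exactly the zeros of the affine bracket, so $v^\ast=v_0+m/\beta$. This point is unique. It is also globally attracting along the trajectory: the bracket is strictly decreasing in $v$, so $\dot v$ has the sign of $v^\ast-v$. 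The trajectory therefore moves monotonically toward $v^\ast$ and stays in the bounded interval between $v(0)$ and $v^\ast$. On that interval $\sgm'$ is bounded below by a positive constant, and this gives exponential convergence. In the tabular case, then, saturation cannot freeze the flow. The contrast with the shared-readout case is the point of the proposition.

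\emph{Decision limit.} The greedy decision at equilibrium answers iff $v^\ast(s)>0$, that is, iff $m(\bar q(s))>-\beta v_0(s)$. For fixed $s$ with $m\neq0$ and finite $v_0(s)$, we have $\beta v_0(s)\to0$. So for $\beta<|m|/|v_0(s)|$ the sign of $v^\ast$ equals the sign of $m$. By Setting, $m(\bar q)>0$ iff $\bar q>t^\ast$, which is Chow's rule. This gives pointwise convergence as $\beta\downarrow0$.

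\emph{Band estimate.} The equilibrium decision can disagree with Chow's rule only when $\sign(m)\neq\sign(m+\beta v_0)$, which forces $|m|\le\beta|v_0(s)|\le\beta\|v_0\|_\infty$. The misclassified prompts therefore lie in the band $\{|m|\le\beta\|v_0\|_\infty\}$, whose width in $m$ is $O(\beta)$. Hence if $|m|\ge\gamma>\beta\|v_0\|_\infty$ on a set of prompts, the decision is correct on all of them simultaneously, which is the uniform statement.

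\emph{Main obstacle.} The only genuinely delicate point is the first step: justifying that the stationary point is actually reached under the flow rather than merely existing. In the shared-readout case, $\sgm'(v)$ could vanish asymptotically before the bracket does. The monotone-trapping argument above handles this, because it keeps $v$ bounded and hence keeps $\sgm'$ bounded away from zero. It requires $\|v_0\|_\infty<\infty$, which is implicit in the statement's use of that norm. The rest is routine sign bookkeeping.
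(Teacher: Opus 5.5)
Your proposal is correct and follows the paper's proof: stationarity because $\sgm'>0$, attraction because the bracket is decreasing in $v$, and the sign condition $m(m+\beta v_0)>0$, which gives both the $O(\beta)$ band and the $\beta\downarrow0$ limit. Your monotone-trapping step, writing $\dot v=\beta\,\sgm'(v)(v^\ast-v)$ with $\sgm'$ bounded below on the interval between $v(0)$ and $v^\ast$, spells out the convergence that the paper asserts only as ``the bracket is decreasing, so $v^\ast$ is attracting''; it is a small strengthening, not a different route, and per prompt it needs only $v_0(s)$ finite (the bound $\|v_0\|_\infty<\infty$ matters only for the uniform band).
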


The population condition (B1) can drag profitable prompts down only because the bias $c_0$ is common to all of them. The paper's thesis is the conjunction: \emph{a discrete abstain action supplies a vertex that attracts on the $\bar q<t^\ast$ prompts; a shared bounded readout transmits the attraction to the $\bar q>t^\ast$ prompts; and the KL anchor, which is supposed to arrest exactly such drifts, is throttled by the same factor that drives them.}

\section{Anatomy of the Collapse}\label{sec:collapse}

We take the collapse apart in three passes: the static mechanism at the abstain vertex, the finite-time law it forces under gradient flow, and the estimator dependence that sets the law's exponent and its effective penalty.

\subsection{One Factor Throttles Both Forces}\label{sec:mech}

\begin{lemma}[Shared throttling factor]\label{lem:klgrad}
For the objective \eqref{eq:objective},
\[
\nabla_\theta J_\beta=\!\int\! \sgm'(v_\theta)\Big[g-\beta p\,(v_\theta-v_{0})\Big]\nabla_\theta v_\theta\,ds .
\]
Reward and anchor share the factor $\sgm'(v_\theta)$: where the gate saturates, both vanish (differentiate \eqref{eq:objective} under the integral; the KL term is \eqref{eq:klgrad} pointwise).
\end{lemma}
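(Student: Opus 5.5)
The plan is to differentiate the two terms of \eqref{eq:objective} separately under the integral sign and then collect them. Since $\theta$ enters only through $v_\theta(s)=c_4\phi(s)+c_0$ with $\phi(s)=\sgm(ws+a)$, every $\theta$-derivative factors by the chain rule as $\partial_v(\cdot)\,\nabla_\theta v_\theta$. So the whole computation reduces to two scalar derivatives in $v$, evaluated pointwise in $s$.

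For the reward term, $\partial_v[\sgm(v)g(s)]=\sgm'(v)g(s)$, and $g$ does not depend on $\theta$. For the anchor term I will verify \eqref{eq:klgrad} directly. Write $u=\sgm(v)$ and $u_0=\sgm(v_0)$, so that
\[
\KL=u\log\tfrac{u}{u_0}+(1-u)\log\tfrac{1-u}{1-u_0}.
\]
Then
\[
\partial_u\KL=\log\tfrac{u}{1-u}-\log\tfrac{u_0}{1-u_0}=v-v_0,
\]
because the logit of $\sgm(v)$ is $v$; the $+1$ and $-1$ contributions cancel. Multiplying by $du/dv=\sgm'(v)$ gives $\sgm'(v)(v-v_0)$. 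One point needs care: $v_0=v_{\theta_0}$ is frozen, so it contributes no $\theta$-derivative. Weighting by $p(s)$ and $-\beta$ and adding the reward term yields the integrand $\sgm'(v_\theta)[g-\beta p(v_\theta-v_0)]\nabla_\theta v_\theta$.

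The only real obstacle is justifying the interchange of $\nabla_\theta$ and $\int ds$, which I would do by dominated convergence on a neighborhood of any fixed $\theta$. The explicit gradient is
\[
\nabla_\theta v_\theta=\big(c_4\phi(1-\phi)s,\;c_4\phi(1-\phi),\;\phi,\;1\big),
\]
whose components are bounded by $C(1+|s|)$ locally uniformly in $\theta$. The gate logit satisfies $|v_\theta|\le|c_4|+|c_0|$, which is locally bounded, and likewise $|v_0|$ is bounded; also $\sgm'\le 1/4$. The integrand is therefore dominated by $C(1+|s|)\big(|g(s)|+p(s)\big)$. This is integrable provided $p$ has a finite first moment, using $|g|\le(1+\lambda)p$ since $|m|\le 1+\lambda$. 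I will state this mild moment condition as the regularity assumption implicit in the setting; it holds in all the calibrated examples. The same domination bounds the $\theta$-derivative of each term, so differentiation under the integral is legitimate. The closing remark about saturation then follows immediately: as $|v_\theta|\to\infty$ we have $\sgm'(v_\theta)\to0$, and both bracketed contributions are multiplied by it.
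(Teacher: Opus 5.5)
Your proposal is correct and follows the paper's proof. It computes $\partial_u\KL=\mathrm{logit}\,u-\mathrm{logit}\,u_0=v-v_0$, multiplies by $du/dv=\sigma'(v)$, and differentiates under the integral through $\nabla_\theta v_\theta$. Your dominated-convergence justification, via $|v_\theta|\le|c_4|+|c_0|$, $|g|\le(1+\lambda)p$ and $\E_p|s|<\infty$, is a sound addition that the paper leaves implicit.

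One small tightening for the saturation remark: the anchor bracket $v_\theta-v_0$ itself diverges as $|v_\theta|\to\infty$. So what you need is $\sigma'(v)\,|v-v_0|\to0$, which holds because exponential decay beats linear growth; it is not enough that the bracket is multiplied by a vanishing factor.
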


\begin{figure}[t]\centering
\includegraphics[width=0.97\linewidth]{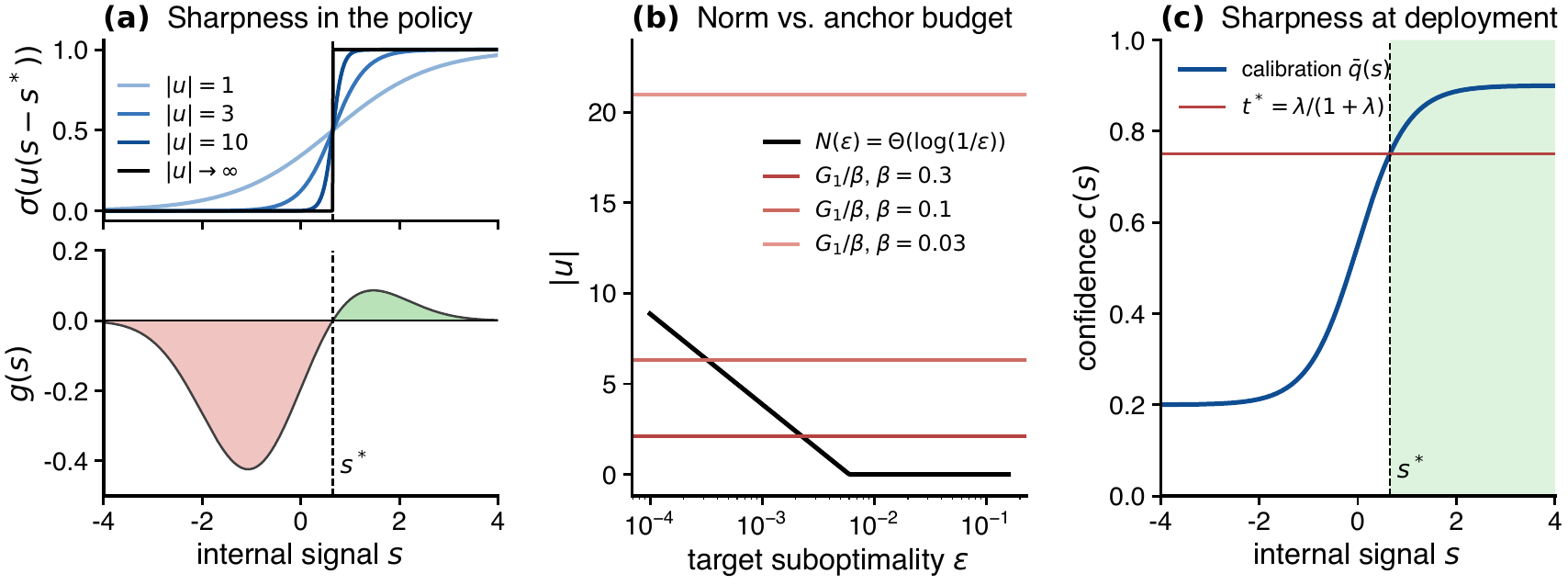}
\caption{The geometry of the failure. (a)~A hard threshold needs diverging logit norm; the loss lobe of $g$ outweighs its gain lobe (B1). (b)~Sharpness is free under the true KL, not under a proximal anchor. (c)~The report level needs an $O(1)$-norm fit, its sharpness supplied by $t^\ast$.}
\label{fig:mechanism}
\end{figure}

\begin{proposition}[The anchor cannot exclude the vertex]\label{prop:static}
Bernoulli KL to a fixed base is bounded, and at the all-abstain vertex ($\sgm(v)\equiv0$) it costs exactly $\overline{\KL}_\perp:=\E_p[\log\frac{1}{1-\sgm(v_0)}]<\infty$. Hence, under \textup{(B1)}, $J_\beta(\text{all-abstain})-J_\beta(\text{base})=L(0)-\beta\,\overline{\KL}_\perp>0$ whenever $\beta<\beta_{\mathrm{crit}}:=L(0)/\overline{\KL}_\perp$, where $L(0)=\int\sgm(v_0)(-g)$ is the base policy's expected loss. Total abstention strictly beats the base for every anchor strength below an explicit threshold, with no dynamics required. Because $\overline{\KL}_\perp$ grows only logarithmically in the base answer rate, $\beta_{\mathrm{crit}}$ sits above typical RLHF coefficients ($\beta\in[10^{-3},5\times10^{-2}]$, with reward in the rule's own units) across realistic calibrations.
\end{proposition}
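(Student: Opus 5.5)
The plan is to evaluate the objective \eqref{eq:objective} at the two policies directly; no dynamics are needed. First I compute the Bernoulli KL at the vertex. For $\sgm(v)=0$,
\[
\KL\big(\Bern(0)\,\|\,\Bern(\sgm(v_0))\big)=\log\frac{1}{1-\sgm(v_0)} .
\]
Integrating against $p$ gives $\overline{\KL}_\perp$. Finiteness follows because $v_0=c_4\phi+c_0$ with $\phi\in(0,1)$, so $v_0$ is bounded on the support of $p$. Hence $\sgm(v_0)\le\sgm(\sup v_0)<1$, and the integrand is uniformly bounded. For boundedness of the KL in general, note that for any $\sgm(v)\in[0,1]$ the Bernoulli KL is at most $\max\{\log\frac1{\sgm(v_0)},\log\frac1{1-\sgm(v_0)}\}$, which is uniformly finite by the same bound on $v_0$.

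Next I compare the two objective values. At the base policy the KL term vanishes, so $J_\beta(\text{base})=\int\sgm(v_0)g$. At the vertex the reward term $\int\sgm(v)g$ is zero, so $J_\beta(\text{all-abstain})=-\beta\overline{\KL}_\perp$. Subtracting gives
\[
J_\beta(\text{all-abstain})-J_\beta(\text{base})=\int\sgm(v_0)(-g)-\beta\overline{\KL}_\perp=L(0)-\beta\overline{\KL}_\perp .
\]
This is positive exactly when $\beta<\beta_{\mathrm{crit}}$, provided $L(0)>0$.

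The main step is showing $L(0)>0$ from (B1). (B1) gives $\int g<0$, but $L(0)$ weights $g$ by $\sgm(v_0)$, which is nonconstant. My first approach is to use the overconfident initialization: $\sgm(v_0)$ is close to a constant $\approx0.88$, so $L(0)\approx0.88\,D_0>0$ up to a perturbation bounded by $\sup|\sgm(v_0)-\bar\sgm|\cdot\int|g|$.

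For a cleaner argument I would invoke weak discrimination and order the weights against the sign of $g$. If $\sgm(v_0)$ is constant, or is a non-increasing function of $\bar q$, then Chebyshev's association inequality gives $\int\sgm(v_0)(-g)\ge\inf\sgm(v_0)\cdot D_0>0$. In general, I would state the conclusion under the condition $L(0)>0$, which holds in all the paper's calibrations. I would flag this as the hidden hypothesis: (B1) alone does not imply it when the base answers preferentially on profitable prompts.

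Finally, for the closing quantitative remark I would bound $\overline{\KL}_\perp\le\log\frac{1}{1-r_{\max}}$, where $r_{\max}$ is the largest base answer rate. This grows only logarithmically as the answer rate approaches $1$. I would then plug in the stated calibration: answer rate $\approx0.88$ gives $\overline{\KL}_\perp\approx2.1$, and the posterior range $[0.2,0.9]$ gives $L(0)=O(10^{-1})$. Together these show $\beta_{\mathrm{crit}}\gg5\times10^{-2}$. I would present this as a numerical check rather than a theorem.
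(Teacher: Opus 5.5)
Your derivation is the paper's: evaluate \eqref{eq:objective} at the base, where the KL term is zero, and at the vertex, where the reward term is zero. You bound the Bernoulli KL by $\max\{\log\frac{1}{\sgm(v_0)},\log\frac{1}{1-\sgm(v_0)}\}$ and read off $L(0)-\beta\,\overline{\KL}_\perp$. Your explicit finiteness argument via $|v_0|\le|c_4|+|c_0|$ fills in what the paper leaves implicit.

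Your flag on $L(0)>0$ is legitimate. \textup{(B1)} does not force it, which is why Theorem~\ref{thm:onet} assumes it separately. Note, though, that the equivalence itself needs only $\overline{\KL}_\perp>0$. When $L(0)\le0$ one has $\beta_{\mathrm{crit}}\le0$, so the clause ``whenever $\beta<\beta_{\mathrm{crit}}$'' is vacuous and nothing in the statement becomes false.

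Two supporting remarks need fixing.

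First, your Chebyshev route has the monotonicity backwards for the case you care about. Since $-m(\bar q)$ is decreasing in $\bar q$, comonotonicity needs $\sgm(v_0)$ to be non-increasing in $\bar q$, which describes an \emph{anti}-discriminative base. A weakly discriminative base answers more on better prompts, and Chebyshev then gives the upper bound $L(0)\le\E_p[\sgm(v_0)]\,D_0$. The usable sufficient condition is the one behind your first approach: $\inf\sgm(v_0)\,G_->\sup\sgm(v_0)\,G_+$, with $G_\pm=\int g_\pm$.

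Second, your numbers do not deliver $\beta_{\mathrm{crit}}\gg5\times10^{-2}$. $L(0)=O(10^{-1})$ divided by $\overline{\KL}_\perp\approx2.1$ is of order $0.05$. Also, inserting the \emph{mean} answer rate into the convex map $x\mapsto\log\frac{1}{1-x}$ gives, by Jensen, a lower bound on $\overline{\KL}_\perp$. That is the wrong direction for bounding $\beta_{\mathrm{crit}}$ from below; use your $r_{\max}$ bound instead.

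The paper does this step numerically on the leaderboard-calibrated model of Section~\ref{app:empirical}. There $\beta_{\mathrm{crit}}$ falls from $0.139$ to $0.041$ as the base answer rate rises from $80\%$ to $99.9\%$. Against the top of the typical range, $\beta=0.05$, the margin is at most a factor of about three. It vanishes at $99.9\%$, where the paper concedes the static argument alone no longer suffices. The defensible claim is ``above,'' not ``far above.''
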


\begin{proposition}[The dichotomy at the vertex is local]\label{prop:vertex}
Fix a prompt with posterior $\bar q$; evaluate the reward gradient at the degenerate vertex of the output's range.
\textup{(a)} \emph{Action level:} $\partial_v\E[r]=\sgm'(v)\,m(\bar q)\to0$ as $v\to-\infty$, with the sign of $m$. The abstain vertex is \emph{attracting on exactly the prompts with $\bar q<t^\ast$} and repelling on the rest.
\textup{(b)} \emph{Report level:} for any strictly proper score with Savage representation $\E[S(c)]=\Gamma(c)+\Gamma'(c)(\bar q-c)$, $\partial_c\E[S]=\Gamma''(c)(\bar q-c)$. The expected score strictly increases from the vertex toward $\bar q$, so the confidence vertex is \emph{never} attracting, and it is strictly repelling with gradient $\ge \zeta\bar q$ whenever $\Gamma''\ge \zeta>0$ near the boundary (Brier: $\Gamma''\equiv2$).
\end{proposition}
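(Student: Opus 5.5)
The plan is to handle the two parts separately. Both reduce to one-variable calculus at a single prompt with fixed posterior $\bar q$, so no dynamics or population argument is needed; this is why the proposition calls the dichotomy local.

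For part (a), I will compute the per-prompt expected reward under the penalty rule. The gate answers with probability $\sgm(v)$, and an answer scores $+1$ with probability $\bar q$ and $-\lambda$ otherwise. This gives $\E[r]=\sgm(v)\,m(\bar q)$, and differentiating yields $\partial_v\E[r]=\sgm'(v)\,m(\bar q)$. This is the reward half of Lemma~\ref{lem:klgrad} restricted to one prompt. Since $\sgm'(v)=\sgm(v)(1-\sgm(v))\le e^{v}\to0$ as $v\to-\infty$, the gradient vanishes at the vertex. Because $\sgm'>0$, the sign of the gradient is the sign of $m(\bar q)$.

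To get attraction, I will read the reward flow $\dot v=\sgm'(v)m(\bar q)$ as an ODE. If $m<0$, equivalently $\bar q<t^\ast$, then $v$ is strictly decreasing and unbounded below, so every trajectory heads to $-\infty$ and the vertex attracts. Unboundedness holds because a finite limit $v_\infty$ would force $\dot v\to\sgm'(v_\infty)m<0$, which is a contradiction. If $m>0$, the flow pushes $v$ upward from every point, so the vertex repels. The one point needing care is the wording ``exactly'': the boundary case $\bar q=t^\ast$ gives $m=0$, which is neutral, and I will note it explicitly.

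For part (b), I will start from the Savage representation $\E[S(c)]=\Gamma(c)+\Gamma'(c)(\bar q-c)$ with $\Gamma$ strictly convex. Differentiating gives
\[
\partial_c\E[S]=\Gamma'(c)+\Gamma''(c)(\bar q-c)-\Gamma'(c)=\Gamma''(c)(\bar q-c).
\]
On $[0,\bar q)$ this is strictly positive, assuming $\Gamma''>0$ there, or at least that $\Gamma'$ is strictly increasing if only convexity is available. So the expected score strictly increases from the vertex $c=0$ toward $\bar q$, and the gradient flow started near $0$ moves away from it whenever $\bar q>0$. By symmetry the same holds at $c=1$ when $\bar q<1$. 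Hence the confidence vertex is never attracting. If $\Gamma''\ge\zeta$ near $0$, evaluating at $c=0$ gives $\partial_c\E[S]\ge\zeta\bar q$, which is the quantitative repulsion bound. For Brier, $\E[1-(Y-c)^2]=1-\bar q+2\bar qc-c^2$, so the derivative is $2(\bar q-c)$ and $\Gamma''\equiv2$.

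The main obstacle is regularity in part (b). Strictly proper scores need only strictly convex $\Gamma$, which may fail to be twice differentiable or may have $\Gamma''=0$ at isolated points. My first approach is to state the derivative formula where $\Gamma''$ exists. Otherwise I will argue monotonicity directly: for $c_1<c_2\le\bar q$, strict propriety of $S$ at belief $\bar q$ together with convexity gives $\E[S(c_2)]>\E[S(c_1)]$, using the standard fact that the expected score is unimodal at $\bar q$. This yields ``never attracting'' without assuming $\Gamma''$ exists, and the $\zeta\bar q$ bound is used only under the stated hypothesis.
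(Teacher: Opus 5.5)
Your proposal is correct and follows the paper's proof essentially step for step. In part (a) you do the one-prompt computation $\partial_v\E[r]=\sgm'(v)\,m(\bar q)$ and read off its sign; in part (b) you differentiate the Savage form to $\Gamma''(c)(\bar q-c)$, integrate using that $\Gamma'$ is strictly increasing, and evaluate at $c=0$ to get the $\zeta\bar q$ bound. Your extras are refinements the paper leaves implicit: the ODE argument that $v\to-\infty$ when $m<0$, the neutral case $\bar q=t^\ast$, the requirement $\bar q>0$ in (b), and the derivative-free fallback. That fallback is cleanest via $\E[S(c_2)]-\E[S(c_1)]=[\Gamma(c_2)-\Gamma(c_1)-\Gamma'(c_1)(c_2-c_1)]+(\Gamma'(c_2)-\Gamma'(c_1))(\bar q-c_2)>0$ for $c_1<c_2\le\bar q$, rather than an appeal to unimodality, because strict propriety alone does not give monotonicity.
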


\begin{corollary}[The regularizer class that dies at the boundary]\label{cor:policyreg}
Let $R(\pi_\theta)$ be any functional of the policy with $\pi\,\partial_\pi R\to0$ as $\pi\to0$. Then $\partial_v R=\pi(1-\pi)\,\partial_\pi R\to0$: its gradient vanishes on the absorbing boundary with the reward's. The class contains the entropy bonus and RLHF's anchor $\KL(\pi_\theta\|\pi_0)$ (both logarithmic), but \emph{not} the reverse KL $\KL(\pi_0\|\pi_\theta)$, whose vertex cost diverges and which floors the answer rate at $\Theta(\beta)$: the direction of the KL is load-bearing, and RLHF's points the collapsing way. Balancing throttled drift against throttled restoring force leaves only an exponentially small floor, $P_{\mathrm{floor}}\asymp e^{-D_{\mathrm{eff}}/\beta}$ (e.g.\ ${\approx}10^{-13}$ at $\beta=0.01$), reached only after time $e^{\Theta(1/\beta)}$.
\end{corollary}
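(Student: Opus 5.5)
The plan has three parts: a chain-rule identity for the main claim, a direct computation for the two KL directions, and a one-dimensional balance for the floor estimate.

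\textbf{The gradient identity.} With $\pi=\sgm(v)$ we have $d\pi/dv=\sgm'(v)=\pi(1-\pi)$. So for any functional $R$ depending on $v$ only through $\pi$, the chain rule gives $\partial_v R=\pi(1-\pi)\,\partial_\pi R$. Since $0\le 1-\pi\le1$, we get $|\partial_v R|\le|\pi\,\partial_\pi R|$, and the hypothesis $\pi\,\partial_\pi R\to0$ yields $\partial_v R\to0$. Integrating against $\nabla_\theta v_\theta$ exactly as in Lemma~\ref{lem:klgrad} transfers this to the parameter gradient. Because $\phi$ is bounded, $\nabla_\theta v_\theta$ is bounded whenever $w$ and $a$ are, so the vanishing holds uniformly on the collapsing region.

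\textbf{Membership in the class.} I will verify the three cases by direct differentiation.
\begin{itemize}
\item Entropy: $\partial_\pi H=\log\frac{1-\pi}{\pi}$, and $\pi\log\pi\to0$.
\item RLHF's anchor $\KL(\Bern(\pi)\|\Bern(\pi_0))$: the derivative is $\log\frac{\pi}{\pi_0}-\log\frac{1-\pi}{1-\pi_0}$, again logarithmic, so $\pi\,\partial_\pi\to0$. Consistently, \eqref{eq:klgrad} shows $\partial_v\KL=\sgm'(v)(v-v_0)\to0$ because $v\,e^{v}\to0$.
\item Reverse KL $\pi_0\log\frac{\pi_0}{\pi}+(1-\pi_0)\log\frac{1-\pi_0}{1-\pi}$: here $\partial_\pi=-\pi_0/\pi+(1-\pi_0)/(1-\pi)$, so $\pi\,\partial_\pi\to-\pi_0\neq0$. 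Then $\partial_v\KL_{\mathrm{rev}}=\pi-\pi_0\to-\pi_0$, an $O(1)$ restoring force, and the vertex cost itself diverges like $\pi_0\log(1/\pi)$.
\end{itemize}

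\textbf{Reverse-KL floor.} Per prompt, the stationarity condition is $\sgm'(v)\,m+\beta(\pi_0-\pi)=0$. On a losing prompt ($m<0$) near the vertex this reads $\pi(1-\pi)|m|\approx\beta(\pi_0-\pi)$, which gives $\pi\approx\beta\pi_0/(|m|+\beta)=\Theta(\beta)$. This is the floor at rate $\Theta(\beta)$.

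\textbf{Forward-KL floor.} Pointwise, Lemma~\ref{lem:klgrad} sets $\sgm'(v)[g-\beta p(v-v_0)]=0$. Away from the trivial root $\sgm'=0$, the interior root satisfies $v-v_0=g/(\beta p)=m/\beta$. For the shared-bias collapse, I replace $m$ by an effective drift $-D_{\mathrm{eff}}$: the $\sgm'$-weighted average of $g/p$ along the bias direction $c_0$, which stays negative under (B1) once the readout is saturated. This gives $v\approx v_0-D_{\mathrm{eff}}/\beta$, hence $P_{\mathrm{floor}}=\sgm(v)\asymp e^{v_0}e^{-D_{\mathrm{eff}}/\beta}$. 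Plugging in the paper's calibration ($D_{\mathrm{eff}}\approx0.3$, $\beta=0.01$) gives about $10^{-13}$.

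\textbf{Time to reach the floor.} Near $v^\ast$, the linearized flow $\dot v=\sgm'(v)[-D_{\mathrm{eff}}-\beta(v-v_0)]$ has rate $\beta\sgm'(v^\ast)\asymp\beta e^{-D_{\mathrm{eff}}/\beta}$. So relaxation takes time $e^{\Theta(1/\beta)}$. Equivalently, integrating $dv/dt\approx -D e^{v}$ from $v_0$ gives $e^{-v}\sim Dt$, so reaching depth $D/\beta$ takes $t\sim e^{D/\beta}/D$.

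The main obstacle is justifying the reduction to a single effective coordinate with a well-defined $D_{\mathrm{eff}}$. This requires that the readout parameters $(w,a,c_4)$ stay bounded while $c_0$ drifts, so that $\nabla_\theta v_\theta$ is $O(1)$ and the per-prompt balances aggregate into one scalar. I would first argue this for the bias direction alone and state the floor as an order-of-magnitude $\asymp$, leaving the constants implicit.
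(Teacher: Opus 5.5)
Your proposal is correct and follows the paper's own argument. The paper uses the same chain-rule identity and the same direct differentiation, which puts entropy and the forward anchor inside the class and the reverse KL (with $\partial_v R=\pi-\pi_0\to-\pi_0$) outside it. It then uses the same frozen-readout bias balance to obtain the $\Theta(\beta)$ reverse-KL floor, the $e^{-D_{\mathrm{eff}}/\beta}$ forward-KL floor, and the $e^{\Theta(1/\beta)}$ clock.

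One aside is wrong but harmless. $\nabla_\theta v_\theta$ is not bounded merely because $w$ and $a$ are: its $a$- and $w$-components carry $c_4$, which grows like $\log t$ on the plateau, and the $w$-component also carries a factor $s$. You do not need that bound anyway, because the reward and the regularizer multiply the same $\nabla_\theta v_\theta$.
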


Together the statics make the vertex cheap, locally attracting, and unrestorable (Figure~\ref{fig:mechanism} draws the geometry); what they do not give is the \emph{rate}: what an experimenter would see, and on what clock.

\subsection{The Collapse Law under Gradient Flow}\label{sec:dynamics}

Write $J(\theta):=\int\sgm(v_\theta)\,g\,ds$ for the reward term of \eqref{eq:objective}, and $L(t):=-J(\theta_t)$. Since an answered prompt earns $m(\bar q)$ and an abstained one earns $0$, $L$ is \emph{minus the mean training reward}, the curve every RL run already logs. Define the observable \emph{anchor share} $\rho(t):=\beta\,\nabla J\!\cdot\!\nabla\mathcal K/\|\nabla J\|^2$, with $\mathcal K$ the population KL term of \eqref{eq:objective}, and the weighted gate mean $\bar\sgm(t):=\int\sgm^2(v)(-g)\big/\!\int\sgm(v)(-g)$. Let $r(s):=c_4\phi(s)$, $\Lambda(t):=\E_p[e^{r}]$ (the \emph{tilt}), $D_{\mathrm{eff}}(t):=\int e^{r}(-g)$, and let $t_{\mathrm{tilt}}$ be the zero of the tilted margin $\bar m_{\mathrm{tilt}}:=\E_{\mathrm{tilt}}[m(\bar q)]$ under the measure $\propto e^{r}p$.

\begin{theorem}[Collapse law on the invariant]\label{thm:onet}
Consider gradient flow $\dot\theta=\nabla J_\beta$ on \eqref{eq:objective} under \textup{(B1)}, from an initialization with $L(0)>0$.
\textup{(i)} \emph{(Envelope; one observable hypothesis.)} If $\rho(\tau)\le\tfrac12$ on $[0,t]$, then
\[
\frac{1}{L(t)}\;\ge\;\frac{1}{L(0)}+\frac12\int_0^t\big(1-\bar\sgm(\tau)\big)^2d\tau .
\]
\textup{(ii)} \emph{(Two-sided finite-time law.)} Assume additionally, on a window $[T_0,\,c_\star t_{\mathrm{tilt}}]$ with $c_\star<1$: the parameter-space certificate $\beta\|\theta_\tau-\theta_0\|_\infty\le\tfrac12 D_{\mathrm{eff}}(\tau)$; the measured sign $c_4(\tau)\le0$; a tilt bound $\Lambda(\tau)\le\bar\Lambda<1$; and the explicit small-$\beta$ condition
$\beta\,|c_4(0)|\,\bar\Lambda\le\tfrac14(1-\bar\Lambda)\,\underline D(c_\star)$,
where $\underline D(c_\star):=\min_{[0,c_\star t_{\mathrm{tilt}}]}D_{\mathrm{eff}}$. Then $\bar\sgm\to0$ in Ces\`aro mean, the readout gain grows as $|c_4|=O(\log t)$, and
\[
\frac{c_1}{t\log^2 t}\;\le\;L(t)\;\le\;\frac{c_2}{t},
\qquad T_0\le t\le c_\star t_{\mathrm{tilt}},
\]
with $c_1,c_2$ depending only on tracked quantities of the run. \textup{(iii)} \emph{(Endpoint.)} $L$ crosses zero at a time $Z_L$ within $O(1)$ absolute time of $t_{\mathrm{tilt}}$, provided an explicit error-dominance condition holds at the crossing (it does on our runs); $Z_L$ stabilizes as $\beta\downarrow0$ (measured: $3.2\%$ over two decades).
\end{theorem}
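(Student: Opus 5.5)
Part (i) is an energy argument on $L(t)=-J(\theta_t)$. Along the flow, $\dot L=-\nabla J\cdot\dot\theta=-\nabla J\cdot(\nabla J-\beta\nabla\mathcal K)=-\|\nabla J\|^2(1-\rho)$, so the hypothesis $\rho\le\tfrac12$ gives $\dot L\le-\tfrac12\|\nabla J\|^2$. The task is therefore a Łojasiewicz-type inequality $\|\nabla J\|\ge(1-\bar\sgm)L$. For this we pick a single direction in parameter space: the bias coordinate $c_0$, for which $\nabla_\theta v_\theta\cdot e_{c_0}=1$. Lemma~\ref{lem:klgrad} with $\beta=0$ then gives
\[
-\partial_{c_0}J=\int\sgm'(v)(-g)\,ds=\int\sgm(1-\sgm)(-g)\,ds .
\]
Where $g>0$ the contribution has the wrong sign for a naive bound, so we write $\int\sgm(1-\sgm)(-g)=L-\int\sgm^2(-g)$. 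We then use $\int\sgm(-g)\ge L$ (the gain lobe only lowers $L$) together with the definition of $\bar\sgm$ to get $-\partial_{c_0}J\ge(1-\bar\sgm)\int\sgm(-g)\ge(1-\bar\sgm)L$. The sign bookkeeping is that the positive-$g$ part must be absorbed; I expect to handle this by noting that the inequality $\int\sgm(1-\sgm)(-g)\ge(1-\bar\sgm)\int\sgm(-g)$ is exactly the definition of $\bar\sgm$, applied with the same signed weight. Hence $\dot L\le-\tfrac12(1-\bar\sgm)^2L^2$ while $L>0$, and integrating $d(1/L)/dt\ge\tfrac12(1-\bar\sgm)^2$ yields the envelope.

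For (ii), the upper bound $L\le c_2/t$ follows from (i) once $\bar\sgm\to0$ in Cesàro mean, since then $\int_0^t(1-\bar\sgm)^2\ge t/2$ for $t\ge T_0$. It remains to check $\rho\le\tfrac12$ on the window. For this we use the certificate: the anchor gradient's $c_0$-component is $\beta\int p\sgm'(v-v_0)$, and $|v-v_0|\le\|\theta-\theta_0\|_\infty(1+\phi)\lesssim\|\theta-\theta_0\|_\infty$. Comparing with the reward component, whose tilted mass is $D_{\mathrm{eff}}$ in the saturated regime where $\sgm'\approx e^{v}=e^{c_0}e^{r}$, the certificate $\beta\|\theta-\theta_0\|_\infty\le\tfrac12D_{\mathrm{eff}}$ bounds the ratio by $\tfrac12$. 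Cesàro decay of $\bar\sgm$ follows because $c_0\to-\infty$: the flow $\dot c_0\approx-e^{c_0}D_{\mathrm{eff}}$ (up to the anchor correction controlled by the small-$\beta$ condition) integrates to $c_0\approx-\log t$, so $\sgm\lesssim\Lambda$-weighted $e^{c_0}\to0$.

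For the lower bound, in the saturated regime $L\approx e^{c_0}\int e^{r}(-g)\,ds$ up to the gain lobe. The $c_4$ equation reads $\dot c_4\approx e^{c_0}\int\phi\,e^{r}g$. With $c_4\le0$ and the tilt bound $\Lambda\le\bar\Lambda<1$, this shows $|c_4|$ grows at most like $\int e^{c_0}\sim\log t$; the condition $\beta|c_4(0)|\bar\Lambda\le\tfrac14(1-\bar\Lambda)\underline D$ keeps the anchor from reversing either equation. Then $e^{r}\ge e^{-|c_4|}$ gives $D_{\mathrm{eff}}\gtrsim1/\mathrm{poly}\log t$, and combining with $e^{c_0}\gtrsim1/t$ yields $L\ge c_1/(t\log^2t)$. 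Tracking the exact power of the log is the step I expect to be the main obstacle: it requires a two-timescale comparison between $c_0$ and $c_4$ (and the feature parameters $w,a$, which must be shown to move by only $O(\log t)$ as well).

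For (iii), $L(t)=-e^{c_0}\Lambda\,\bar m_{\mathrm{tilt}}+O(e^{2c_0})$. The sign of $L$ is therefore the sign of $-\bar m_{\mathrm{tilt}}(t)$ plus a second-order error. $Z_L$ lies within $O(1)$ of $t_{\mathrm{tilt}}$ once the error-dominance condition ensures $|\tfrac{d}{dt}\bar m_{\mathrm{tilt}}|$ exceeds the error's variation near the crossing, via an implicit-function argument. $\beta$-stability follows because $\beta$ enters the $c_0,c_4$ dynamics only through terms already shown subdominant, so the measured $3.2\%$ is consistent with the $O(\beta)$ perturbation.
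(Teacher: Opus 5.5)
Part (i) is the paper's argument, with one correction. $-\partial_{c_0}J=(1-\bar\sgm)L$ is an exact identity, because $L=\int\sgm(-g)$ by definition; there is no ``gain lobe'' inequality. Exactness is what lets you square it without knowing the sign of $1-\bar\sgm$, which need not lie in $(0,1)$.

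\textbf{The gap in (ii).} You never prove the comparison between $L$ and $e^{c_0}$ that the window exists to provide. In the gauge $c_4\le0$ one has $e^{r}\le1$ and $e^{v}(1-e^{c_0})\le\sgm(v)\le e^{v}$. Hence $L\ge e^{c_0}(D_{\mathrm{eff}}-e^{c_0}G_-)\ge\tfrac12\underline D(c_\star)\,e^{c_0}$ as soon as $e^{c_0}\le\underline D(c_\star)/(2G_-)$, with $G_\pm=\int g_\pm$; your bias envelope grants this past $T_0$. Both halves of your (ii) need this bound.
\begin{itemize}
\item \emph{Ces\`aro claim.} $\sup_s\sgm\lesssim e^{c_0}\to0$ does not make $\bar\sgm$ small. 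Its weight $\sgm(-g)$ is signed, so $L$ can be small by cancellation between the lobes; indeed $\bar\sgm$ can diverge near $Z_L$. What you need is $|\bar\sgm|\le e^{2c_0}\int|g|/L\le2(G_++G_-)e^{c_0}/\underline D(c_\star)$.
\item \emph{Lower bound.} The step ``$e^{r}\ge e^{-|c_4|}$ gives $D_{\mathrm{eff}}\gtrsim1/\mathrm{poly}\log t$'' fails twice. Under $|c_4|=O(\log t)$, $e^{-|c_4|}$ is only $t^{-O(1)}$, which yields $L\gtrsim t^{-1-C}$, not $1/(t\log^2t)$. And a pointwise bound on the tilt cannot lower-bound the signed integral $\int e^{r}(-g)$ at all. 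The positivity comes from the window itself: $D_{\mathrm{eff}}\ge\underline D(c_\star)$ by definition.
\end{itemize}
You also need not derive $\rho\le\tfrac12$ in (ii); it carries over from (i). Your derivation would fail anyway. It compares only $c_0$-components, whereas $\rho$ involves the $w,a$ components too. And $|v-v_0|\le\|\theta-\theta_0\|_\infty(1+\phi)$ omits the feature-drift term $c_4(0)(\phi-\phi_0)$, where $\phi_0$ is the base feature. That term is exactly the $\beta|c_4(0)|$ the small-$\beta$ condition absorbs.

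\textbf{Comparison of lower-bound routes.} Repaired this way, your route is genuinely different from the paper's, and sharper. The paper bounds $|\dot L|\le\|\nabla J\|^2+\beta\|\nabla J\|\,\|\nabla\mathcal K\|=O(L^2\log^2t)$ and integrates $\tfrac{d}{dt}(1/L)$; the logarithms enter because $\partial_wJ$ and $\partial_aJ$ carry a factor $|c_4|$. You go through the bias instead. Under the certificate, $\dot c_0\ge-Ce^{c_0}$ with $C=2G_-+\beta|c_4(0)|$, so $e^{c_0(t)}\ge1/(e^{-c_0(T_0)}+C(t-T_0))$. Combined with $L\ge\tfrac12\underline D(c_\star)e^{c_0}$, this gives $L=\Omega(1/t)$ on the window, which implies the stated bound with no logarithmic loss. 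The two-timescale bookkeeping and the $O(\log t)$ control of $w,a$ that you flagged as the main obstacle are then unnecessary.

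\textbf{The gap in (iii).} Your sketch misses the mechanism behind ``$O(1)$ absolute time.'' At the crossing, the error in $\bar m_{\mathrm{tilt}}$ is $O(e^{c_0})$, but so is every parameter velocity on the plateau. So a merely nonzero crossing speed decides nothing, and nothing in your argument shows the crossing is transversal. The paper's argument runs as follows:
\begin{itemize}
\item $D_{\mathrm{eff}}$ does not depend on $c_0$, so the chain rule gives $\dot D_{\mathrm{eff}}=-e^{c_0}\sum_iA_i^2-e^{c_0}\sum_iA_ie_i$, with $A_i=\int(\partial_iv)e^{r}g$ and $e_i$ the plateau-approximation errors.
\item Single crossing of $g$ in $\phi$ gives $|A_{c_4}|=\delta>0$ at the zero; this is the transversality.
\item The explicit error-dominance condition is $\sum_i|A_ie_i|\le\tfrac12\sum_iA_i^2$.
\item The two factors $e^{c_0}$ then cancel, giving $|Z_L-Z_m|\le2\int e^{2r}|g|/\delta^2$ to leading order.
\end{itemize}
Finally, the $\beta$-stability of $Z_L$ is a measurement in the paper, with a $26\%$ shift already at $\beta=10^{-2}$. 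It does not follow from the anchor terms being subdominant.
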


\begin{proof}[Proof idea]
The exact softmax identity gives $\partial_{c_0}J=-(1-\bar\sgm)L$; under $\rho\le\tfrac12$, $\dot L\le-\tfrac12\|\nabla J\|^2\le-\tfrac12(1-\bar\sgm)^2L^2$, which integrates to (i). For (ii), the plateau approximation $\sgm'(v)\asymp e^{v}$ turns the bias flow into $\frac{d}{dt}e^{-c_0}\ge D_{\mathrm{eff}}/2$, and a gain-growth lemma derives $|c_4|=O(\log t)$ from the sign, tilt, and small-$\beta$ inputs rather than assuming it. Full proofs of all results are in the appendix.
\end{proof}

\looseness=-1 Two readings matter. \emph{First}, the hypotheses are measurements, not conveniences: $\rho$ is computable from logged gradients (on our runs it peaks at $0.008$), and the sign, tilt, and small-$\beta$ inputs are checkable per run; every hypothesis held on every run reported in this paper, simulated and live. Read this way, part (i) is an a priori envelope from a single observable, and part (ii) is a certified finite-time law for any run whose logs pass its checks. \emph{Second}, the law is on $L$; the answer rate $P$ obeys $P=L/|\bar m_{\mathrm{tilt}}|\cdot(1+o(1))$ away from the endpoint, and its exponent is \emph{not} constant, so predictions should be stated on $L$.

\begin{proposition}[The proximal surrogate understates the pathology]\label{prop:prox}
Replace the KL in \eqref{eq:objective} by $\tfrac\beta2\|\theta-\theta_0\|^2$. The restoring force no longer carries $\sgm'(v)$; the flow converges exponentially to a stable equilibrium with answer-rate floor $P_\infty=\Theta(\beta\log(1/\beta))$. Analyses that model RLHF's KL by a proximal term are therefore \emph{not conservative} here: true KL is strictly more permissive of collapse than its weight-space stand-in.
\end{proposition}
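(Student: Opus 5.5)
The plan is to write the proximal-anchored flow explicitly and show that, unlike the true-KL flow of Lemma~\ref{lem:klgrad}, its restoring term is not multiplied by $\sgm'(v)$. The flow is
\[
\dot\theta=\int \sgm'(v_\theta)\,g\,\nabla_\theta v_\theta\,ds-\beta(\theta-\theta_0).
\]
The anchor force $-\beta(\theta-\theta_0)$ grows linearly as $\theta$ drifts toward the vertex, whereas the reward force is throttled by $\sgm'(v)\asymp e^{v}$. I will first show that the iterates stay in a bounded set. Along the flow, $J-\tfrac\beta2\|\theta-\theta_0\|^2$ is nondecreasing, and $|J|\le\int|g|$. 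Together these give $\|\theta_t-\theta_0\|^2\le 4\int|g|/\beta$ uniformly in $t$. Hence $v_\theta$ stays bounded, with $|c_0|,|c_4|\lesssim\beta^{-1/2}$, and the all-abstain vertex is never reached. This already separates the two cases, since under the true KL Theorem~\ref{thm:onet} has $c_0\to-\infty$.

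Next comes the equilibrium and its answer rate. At a stationary point the $c_0$ component reads $\int\sgm'(v)g=\beta(c_0-c_{0,0})$. Using (B1) and the plateau approximation $\sgm'(v)\approx\sgm(v)\approx e^{v}$ on the collapsing side, the left side is approximately $-e^{c_0}D_{\mathrm{eff}}$. Balancing gives $e^{c_0}D_{\mathrm{eff}}\approx\beta|c_0|$. The self-consistent solution is $c_0\approx-\log(1/\beta)+O(\log\log(1/\beta))$, so $P_\infty=\int p\,\sgm(v)\approx\Lambda e^{c_0}=\Theta(\beta|c_0|)=\Theta(\beta\log(1/\beta))$. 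The upper and lower $\Theta$ bounds come from sandwiching $\sgm'$ between $e^{v}/(1+e^{v_{\max}})^2$ and $e^{v}$ on the bounded region found above. I will use the $c_4$ equation in the same way, to show $c_4$ stays $O(\log(1/\beta))$, so the tilt $\Lambda$ is bounded above and below by constants. For the comparison, I will contrast this with Corollary~\ref{cor:policyreg}: there the throttled balance $e^{c_0}D_{\mathrm{eff}}\approx\beta e^{c_0}|c_0|$ cancels the $e^{c_0}$ and yields only $e^{-D_{\mathrm{eff}}/\beta}\ll\beta\log(1/\beta)$. This proves the ``not conservative'' claim.

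For exponential convergence, I will linearize at the equilibrium. The Hessian of the reward term there has magnitude $O(e^{c_0})=O(\beta\log(1/\beta))$. The anchor contributes $-\beta I$. I want the total to be negative definite, so that the objective is strongly concave locally with modulus $\gtrsim\beta$, and then argue global convergence. The global step is the main obstacle, because the reward term is not concave in $(w,a,c_4)$ and the $\log$ factor means the reward Hessian is not automatically dominated by $\beta I$.

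My first attempt will work in the coordinates where $c_0$ is the stiff direction. In $c_0$ the curvature is $-e^{c_0}\bar D-\beta$, which is negative, so it helps. In the remaining directions the reward curvature carries an extra $\sgm'$ factor of size $O(\beta\log(1/\beta))$ times bounded feature derivatives. I then need a Łojasiewicz-type argument on the compact sublevel set, which is real-analytic with isolated critical points. Combined with the local strong concavity, this gives exponential convergence to a stable equilibrium. If uniqueness fails, I will state the result per basin, which still gives the floor.
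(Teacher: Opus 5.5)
Your bias-coordinate balance is the paper's proof. The paper writes $\dot c_0=-e^{c_0}D_{\mathrm{eff}}(1+o(1))+\beta(c_0(0)-c_0)$ and takes the root $c_0=\log\beta+\log\log(1/\beta)+O(1)$. It gets exponential approach because this scalar field is strictly decreasing in $c_0$ (slope $-e^{c_0}D_{\mathrm{eff}}-\beta\le-\beta$). It then reads off $P_\infty$ ``with bounded tilt'', taken rather than proved. Your energy bound $\|\theta_t-\theta_0\|^2\le4\int|g|/\beta$ is correct and goes beyond the paper. It already gives, unconditionally, $P\ge\sgm(-O(\beta^{-1/2}))$ along the whole flow.

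The first gap is the tilt. $|c_4|=O(\log(1/\beta))$ only gives $\Lambda\in[\beta^{C},\beta^{-C}]$, not constants. The $c_4$ equation cannot give more: dividing the two stationarity conditions yields $(c_4-c_4(0))/(c_0-c_0(0))=\int\sgm'(v)\phi\,g\big/\!\int\sgm'(v)\,g$. This is generically a nonzero constant, so $|c_4-c_4(0)|=\Theta(\log(1/\beta))$. Likewise, your $\sgm'$ sandwich ``on the bounded region found above'' has $\beta$-dependent constants, since that region allows $\sup_s v$ up to $O(\beta^{-1/2})$.

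What you actually need is that the equilibrium lies on the collapsing side: tilted margin negative and bounded away from zero, and $\sup_s v=O(1)$. This is a real hypothesis, not a consequence of (B1), which is an untilted condition. The paper's own proximal-anchor runs at medium and strong signal end near rational coverage ($0.234$, $0.468$), not at an $O(\beta\log(1/\beta))$ floor. So global convergence to the floor equilibrium is false in general, and your per-basin fallback does not ``still give the floor'' in an escaped basin.

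Under that hypothesis the tilt can be largely bypassed. Stationarity in $c_0$ together with the exact identity $\partial_{c_0}J=-(1-\bar\sgm)L$ gives $(1-\bar\sgm)L_\infty=\beta(c_0(0)-c_0)$. Also $L=P|\bar m|$ with $\bar m:=\int\sgm(v)\,p\,m(\bar q)\big/\!\int\sgm(v)\,p\in[-\lambda,1]$, so $P_\infty=\Theta(\beta(c_0(0)-c_0))$. When $c_4\le0$, the bound $P\le e^{c_0}$ caps $c_0(0)-c_0$ at $\log(1/\beta)+O(1)$. Only the matching lower bound needs a weak tilt bound such as $\Lambda\ge\beta^{\kappa}$ with $\kappa<1$.

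The second gap is that your last step does not deliver exponential convergence to a \emph{stable} equilibrium. {\L}ojasiewicz gives convergence of a bounded real-analytic gradient flow to a single critical point; isolation of critical points is neither proved nor needed. An exponential rate, however, requires {\L}ojasiewicz exponent $1/2$, for instance a nondegenerate local maximum of $J-\tfrac\beta2\|\theta-\theta_0\|^2$. That is exactly the negative definiteness you could not establish, and comparing magnitudes with $-\beta I$ cannot settle it. In the $w$ direction the reward Hessian contains $\int\sgm'(v)\,c_4\,\sgm''(ws+a)\,s^2\,g\,ds$, which is sign-indefinite and of size $e^{c_0}\,\mathrm{polylog}(1/\beta)\gg\beta$. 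Nothing in your argument excludes a degenerate or saddle-type limit. Either restrict the exponential claim to the reduced bias flow with the readout frozen, as the paper does, or add nondegeneracy of the equilibrium as a hypothesis.
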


\begin{proposition}[Bounded readouts delay greedy recovery]\label{prop:bounded}
With $v(s)=c_4\phi(s)+c_0$, $\phi\in[0,1]$, greedy decoding answers a positive-measure set iff the gauge-invariant scalar $\sup_s v(s)=\max(c_4,0)+c_0$ is positive. The tilt available to the flow is $\Lambda(t)\le e^{|c_4|}$; under the gain growth of Theorem~\ref{thm:onet}\textup{(ii)}, reaching gain $M$ takes time $e^{\Omega(M)}$. A bounded readout thus delays escape exponentially in the gain the escape must accumulate, without creating an absorbing state. Gates driven by unbounded hidden-state projections sit outside the class and escape.
\end{proposition}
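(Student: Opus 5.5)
The plan is to prove the three claims of Proposition~\ref{prop:bounded} in turn: the greedy answer criterion, the tilt bound, and the exponential delay.

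\emph{Greedy answer criterion.} Since $\phi(s)=\sgm(ws+a)$ ranges over $(0,1)$, the gate $v(s)=c_4\phi(s)+c_0$ is monotone in $\phi$. Its supremum over $s$ is therefore $c_0+\max(c_4,0)$, approached as $\phi\to1$ when $c_4>0$ and as $\phi\to0$ when $c_4\le0$.
\begin{itemize}
\item If this supremum is positive, continuity and monotonicity of $\phi$ give a set of $s$ with positive $p$-measure on which $v>0$. This uses that $p$ has full support on the relevant tail of $s$; that is the mild regularity we assume of the setting.
\item If the supremum is at most $0$, then $v(s)<0$ strictly for every $s$, because $\phi$ is interior. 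Greedy decoding then answers nothing.
\end{itemize}
For gauge invariance, note that the pair $(c_4,c_0)$ enters only through $v$. The reparametrisation $\phi\mapsto1-\phi$ (flipping $w,a$) sends $(c_4,c_0)\mapsto(-c_4,c_0+c_4)$. Under this map $\max(c_4,0)+c_0$ becomes $\max(-c_4,0)+c_0+c_4=\max(0,c_4)+c_0$, so the scalar is unchanged.

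\emph{Tilt bound.} Since $r=c_4\phi$ with $\phi\in(0,1)$, we have $e^{r}\le e^{\max(c_4,0)}\le e^{|c_4|}$ pointwise. Integrating against $p$ gives $\Lambda(t)\le e^{|c_4(t)|}$.

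\emph{Exponential delay.} This part rests on the gain-growth lemma underlying Theorem~\ref{thm:onet}(ii), which gives $|c_4(t)|\le A+B\log(1+t)$ on the certified window, with $A,B$ depending on tracked quantities.
\begin{itemize}
\item Invert the bound. Reaching $|c_4|\ge M$ requires $\log(1+t)\ge(M-A)/B$, that is $t\ge e^{(M-A)/B}-1=e^{\Omega(M)}$.
\item Connect this to escape. Escape means making $\sup_s v>0$ while the bias has been driven to $c_0\to-\infty$, with $e^{-c_0}$ growing at least linearly by the plateau bias flow in the proof of (ii). A positive supremum then needs $\max(c_4,0)>|c_0|$, so the gain $M$ to be accumulated is at least of order $\log t$ at the current time.
\item Alternatively, use the tilted margin. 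Escape requires $\bar m_{\mathrm{tilt}}>0$, that is enough tilt mass on $\bar q>t^\ast$. Via $\Lambda\le e^{|c_4|}$ this again forces a gain $M$ set by the calibration.
\end{itemize}
Either route yields the time lower bound $e^{\Omega(M)}$. There is no absorbing state: the gradient on $c_4$ is nonzero whenever the tilted margin is nonzero, so the flow is never stationary at the vertex and escape is delayed, not forbidden.

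\emph{Unbounded readouts.} For gates with unbounded readouts, $v=c_4 h(s)+c_0$ with $h$ unbounded above on the support of $p$. Here $\sup_s v=+\infty$ for any $c_4>0$, so greedy answering on some positive-measure set needs only $c_4>0$, not a gain exceeding $|c_0|$. Likewise the available tilt $\E_p[e^{c_4h}]$ is not capped by $e^{|c_4|}$. The profitable prompts (high $h$) are then repelling by Proposition~\ref{prop:vertex}(a) and can escape at polynomial rather than exponential cost.

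\emph{Main obstacle.} The delicate step is making the gain-growth lemma applicable up to the moment of escape. That lemma is certified only on the window $[T_0,c_\star t_{\mathrm{tilt}}]$, where the tilt stays below $\bar\Lambda<1$, and escape lies beyond it. I would handle this by splitting time at $c_\star t_{\mathrm{tilt}}$.
\begin{itemize}
\item On the certified window, use the logarithmic growth as stated.
\item Past it, bound $\dot c_4$ directly. We have $|\partial_{c_4}J_\beta|\le\int\sgm'(v)\,(|g|+\beta p|v-v_0|)$, which is at most of order $e^{\sup v}$ while $\sup v<0$. This keeps $c_4$ growing at most logarithmically until the supremum first crosses $0$, and the exponential time bound then follows.
\end{itemize}
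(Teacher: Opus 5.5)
Your core argument is correct and matches the paper's proof. It consists of $\sup_s v=\max(c_4,0)+c_0$ because $\phi$ sweeps $(0,1)$, gauge invariance under $(w,a,c_4,c_0)\mapsto(-w,-a,-c_4,c_0+c_4)$, the pointwise bound $e^{r}\le e^{|c_4|}$, inversion of $|c_4|\le|c_4(0)|+O(\log t)$ to get $e^{\Omega(M)}$, and an unbounded feature answering a tail for every finite bias. Your full-support caveat for ``positive measure'' is a fair addition, since the paper states it only for the unbounded case.

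The problems are in what you add. Your ``main obstacle'' step is unnecessary: the statement is explicitly conditional on the gain growth of Theorem~1(ii), and the paper argues nothing beyond that window. As written, the step also fails. The bound $|\dot c_4|\lesssim e^{\sup_s v}$ integrates to $O(\log t)$ only if $e^{\sup_s v}=O(1/t)$. That is exactly the bias envelope $e^{c_0}\le C_b/(t\,\underline D(c_\star))$, which is derived from $D_{\mathrm{eff}}\ge\underline D(c_\star)>0$. Past $t_{\mathrm{tilt}}$, $D_{\mathrm{eff}}$ has changed sign and $c_0$ climbs, so nothing in your bound keeps $\int e^{c_0}\,dt$ logarithmic. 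The paper's run does leave the plateau fit: $|c_4|=19.5$ at $t=10^5$, against about $5$ from $0.469\log t-0.39$.

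Your no-absorbing-state argument is a non sequitur. On the plateau the gradient is nonzero at every finite time, yet the flow still heads toward the vertex ($e^{-c_0}$ grows linearly). Non-stationarity therefore says nothing about escape. Also, it is $\partial_{c_0}J\approx e^{c_0}\Lambda\bar m_{\mathrm{tilt}}$ that tracks the tilted margin, not $\partial_{c_4}J$, which carries an extra $\phi$ weight. What actually rules out absorption is the tilt mechanism. $\bar m_{\mathrm{tilt}}$ increases with $|c_4|$ and crosses zero at a gain fixed by the problem, reached at the finite time $t_{\mathrm{tilt}}$. After that, $\dot c_0>0$ and $\sup_s v$ can turn positive.

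The same point explains why only your second ``connect to escape'' route works. In the gauge $c_4\ge0$, the first route's condition $c_4>|c_0|$ is just the bias of the $c_4\le0$ gauge turning positive. Comparing $|c_0|\gtrsim\log t$ with $|c_4|=O(\log t)$ gives no time bound at all. The tilt route instead ties the required gain $M$ to a constant of the problem, and that is the paper's argument.
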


\paragraph{Lifting beyond the frozen gate.}
The lifting theorem removes both restrictions at once: full sequence KL and trainable content, for mixture-structured policies.

\begin{theorem}[Gate-to-sequence lifting; abridged, with full statement and proof in the appendix]\label{thm:lift}
Let $\pi_\theta(y\mid x)$ be a sequence policy with refusal event $A$, answer and refusal probabilities $\pi_{\mathrm{ans}}=1-\pi_\theta(A\mid x)$ and $\pi_{\mathrm{abs}}=\pi_\theta(A\mid x)$, gate logit $v$, answer-branch correctness $q_\theta$, and answer-/refusal-conditional KLs $K_{\mathrm{ans}},K_{\mathrm{abs}}$ to base.
\textup{(i)} Exactly, $\KL_{\mathrm{seq}}=\KL_{\mathrm{gate}}+\pi_{\mathrm{ans}}K_{\mathrm{ans}}+\pi_{\mathrm{abs}}K_{\mathrm{abs}}$, and for any mixture parameterization
\[
\partial_v J_\beta=\sgm'(v)\big[m(q_\theta)-\beta(v-v_0)-\beta(K_{\mathrm{ans}}-K_{\mathrm{abs}})\big]:
\]
\emph{every} gate-channel term is throttled by $\sgm'(v)$, and the anchor taxes only the branch taken: abstaining \emph{hides} content drift from the KL.
\textup{(ii)} With branch-separable content (answer and refusal content on separate parameters, refusals initialized at base, and an explicit KL-gradient regularity bound $C_K<\infty$), the refusal branch is invariant and the collapsed manifold is stationary in the limit $\pi_{\mathrm{ans}}\to0$; the surviving restoring force pins \emph{refusals} to base, never answering.
\textup{(iii)} Under a capability ceiling preserving \textup{(B1)}, or under an explicit race inequality between the collapse clock and content drift, checkable on logged runs and stated in the appendix, the finite-time law of Theorem~\ref{thm:onet} survives with lower bound weakened by one logarithm, $\Omega(1/(t\log^3 t))$. The correctness channel is throttled by $\pi_{\mathrm{ans}}$, so the plateau delivers only $O(\log t_{\mathrm{tilt}})$ of total content signal: \emph{the plateau starves the very channel that could avert it}. Content shared across branches, and gate--content parameterizations with no bias coordinate, remain open; a generic single-decoder Transformer sits outside on both counts.
\end{theorem}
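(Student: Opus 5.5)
Part (i) is pure bookkeeping, and I would do it first. Factor the sequence policy at the gate: $\pi_\theta(y\mid x)=\pi_{\mathrm{ans}}\,\pi_\theta(y\mid x,\bar A)$ for $y\notin A$, and $\pi_{\mathrm{abs}}\,\pi_\theta(y\mid x,A)$ for $y\in A$, and likewise for the base. Splitting $\sum_y\pi_\theta\log(\pi_\theta/\pi_0)$ over $A$ and its complement then gives the chain rule $\KL_{\mathrm{seq}}=\KL_{\mathrm{gate}}+\pi_{\mathrm{ans}}K_{\mathrm{ans}}+\pi_{\mathrm{abs}}K_{\mathrm{abs}}$. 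Under a mixture parameterization the gate logit $v$ enters only through $\pi_{\mathrm{ans}}=\sgm(v)$, so the per-prompt objective is
\[
\sgm(v)\,m(q_\theta)-\beta\,\KL_{\mathrm{gate}}-\beta\,\sgm(v)K_{\mathrm{ans}}-\beta\,(1-\sgm(v))K_{\mathrm{abs}} .
\]
Differentiating in $v$ and using \eqref{eq:klgrad} for the gate term yields the displayed identity, with the common factor $\sgm'(v)$ pulled out exactly as in Lemma~\ref{lem:klgrad}. The ``hiding'' remark is then just the weight $\pi_{\mathrm{ans}}$ on $K_{\mathrm{ans}}$.

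For (ii), I would write the content gradients. With separate parameters $\psi_{\mathrm{abs}}$ for refusal content, $\nabla_{\psi_{\mathrm{abs}}}J_\beta=-\beta\,\pi_{\mathrm{abs}}\nabla K_{\mathrm{abs}}$, because reward does not depend on refusal content (abstention scores $0$). At base initialization $K_{\mathrm{abs}}=0$ and its gradient is zero, so uniqueness of the ODE solution keeps $\psi_{\mathrm{abs}}$ at base for all time; this is the invariance claim. The answer-content gradient is $\pi_{\mathrm{ans}}[\nabla m(q_\theta)-\beta\nabla K_{\mathrm{ans}}]$, bounded by $\pi_{\mathrm{ans}}(\|\nabla m\|+\beta C_K)$, which tends to $0$ as $\pi_{\mathrm{ans}}\to0$. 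The gate gradient from (i) also tends to $0$ because $\sgm'(v)\le\pi_{\mathrm{ans}}$. Together these show the collapsed manifold is stationary in the limit, and that the only surviving restoring force is the one holding refusals at base.

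Part (iii) is the main obstacle. My plan is to re-run the proof of Theorem~\ref{thm:onet} with $m(\bar q)$ replaced by the time-varying $m(q_\theta)$ and an extra perturbation $-\beta\sgm'(v)(K_{\mathrm{ans}}-K_{\mathrm{abs}})$ in the bias flow.

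In the capability-ceiling case, $q_\theta\le\bar q_{\max}(s)$ with (B1) preserved means the drift density $-g$ stays bounded below by its ceiling version. The plateau inequality $\tfrac{d}{dt}e^{-c_0}\ge D_{\mathrm{eff}}/2$ therefore persists once $\beta C_K$ is absorbed into the certificate margin $\tfrac12 D_{\mathrm{eff}}$. That gives the upper bound $L\le c_2/t$ as before.

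The lower bound is where the extra logarithm enters. The content channel integrates $\int_0^t\pi_{\mathrm{ans}}\,d\tau\lesssim\int d\tau/\tau=O(\log t)$; this is the ``starves'' statement, and it requires $P\asymp L$, which follows from $P=L/|\bar m_{\mathrm{tilt}}|(1+o(1))$. Content drift of size $O(\log t)$ can raise the tilt exponent, contributing one more $\log t$ factor alongside the $|c_4|=O(\log t)$ gain, so $c_1/(t\log^2 t)$ degrades to $\Omega(1/(t\log^3 t))$.

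In the race-inequality case, I would simply assume the logged condition that content improvement $\int\pi_{\mathrm{ans}}\|\nabla m\|$ stays below the margin $D_{\mathrm{eff}}$ on the window, and reduce to the ceiling argument. The delicate step will be proving the gain-growth lemma with a moving $q_\theta$. I would first try to bound $\dot c_4$ by its frozen-content value plus $\pi_{\mathrm{ans}}\beta C_K$, and then apply Grönwall's inequality.
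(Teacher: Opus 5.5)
Your parts (i) and (ii) match the paper's argument. For (i) this is the chain rule for KL over $\{A,A^c\}$ plus $\partial_v[\sgm(v)K_{\mathrm{ans}}+(1-\sgm(v))K_{\mathrm{abs}}]=\sgm'(v)(K_{\mathrm{ans}}-K_{\mathrm{abs}})$. For (ii) it is invariance of $\psi_{\mathrm{abs}}$, because its only force, $-\beta\pi_{\mathrm{abs}}\nabla K_{\mathrm{abs}}$, vanishes at the base minimizer.

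The gap is in (iii), and its root is that you use $C_K$ as if it bounded $K_{\mathrm{ans}}$. It bounds only $\partial_\psi q_\theta$ and $\nabla_{\psi_{\mathrm{ans}}}K_{\mathrm{ans}}$. The genuinely new term on the gate coordinates is the value term $-\beta\sgm'(v)(K_{\mathrm{ans}}-K_{\mathrm{abs}})\nabla_\theta v$ from (i), and $K_{\mathrm{ans}}$ has no a priori bound along the trajectory. So neither ``absorb $\beta C_K$ into the certificate margin'' nor ``bound $\dot c_4$ by its frozen-content value plus $\pi_{\mathrm{ans}}\beta C_K$'' is a valid estimate.

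For the bias envelope you need no bound: with $K_{\mathrm{abs}}\equiv0$ from (ii) and $K_{\mathrm{ans}}\ge0$, the term only pushes $c_0$ down and can be dropped. Your claim in (ii) that the gate gradient vanishes likewise holds at fixed content, but along the flow it needs $K_{\mathrm{ans}}$ controlled.

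The missing ingredient is the path bound the flow structure supplies: $\|\dot\psi_{\mathrm{ans}}\|\le(1+\lambda+\beta)C_KP(t)$. Hence $\sup_xK_{\mathrm{ans}}(x,t)\le\overline K_0+(1+\lambda+\beta)C_K^2\int_0^tP=O(\log t)$ on the window, using $P\le 2L/\underline D^{\max}(c_\star)$. That $\int P=O(\log t)$ is your own starvation count, which you derived but never fed back into the gate coordinates.

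That path bound is where the extra logarithm comes from. Your account, in which content ``raises the tilt exponent,'' corresponds to no term: content moves $g$ through $q_\theta$, not the exponent $r=c_4\phi$. The lower bound needs $|\dot L|\le\|\nabla J\|^2+\|\nabla J\|\,\beta\|\nabla\KL_{\mathrm{seq}}\|$ over all coordinates, gate and content.

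The content part of $\nabla J$ is only $(1+\lambda)C_KP=O(L)$, so $\|\nabla J\|=O(L\log t)$ as in Theorem~\ref{thm:onet}. The value term, however, contributes to the anchor gradient $O(\beta L\log t)$ on $(c_0,c_4)$ and $O(\beta L\log^2 t)$ on $(w,a)$, where $\partial_wv=c_4\phi'(ws+a)s$ carries an extra $|c_4|=O(\log t)$. Hence $|\dot L|=O(L^2\log^3 t)$ and $L=\Omega(1/(t\log^3 t))$. A window bound $\sup K_{\mathrm{ans}}\le\overline K$ restores $\log^2 t$, which confirms that the lost logarithm lives in $K_{\mathrm{ans}}$, not in the tilt.

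Two smaller points:
\begin{enumerate}
\item Your bias-flow route to $L=O(1/t)$ is fine. If you go through the envelope instead, the anchor-share hypothesis must be placed on the full gradient, since the content cross term $\beta\,\partial_\psi J\cdot\partial_\psi\KL_{\mathrm{seq}}$ is not sign-definite.
\item Your race condition drops the anchor's share $\beta C_{\nabla K}\int P$ of the content displacement. It also compares a displacement against $D_{\mathrm{eff}}$ rather than against the (B1)-breaking parameter distance $\Delta_{\mathrm{B1}}$.
\end{enumerate}
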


\subsection{The Advantage Estimator Rewrites the Rule}\label{sec:estimator}

Theorem~\ref{thm:onet} is a statement about vanilla policy gradient (PG). Practice uses group-based estimators: sample $G$ completions per prompt, form advantages $\hat A_i=(r_i-\hat\mu_r)/\hat\sigma_r$ (GRPO; \citealp{shao2024deepseekmath}) or mean-baseline advantages (RLOO, Dr.~GRPO; \citealp{ahmadian2024,liu2025drgrpo}). Because $\hat\sigma_r$ itself depends on the answer probability $p=\sgm(v)$, normalization changes the \emph{exponent} and, in the sparse regime, the \emph{rule}. Write $M_2:=\bar q+(1-\bar q)\lambda^2$.

\begin{proposition}[Drift menu]\label{prop:menu}
The per-prompt expected drift of the gate coordinate satisfies:
\textup{(a)} vanilla PG and mean-baseline estimators: $\Theta(p\,m)$, hence local $L$-exponent $-1$;
\textup{(b)} group-std normalization, $1/G\ll p\ll1$: $\approx m\sqrt{p/M_2}=\Theta(\sqrt p)$, hence exponent $-2$;
\textup{(c)} group-std normalization, $p\ll1/G$: a group contains at most one answer, whose advantage is $\sign(r)\sqrt{G-1}$ \emph{independently of $|r|$}, so the drift is $p\,(2\bar q-1)\sqrt{G-1}$: the designed penalty is erased, $\lambda_{\mathrm{eff}}=1$ for every nominal $\lambda$, and the exponent returns to $-1$. The knee between \textup{(b)} and \textup{(c)} sits at $p\approx1/G$.
\end{proposition}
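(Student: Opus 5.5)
The plan is to compute the expected gate-coordinate drift per prompt for each estimator. For a single prompt, write $p=\sgm(v)$ and let the per-sample reward be $r=0$ on abstain, and on answer $r=+1$ with probability $\bar q$ and $r=-\lambda$ otherwise, so $\E[r\mid\text{ans}]=m(\bar q)$ and $\E[r^2\mid\text{ans}]=M_2$. The score-function gradient of the gate log-probability is $\partial_v\log\pi(a)=\ind{a=\text{ans}}-p$. Hence the drift is $\E[\hat A_i(\ind{a_i=\text{ans}}-p)]$, averaged over groups of size $G$. The local $L$-exponent then follows by inserting the drift law into the bias-flow argument of Theorem~\ref{thm:onet}. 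A drift $\Theta(p^{\alpha})$ in $v$ gives $\dot p\asymp p\cdot p^{\alpha}$, so $p\sim t^{-1/\alpha}$. Since $L\propto p$ away from the endpoint (the remark after Theorem~\ref{thm:onet}), we get $L\sim t^{-1/\alpha}$: $\alpha=1$ gives exponent $-1$ and $\alpha=\tfrac12$ gives exponent $-2$.

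\textbf{Case (a).} For vanilla PG the drift is exactly $\E[r(\ind{\text{ans}}-p)]=p(1-p)m$, which is $\Theta(pm)$. For a mean baseline, the leave-one-out baseline $\hat\mu_{-i}$ is independent of $a_i$, and $\E[\ind{\text{ans}}-p]=0$. The baseline therefore contributes nothing, and the drift is unchanged. Dr.~GRPO's in-group mean differs only by a factor $(1-1/G)$.

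\textbf{Case (b).} When $Gp\gg1$, each group contains many answers, so concentration applies. The group variance satisfies $\hat\sigma_r^2\to\mathrm{Var}(r)=pM_2-p^2m^2\approx pM_2$ with relative error $O(1/\sqrt{Gp})$. Replacing $\hat\sigma_r$ by its limit (a delta-method step) gives drift $\approx p(1-p)m/\sqrt{pM_2}\approx m\sqrt{p/M_2}$. This case is the main obstacle: $\hat A_i$ is a ratio whose numerator and denominator are correlated through $a_i$ itself. I would control the cross term by writing $\hat\sigma_r=\sigma(1+\epsilon)$, expanding to first order, and bounding $\E[|\epsilon|]$ by $O((Gp)^{-1/2})$. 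That bound makes the correction lower order exactly when $1/G\ll p$.

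\textbf{Case (c).} When $Gp\ll1$, I condition on the number of answers $N$ in a group. The event $N\ge2$ has probability $O((Gp)^2)$, so it is negligible against $N=1$, which has probability $\approx Gp$. The event $N=0$ gives zero rewards; I use the convention $\hat A=0$ when $\hat\sigma_r=0$, so it contributes nothing.

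Given $N=1$ with reward $r\neq0$, the rewards are one entry $r$ and $G-1$ zeros. Direct computation gives mean $r/G$ and population standard deviation $|r|\sqrt{G-1}/G$. The answered sample therefore gets advantage $\sign(r)\sqrt{G-1}$, and each abstainer gets $-\sign(r)/\sqrt{G-1}$. Both are independent of $|r|$, so $\lambda$ drops out.

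I then sum the score-weighted advantages within the group. The answer contributes $\sign(r)\sqrt{G-1}(1-p)$. The $G-1$ abstainers contribute $(G-1)\cdot(-\sign(r)/\sqrt{G-1})\cdot(-p)=O(p\sqrt G)$, which is lower order. Dividing by $G$ per sample and multiplying by $P(N=1)\approx Gp$, the per-sample drift is $\approx p\sqrt{G-1}\,\E[\sign r]$. Since $\E[\sign r]=\bar q-(1-\bar q)=2\bar q-1$, this is the rule $m$ with $\lambda$ replaced by $\lambda_{\mathrm{eff}}=1$. The drift is linear in $p$, so the exponent returns to $-1$.

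\textbf{The knee.} Setting the drifts of (b) and (c) equal, $\sqrt{p}\asymp p\sqrt G$, gives $p\asymp1/G$ up to constants depending on $\bar q$ and $\lambda$. This coincides with the crossover $Gp\asymp1$ where the concentration of (b) and the single-answer conditioning of (c) both break down. The knee is thus at $p\approx1/G$.
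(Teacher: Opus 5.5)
Your proposal is correct and follows the paper's derivation essentially step for step: the score-function computation for (a), concentration of $\hat\mu_r,\hat\sigma_r$ for (b), the exact single-answer group computation giving $\hat A_1=\sign(r)\sqrt{G-1}$ for (c), and the conversion of a drift $\propto p$ or $\propto\sqrt{p}$ into $L\sim t^{-1}$ or $L\sim t^{-2}$ via $\dot L/L\approx\dot c_0$. The differences are minor. The paper uses $\sum_i\hat A_i=0$ to write the drift exactly as $\E[\sum_{\text{answering}}\hat A_i]/G$, so your abstainer term does not merely drop out as lower order but exactly cancels the $-p$ part of the answerer's score. For (b) the paper relies on the concentration heuristic confirmed by exact enumeration over group outcomes, so your delta-method bound is a slightly more quantitative version of the same step rather than a different route.
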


\begin{table}[t]\centering\small
\setlength{\tabcolsep}{2.9pt}
\begin{tabular}{@{}llccc@{}}
\toprule
estimator & regime & drift $\propto$ & $L(t)$ & $L(N)$\\
\midrule
PG / RLOO & all $p$ & $p\,m$ & $\tilde\Theta(1/t)$ & $\tilde\Theta(G/N)$\\
GRPO & $\tfrac1G\ll p\ll1$ & $\sqrt{p}\,\tfrac{m}{\sqrt{M_2}}$ & $\Theta(1/t^2)$ & $\Theta(\tfrac{G^2}{N^2})$\\
GRPO & $p\ll\tfrac1G$ & $p(2\bar q{-}1)\sqrt{G{-}1}$ & $\tilde\Theta(1/t)$ & $\tilde\Theta(\tfrac{G}{kN})$\\
+DAPO & $p\ll\tfrac1G$ & $\tfrac{(2\bar q-1)\sqrt{G-1}}{G}$ & $\Theta(e^{-\omega t})$ & $\tilde\Theta(\tfrac{G}{kN})$\\
\bottomrule
\end{tabular}
\caption{The estimator determines the decay law and the penalty actually optimized ($t$: optimizer steps; $N$: rollouts; $k:=|2\bar q-1|\sqrt{G-1}$, $\omega\asymp k/G$). DAPO's per-step exponential and its rollout dilution cancel in $L(N)$ (Proposition~\ref{prop:conserve}); the bottom three rows optimize $\lambda_{\mathrm{eff}}=1$.}
\label{tab:menu}
\end{table}

\begin{figure}[t]\centering
\includegraphics[width=0.97\linewidth]{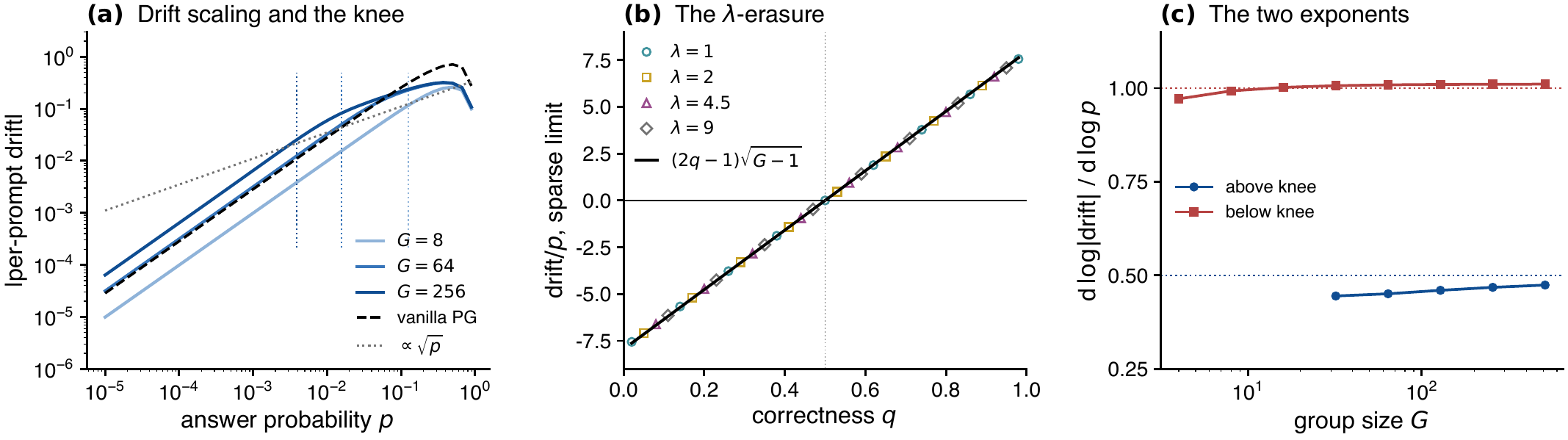}
\caption{The menu by exact enumeration ($\bar q=0.3$, $\lambda=4.5$). (a)~GRPO drift $\propto\sqrt p$ above the knee $p=1/G$ (dotted), $\propto p$ below; vanilla PG $\propto p$ throughout. (b)~Sparse regime: drift$/p$ collapses onto $(2\bar q-1)\sqrt{G-1}$ for every nominal $\lambda$. (c)~Exponents converge to $0.5$ and $1.0$.}
\label{fig:menu}
\end{figure}

\emph{The threshold moves.} In regime (c) the drift changes sign at $\bar q=1/2$, not at $t^\ast$: the optimizer trains the model to answer the entire band $\bar q\in(1/2,t^\ast)$ on which the rule assigns negative gain; at $\lambda=4.5$ this is the band $(0.50,0.82)$. Group normalization thus silently swaps in a rule whose Chow threshold is $1/2$, manufacturing the hallucinations the rule was introduced to prevent. \emph{The abstention rate is set by $G$, not $\lambda$.} On that band the two regimes push in opposite directions, creating a stable fixed point at $p^\ast=\Theta(1/G)$: the abstention rate on the contested band is set by the group size.

\begin{proposition}[Sampling filters conserve the collapse]\label{prop:conserve}
Let $D$ be the event that a group's reward vector is constant (e.g.\ all-abstain), on which $\hat A_i\equiv0$. Then $\E[\hat g]=\Pr[D^c]\,\E[\hat g\mid D^c]$ identically, so a filter that discards degenerate groups and resamples (DAPO; \citealp{yu2025dapo}) conserves the expected \emph{reward} drift per rollout exactly, for every $p,G,\bar q,\lambda$ and every prompt distribution, while taking $1/\Pr[D^c]\approx1/(Gp)$ times more rollouts per step. The anchor, applied once per optimizer step, is thereby \emph{diluted} by $\Pr[D^c]$ on the rollout axis. Per step the law becomes exponential, $L=\Theta(e^{-\omega t})$ with $\omega\asymp|2\bar q-1|/\sqrt G$; per rollout nothing changes. Consequently no resampling scheme that discards zero-gradient groups can repair the collapse: only changing the score or the advantage estimator can.
\end{proposition}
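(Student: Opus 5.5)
The plan is to prove the identity $\E[\hat g]=\Pr[D^c]\,\E[\hat g\mid D^c]$ first, then derive the per-rollout conservation, the anchor dilution and the per-step exponential law from it. The identity itself is just the law of total expectation:
\[
\E[\hat g]=\Pr[D]\,\E[\hat g\mid D]+\Pr[D^c]\,\E[\hat g\mid D^c].
\]
On $D$ every reward in the group is equal. The mean baseline then removes all of them, and under group-std normalization we adopt the convention $\hat A_i:=0$ when $\hat\sigma_r=0$. Either way $\hat A_i\equiv0$, so $\hat g=\sum_i\hat A_i\nabla\log\pi(y_i)=0$ pointwise on $D$ and the first term drops. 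This step uses no property of $p$, $G$, $\bar q$, $\lambda$ or of the prompt distribution: conditioning on a fixed prompt gives the per-prompt version, and averaging over prompts gives the population version.

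Next comes the rollout accounting. A DAPO step draws groups until it has a batch of $B$ non-degenerate groups, so its reward gradient has expectation $B\,\E[\hat g\mid D^c]$. The number of groups it consumes is geometric, with mean $B/\Pr[D^c]$, which is $BG/\Pr[D^c]$ rollouts. The expected reward drift per rollout is therefore
\[
\frac{B\,\E[\hat g\mid D^c]}{BG/\Pr[D^c]}=\frac{\E[\hat g]}{G},
\]
exactly the vanilla per-rollout drift, by the identity. For the factor $1/\Pr[D^c]\approx1/(Gp)$, note that in the collapse regime $D$ is essentially the all-abstain event. Then $\Pr[D^c]=1-(1-p)^G+O(\Pr[\text{all answer}])$, which is $Gp(1+O(Gp))$ for $p\ll1/G$. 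The anchor gradient $\beta\nabla\mathcal K$ enters once per optimizer step, so on the rollout axis its weight is multiplied by $\Pr[D^c]$. That is the dilution claim.

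For the per-step law I would work in the sparse regime (c) of Proposition~\ref{prop:menu}. The unconditional drift is $p(2\bar q-1)\sqrt{G-1}$, so dividing by $\Pr[D^c]\approx Gp$ gives a conditional drift $\dot v\approx(2\bar q-1)\sqrt{G-1}/G$. This is independent of $p$, which matches the DAPO row of Table~\ref{tab:menu}. The gate logit therefore moves linearly in the step count, and $p=\sgm(v)\approx e^{v}$ decays like $e^{-\omega t}$ with $\omega\asymp|2\bar q-1|\sqrt{G-1}/G\asymp|2\bar q-1|/\sqrt G$. $L$ inherits this rate through $L=P\,|\bar m_{\mathrm{tilt}}|(1+o(1))$.

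Converting back to rollouts, $dN/dt\approx G/\Pr[D^c]\approx1/p$, so $N\asymp e^{\omega t}/(\omega G\cdot\text{const})$. Substituting, $L(N)$ recovers $\tilde\Theta(G/(kN))$: the step-wise exponential and the rollout dilution cancel. The final no-repair claim follows directly from the identity. Any filter that discards zero-gradient groups and then resamples leaves $\E[\hat g]$ per rollout unchanged, so the rollout-indexed drift equals the unfiltered one, and Theorem~\ref{thm:onet} and Proposition~\ref{prop:menu} apply in rollout time.

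The main obstacle is making ``per step the law becomes exponential'' rigorous. Two things are needed. First, the bounded-readout coupling through $c_4$ across prompts must not spoil the linear drift of $c_0$. I would handle this by reusing the tilt and gain-growth control from Theorem~\ref{thm:onet}(ii), with $D_{\mathrm{eff}}$ replaced by its conditional analogue. Second, the diluted anchor must stay negligible, which I expect to follow from the small-$\beta$ condition since the anchor is still throttled by $\sgm'(v)$ (Lemma~\ref{lem:klgrad}). If full rigor is out of reach, I would state the exponential law at the level of the per-prompt drift, as Proposition~\ref{prop:menu} does, and keep the exact identity and the per-rollout conservation as the rigorous core.
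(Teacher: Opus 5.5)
Your proposal is correct and follows the paper's proof essentially step for step. The paper likewise uses the tautological total-expectation identity with $\hat g\equiv0$ on $D$, and the same rollout accounting ($G$ versus $G/\Pr[D^c]$ rollouts per step, so both have per-rollout drift $\E[\hat g]/G$). It also has the once-per-step anchor diluted by $\Pr[D^c]\approx Gp$, and obtains the per-step exponential by dividing the regime-(c) drift by $Gp$ to get the constant $\dot c_0\to(2\bar q-1)\sqrt{G-1}/G$. Your fallback of stating the exponential law at the mean-field per-prompt-drift level is exactly the level at which the paper itself establishes it.
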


\paragraph{Falsifiable predictions.}
The section compresses to five estimator-indexed predictions: (P1) vanilla-PG and mean-baseline runs show local $L$-exponent $-1$; (P2) GRPO shows $-2$ above the knee; (P3) the knee sits at per-prompt answer rate ${\approx}1/G$ and moves with $G$; (P4) below the knee the drift is exactly $\lambda$-invariant, with amplitude $(2\bar q-1)\sqrt{G-1}$; (P5) GRPO and GRPO+DAPO trajectories coincide on the rollout axis at $\beta=0$; the experiments take them to a live model (Table~\ref{tab:menu} and Figure~\ref{fig:menu} display the enumerated menu).

\section{The Report-Level Repair}\label{sec:repair}

The repair removes abstention from the action space: answer every question, train a mandatory confidence report, threshold only at deployment.

\begin{proposition}[No shared vanishing factor; anchor-robust]\label{prop:report}
The population Brier objective satisfies $\partial_{\kappa_i}B=-2\E_p[(c-\bar q)\,\partial_{\kappa_i}c]$ with no action-saturation factor. At the calibrated optimum $c\equiv\bar q$ the Hessian in the range parameters is $-2\E[\nabla c\,\nabla c^\top]$, negative definite whenever $\phi$ is nonconstant on the support of $p$: an \emph{interior attracting} equilibrium ($\bar q$ bounded away from $0$ and $1$), within $O(\beta)$ of calibration under the proximal anchor, losing $O(\beta)$ deployment utility in general and $O(\beta^2)$ under the margin condition $\Pr_p(|\bar q-t^\ast|\le\epsilon)=O(\epsilon)$, which a single transversal crossing under a bounded signal density implies. The conclusion is anchor-robust: under a quadratic function-space anchor the optimum is $c_\beta=\bar q+\tfrac{\beta}{2+\beta}(c_0-\bar q)$; under a Bernoulli-KL anchor on the \emph{report}, the anchor's derivative diverges at the boundary and actively repels degenerate reports. At the action level, reward and anchor share the vanishing factor $\sgm'(v)$; at the report level they share none: the Brier gradient is $\Theta(1)$ away from calibration. With the feature frozen, the anchored objective in the range parameters is strongly concave: convergence is global and exponential, where the same reduction leaves the action-level rule collapsing.
\end{proposition}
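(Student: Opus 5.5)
The plan is to prove Proposition~\ref{prop:report} as a chain of direct computations in the range parameters $(\kappa_0,\kappa_1)$ with the feature $\phi$ held fixed, and to invoke Proposition~\ref{prop:vertex}(b) only for the boundary claims.

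\textbf{Gradient and Hessian.} The expected Brier score at $s$ is $B(s)=1-\bar q(1-c)^2-(1-\bar q)c^2$. Differentiating gives $\partial_c B=-2(c-\bar q)$. Hence $\partial_{\kappa_i}B=-2\E_p[(c-\bar q)\partial_{\kappa_i}c]$, with $\partial_{\kappa_0}c=1-\phi$ and $\partial_{\kappa_1}c=\phi$. No factor of the form $\sgm'(\cdot)$ appears, because $c$ is affine in $\kappa$. The same affinity gives $\partial^2_{\kappa}c=0$, so the Hessian is exactly $-2\E_p[\nabla c\,\nabla c^\top]$ everywhere, not only at the optimum. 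Its Gram matrix is built from the functions $1-\phi$ and $\phi$, and these are linearly independent in $L^2(p)$ iff $\phi$ is nonconstant on the support of $p$; this gives negative definiteness. Call the smallest eigenvalue of $2\E_p[\nabla c\,\nabla c^\top]$ by $\mu>0$. Then $B-\tfrac\beta2\|\kappa-\kappa_0\|^2$ is $(\mu+\beta)$-strongly concave on the box $[0,1]^2$. Strongly concave gradient ascent, or its projected version on the convex box, converges globally and exponentially to the unique maximizer. When $\bar q$ lies in the affine span $\{\kappa_0+(\kappa_1-\kappa_0)\phi\}$ with coefficients inside the box, this maximizer at $\beta=0$ is $c\equiv\bar q$. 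This is the interior equilibrium, and it requires $\bar q$ to be bounded away from $0$ and $1$. Otherwise the maximizer is the $L^2(p)$ projection, and I read ``calibrated'' in that sense.

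\textbf{$O(\beta)$ perturbation.} The first-order condition of the anchored objective is $\nabla B(\kappa_\beta)=\beta(\kappa_\beta-\kappa_0)$. Strong concavity then gives $\|\kappa_\beta-\kappa^\ast\|\le\beta\|\kappa^\ast-\kappa_0\|/\mu$, so $\|c_\beta-\bar q\|_\infty=O(\beta)$. For deployment utility, let $\delta=\|c_\beta-\bar q\|_\infty$. The thresholded policy differs from Chow's rule only where $|\bar q-t^\ast|\le\delta$, and there each misdecision costs $|g|\le p(1+\lambda)|\bar q-t^\ast|\le p(1+\lambda)\delta$. In general this bounds the loss by $O(\delta)=O(\beta)$. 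Under the margin condition the loss is at most $(1+\lambda)\delta\Pr_p(|\bar q-t^\ast|\le\delta)=O(\delta^2)=O(\beta^2)$. For the remark on transversal crossing, I will use the change of variables $\Pr(|\bar q-t^\ast|\le\epsilon)\le\int_{|\bar q(s)-t^\ast|\le\epsilon}p\,ds\le\|p\|_\infty\cdot 2\epsilon/|\bar q'(s_0)|$ near the single crossing $s_0$.

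\textbf{Anchor variants.} For the function-space anchor $\tfrac\beta2(c-c_0)^2$, the pointwise first-order condition is $-2(c-\bar q)-\beta(c-c_0)=0$. Solving gives $c_\beta=(2\bar q+\beta c_0)/(2+\beta)=\bar q+\tfrac{\beta}{2+\beta}(c_0-\bar q)$. For the Bernoulli-KL anchor $\KL(\Bern(c)\|\Bern(c_0))$, the derivative is $\log\frac{c(1-c_0)}{(1-c)c_0}$. This diverges to $-\infty$ as $c\to0$ and to $+\infty$ as $c\to1$, so the penalty gradient $-\beta\partial_c\KL$ pushes the report inward at both ends. The Brier gradient pushes it inward as well, by Proposition~\ref{prop:vertex}(b) with $\Gamma''\equiv2$. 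The contrast with the action level is Lemma~\ref{lem:klgrad}. Under the same frozen-feature reduction, that lemma leaves $\sgm'(v)$ multiplying both terms, and the collapse of Theorem~\ref{thm:onet} persists.

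\textbf{Main obstacle.} The hardest step is making ``interior attracting'' precise when $\bar q$ is not exactly representable, or when the optimum touches the box. I would first treat the realizable interior case by the strong concavity argument above. For the box case, I would then invoke the projected-flow remark from the Setting: exponential convergence of projected gradient ascent on a strongly concave function over a convex set. The point to verify is that $\mu$ depends only on $\mathrm{Var}_p(\phi)>0$, which keeps all the rates uniform.
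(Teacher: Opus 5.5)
Your proposal is correct and follows essentially the paper's route. The paper also writes $B=\mathrm{const}-\E_p[(c-\bar q)^2]$ and derives the $(2\lambda_{\min}(M)+\beta)$-strong concavity in $(\kappa_0,\kappa_1)$, with $M=\E_p[uu^\top]$ and $u=(1-\phi,\phi)^\top$. It obtains the same bound $\|x^\ast_\beta-x^\ast\|\le\beta\|x^\ast-x_0\|/(2\lambda_{\min}(M))$, solves the quadratic function-space anchor pointwise, and gets the $O(\beta^2)$ loss from the same disagreement-set argument under the margin condition. The only routing difference is that the paper locates the anchored equilibrium via the implicit function theorem at the realizing parameter. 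That is a local statement in the full parameter vector, whereas yours is global but confined to the frozen-feature range parameters; the paper uses your argument for its convergence clock.

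One sub-claim is left short. For the Bernoulli-KL anchor, ``anchor-robust'' needs the optimum's location, not only boundary repulsion. Set $h(c)=2(\bar q-c)-\beta(\mathrm{logit}\,c-\mathrm{logit}\,c_0)$. It is strictly decreasing from $+\infty$ to $-\infty$ on $(0,1)$, and $h(\bar q)$ has the sign of $c_0-\bar q$. So the unique root $c_\beta$ lies between $\bar q$ and $c_0$, which gives $|c_\beta-\bar q|\le\tfrac\beta2\sup_s|\mathrm{logit}\,\bar q-\mathrm{logit}\,c_0|$. This bound is finite when $\bar q$ and $c_0$ stay away from $\{0,1\}$, and it carries the $O(\beta)$ calibration and $O(\beta^2)$ deployment loss over to that anchor.

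Your remaining ``point to verify'' closes in one line. We have $\det M=\E_p[\phi^2]-(\E_p\phi)^2=\mathrm{Var}_p(\phi)$, and $\mathrm{tr}\,M=\E_p[(1-\phi)^2+\phi^2]<1$. Hence $\lambda_{\min}(M)\ge\mathrm{Var}_p(\phi)$, so all rates are uniform given a lower bound on the feature's variance.
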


\looseness=-1 The anchor asymmetry is structural, not elective: a deterministic report has no action distribution, hence no policy KL to anchor, and the proposition shows its conclusion is invariant to the anchor geometry that replaces it. On the action side the anchor is part of the finding. Under the true policy KL that RLHF uses, the action arm deploys nothing at any signal quality (Table~\ref{tab:abs}); replacing that KL by a weight-space proximal term, which production stacks do not use, rescues medium and strong signal in the model (coverage $0.234$ and $0.468$ against rational $0.259$ and $0.469$) but still forfeits all of weak signal ($0.000$ against $0.035$) and buys its sharpness with unbounded parameter norm (Figure~\ref{fig:mechanism}b). The report level needs no anchor surgery and is near-rational at all three signal qualities. One scope note on ``no vanishing factor'': with $c=\kappa_0+(\kappa_1-\kappa_0)\phi$, gradients in the feature parameters $(w,a)$ still carry $\phi'$ and can saturate. What the report removes is a shared factor multiplying \emph{every} coordinate. For any fixed feature the range parameters alone drive the deployed report to calibration at rate $\Theta(1)$, so the output has no reachable region of zero gradient. When $\bar q$ is not representable by the head, the same gradient drives $c$ to its constrained $L^2(p)$ projection, still a unique attractor, interior whenever no box constraint is active, and deployment pays at most $(1+\lambda)\,\E_p|c-\bar q|$ in utility for the approximation error. The experiments meet this as the binding readout constraint (in the live runs the full model trains under the composite score, the report read out by a trained linear head on the prompt state).

\begin{proposition}[The composite objective; pure properness is not enough]\label{prop:composite}
Score an answer (correct with probability $q$) and report $c$ by $r_\alpha=\alpha\,\ind{\mathrm{correct}}+1-(Y-c)^2$. Then $\partial_c\,\E[r_\alpha]=2(q-c)$ and $\partial_q\,\E[r_\alpha]=\alpha-1+2c$.
\textup{(a)} The report is driven to calibration for every $\alpha$. \textup{(b)} Pure Brier ($\alpha=0$) makes the calibrated accuracy gradient $2q-1<0$ for every $q<1/2$: the model gains score by \emph{degrading} accuracy on hard prompts and reporting the degradation honestly; these are the same prompts condition \textup{(B1)} selects. \textup{(c)} Any $\alpha>1$ makes the accuracy gradient strictly positive everywhere; $\alpha=1$ leaves a degenerate stationary point at $q=c=0$. The composite contains no discrete abstain action, every episode returns a $q$-correlated score, and the vertex analysis of Proposition~\ref{prop:vertex}\textup{(b)} is unchanged: no collapse is introduced.
\end{proposition}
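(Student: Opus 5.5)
The plan is a direct computation of the expected composite reward, followed by reading off its two partial derivatives. Conditional on the answer's correctness $\ind{\mathrm{correct}}\sim\Bern(q)$, the Brier target $Y$ coincides with that correctness indicator, so $Y\sim\Bern(q)$. Taking expectations gives
\[
\E[r_\alpha]=\alpha q+1-\big[q(1-c)^2+(1-q)c^2\big]=\alpha q+1-q+2qc-c^2 .
\]
Differentiating in $c$ yields $-2c+2q=2(q-c)$. Differentiating in $q$ yields $\alpha-1+2c$, which is the claimed pair. (Equivalently, this is the Savage form of Proposition~\ref{prop:vertex}(b) with $\Gamma(c)=1-c(1-c)$, plus the linear term $\alpha q$.)

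For (a), I will note that $\E[r_\alpha]$ is strictly concave in $c$ with unique maximizer $c=q$ for every $\alpha$, because $\alpha$ enters only through $\alpha q$, which does not depend on $c$. Calibration is therefore the attracting point of the report coordinate at every fixed $q$.

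For (b) and (c), I substitute the calibrated report $c=q$ into the accuracy gradient, which gives $\alpha-1+2q$.
\begin{itemize}
\item At $\alpha=0$ this is $2q-1$, negative exactly when $q<1/2$, so gradient ascent on $q$ lowers accuracy there. I will then connect this to (B1): $\E[q]<t^\ast$, and for $\lambda\le1$ also $t^\ast\le1/2$, so the region the condition selects is the low-$q$ region. I will state this connection as an interpretation rather than an implication.
\item For $\alpha>1$, I will note that $\alpha-1+2c\ge\alpha-1>0$ for every $c\in[0,1]$, not only at calibration. The accuracy gradient is therefore strictly positive everywhere.
\item At $\alpha=1$ the gradient is $2c$, which vanishes only at $c=0$. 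Combined with $\partial_c=2q=0$ there, this makes $(q,c)=(0,0)$ a stationary point. It is degenerate: $\partial_q$ is zero there but becomes positive as soon as $c>0$, and the Hessian of $\E[r_1]=1+2qc-c^2$ at the origin is singular along $q$.
\end{itemize}

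For the final claims, I will argue that no collapse is introduced by the composite objective.
\begin{itemize}
\item The composite samples no abstain token, so no gate $\sgm(v)$ appears and Lemma~\ref{lem:klgrad}'s throttle is absent.
\item Every episode returns $\alpha\ind{\mathrm{correct}}$ plus a Brier term whose conditional mean depends on $q$, so the score is $q$-correlated.
\item The report-coordinate derivative $2(q-c)$ is exactly the $\Gamma''\equiv2$ case of Proposition~\ref{prop:vertex}(b), which is unaffected by the additive $\alpha q$. The vertex $c\in\{0,1\}$ is therefore repelling whenever $q\in(0,1)$.
\end{itemize}

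The main obstacle is only interpretive: stating precisely what ``these are the same prompts (B1) selects'' asserts. I would phrase it as saying that (B1) concentrates mass where $q$ is below $t^\ast$, which overlaps $q<1/2$, rather than claim an exact identity. I would also treat $q$ as a free coordinate; when $q=q_\theta$ depends on parameters, the chain rule multiplies the stated gradient by $\nabla_\theta q_\theta$ without changing its sign analysis.
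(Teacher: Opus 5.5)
Your proposal is correct and follows the paper's proof essentially step for step: expand the Brier expectation, read off both partials, evaluate $\alpha-1+2q$ on the calibration manifold and at the vertex $q=c=0$. Your global bound $\alpha-1+2c\ge\alpha-1$ on $[0,1]$ proves ``strictly positive everywhere'' slightly more directly than the paper's vertex test.

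Drop one side remark, though. The Hessian of $1+2qc-c^2$ at the origin is $\begin{pmatrix}0&2\\2&-2\end{pmatrix}$, with determinant $-4$, so it is not singular; the origin is a nondegenerate saddle of the unconstrained function. ``Degenerate'' should be read as in the paper: the first-order vertex test $\partial_q\E[r_\alpha]=\alpha-1$ vanishes at $\alpha=1$. Your other justification, that $\partial_q$ is zero at the origin but positive once $c>0$, already supplies this.
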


Each term has one job (the correctness term drives accuracy; the proper term drives the report to calibration, which the deployment threshold reads). Under a correctness-only score the report never moves and deployment has nothing to threshold.

\begin{remark}[Properness is not the operative variable]\label{rem:clip}
Clip the strictly proper Brier rule's \emph{training} signal below $t^\ast$ (constant score, zero gradient there): the dead zone returns, the learned confidence slides onto the threshold, and deployment utility is weakly worse at every anchor strength tested, though the rule remains proper throughout, if no longer strictly. The operative variable is whether the training signal contains a reachable region of identically zero gradient. This also delimits when empirical successes of calibration-reward training \citep{damani2025rlcr} should be expected to transfer: properness must act on a continuous report channel, not a gated action.
\end{remark}

\noindent The design lesson, in one sentence: \emph{train confidence continuously, with a proper score plus a correctness reward at weight $\alpha>1$, and threshold only at deployment.}

\noindent The lesson names a family: any design that keeps answering mandatory in training and learns confidence outside the action channel, including post-hoc calibration \citep{guo2017} and conformal deployment-time abstention \citep{yadkori2024}, inherits the immunity; the composite is the member whose calibration and accuracy directions the propositions above certify jointly.

\section{Experiments}\label{sec:experiments}

\subsection{Exact and Simulated Confirmation}\label{sec:sim}

\emph{Mean-field enumeration.} All entries of Table~\ref{tab:menu} are verified by exact enumeration over group outcomes: the sparse-regime drift$/p$ matches $(2\bar q-1)\sqrt{G-1}$ to three decimals for every $\lambda\in\{1,2,4.5,9\}$, the two exponents converge to $0.5$ and $1.0$ as $G$ grows, and the rollout-axis conservation of Proposition~\ref{prop:conserve} holds to six significant figures.

\emph{Simulations.} \looseness=-1 In a two-type model calibrated favorably to the penalty rule ($q_H,q_L=0.9,0.2$; true gate KL for the action arm, greedy deployment, horizon $T=10^4$, $\beta=10^{-3}$): the collapse law holds (Figure~\ref{fig:onet}: $L$ falls onto one curve across two decades of $\beta$, below the $1/t$ envelope past the transient; the measured GRPO-minus-vanilla slope difference is $-1.013$ against a predicted $-1$); and the deployment outcome is one-sided (Table~\ref{tab:abs}): at strong signal the penalty rule forfeits \emph{all} of a $46.9\%$-coverage optimum while the report-level mechanism captures $97\%$ of its utility.

\begin{figure}[t]\centering
\includegraphics[width=0.97\linewidth]{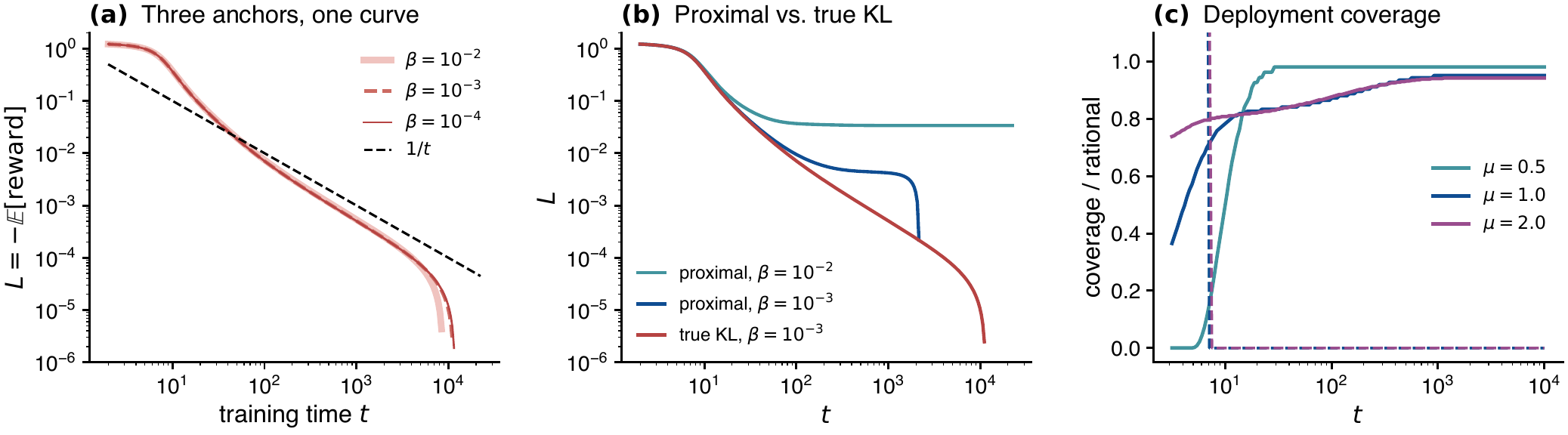}
\caption{(a)~The invariant $L$ for three anchor strengths; the upturn is $t_{\mathrm{tilt}}$. (b)~The proximal surrogate floors the reward; true KL does not. (c)~Greedy coverage at $\lambda=4.5$: report level (solid) near-rational; action level (dashed) deploys nothing.}
\label{fig:onet}
\end{figure}

\begin{table}[t]\centering\small
\setlength{\tabcolsep}{4.5pt}
\begin{tabular}{@{}lcc cc cc@{}}
\toprule
& \multicolumn{2}{c}{optimal} & \multicolumn{2}{c}{action-level} & \multicolumn{2}{c}{report-level}\\
\cmidrule(lr){2-3}\cmidrule(lr){4-5}\cmidrule(lr){6-7}
signal & $U^\ast$ & cov & $U$ & cov & $U$ & cov\\
\midrule
weak   & $0.005$ & $0.035$ & $\mathbf{0.000}$ & $\mathbf{0.000}$ & $0.005$ & $0.034$\\
medium & $0.074$ & $0.259$ & $\mathbf{0.000}$ & $\mathbf{0.000}$ & $0.074$ & $0.246$\\
strong & $0.199$ & $0.469$ & $\mathbf{0.000}$ & $\mathbf{0.000}$ & $0.194$ & $0.442$\\
\bottomrule
\end{tabular}
\caption{Deployment utility and coverage at $\lambda=4.5$ ($\beta=10^{-3}$, $T=10^4$). At $\lambda=1$ this calibration has $\E[q]=0.55>t^\ast$, \textup{(B1)} fails, and neither mechanism collapses.}
\label{tab:abs}
\end{table}

\emph{An empirical anchor.} For base policies with a sharp monotone confidence threshold, a \emph{negative} error-penalized score implies initial drift toward collapse: at least $33$ of more than $36$ public frontier models satisfy this at the deployed $\lambda=1$ \citep{aa2025omniscience}, and no single $\lambda$ is training-safe across models (derivation and scope in the appendix).

\subsection{Language-Model Experiments}\label{sec:llm}

We test the theory on Qwen2.5-1.5B (three seeds per run) and, at scale, Qwen2.5-7B (single seeds) \citep{qwen25}, on short-form QA from TriviaQA and PopQA \citep{joshi2017,mallen2023}, in two tiers. \emph{Tier 1 (theorem-grade, head-only):} the paper's four-parameter gate head is trained on a frozen feature of the base model (the answer/abstain logit difference), with rewards from pre-graded answer banks, so that $L$, $\bar\sgm$, and $\rho$ are computed \emph{exactly} per step; this tier tests the envelope of Theorem~\ref{thm:onet}(i), and sweeps the estimator menu (P1--P5) across $G\in\{64,256\}$ and $\lambda\in\{1,4,9\}$ with tolerances stated in the appendix. \emph{Tier 2 (full-parameter):} fine-tuning of the full model with the unbiased RLOO estimator. The abstain action is a binary first-token gate in the model's own vocabulary, with no additional head: the first generated position is restricted to a designated answer/refusal pair (refusal immediately ends the sequence), so the pair is the entire action space there, and the per-prompt answer probability is read exactly from it in one forward pass. The division of labor is deliberate: Tier 1 tests the estimator menu against exact enumerated references, and Tier 2 tests the mechanism with an estimator that cannot manufacture a collapse. Tier 2 compares action-level abstention training against report-level composite training ($\alpha=2$) on a two-tier prompt mixture satisfying (B1), with two controls: per-prompt capability certified at the capability checkpoints by forced-answer rollouts (Clopper--Pearson lower bound above $t^\ast$), so gate collapse is not confounded with capability loss; and an attribution control that zeroes the low-accuracy tier's reward-channel advantages, keeping its KL channel and paired randomness. A per-checkpoint gate-versus-content KL decomposition monitors the branch-length alternative.

\begin{figure}[t]\centering
\includegraphics[width=0.97\linewidth]{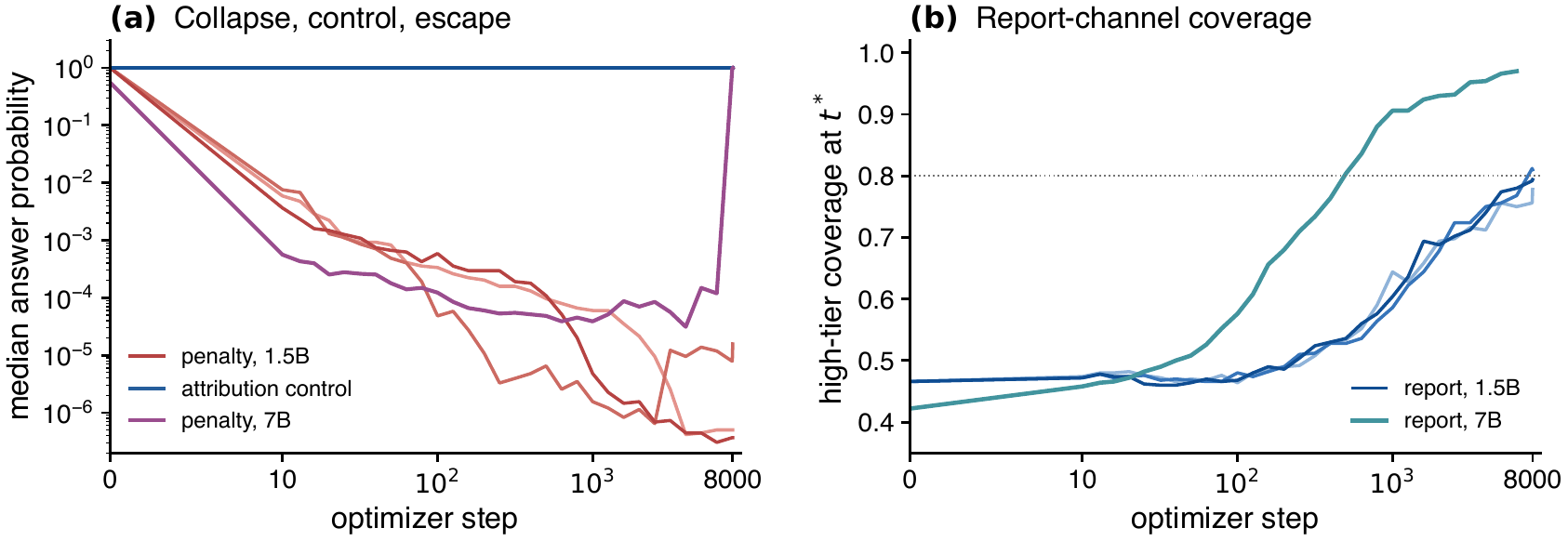}
\caption{Live-model dynamics on the capability-certified cohort. (a)~The penalty rule collapses the median answer probability within ten steps; the attribution control never moves. The 7B run collapses, plateaus, then escapes late, its paired control rising to full answering. (b)~Under the composite objective, coverage rises jointly with accuracy and calibration.}
\label{fig:live}
\end{figure}

\smallskip
\noindent\textbf{Tier-1 outcomes.} \looseness=-1 On the real pool every prediction lands within tolerance: the drift-menu local slopes are $0.507$ (group-normalized; predicted $\tfrac12$) and $0.951$ (mean-baseline; predicted $1$); the knee scales by $3.79$ for $G=64{\to}256$ (predicted ${\approx}4$), landing within $1.1\%$ of the enumerated reference; below the knee the drift is $\lambda$-invariant at every grid point; the first-window mean reward rises in every head-only configuration ($p\le7{\times}10^{-4}$); and the envelope of Theorem~\ref{thm:onet}(i) is falsified in no seed. Two comparisons are structurally unavailable on this pool (the sparse-regime collapse leaves no common eligible window for the trajectory-slope contrast; the bimodal base gate exhausts the dynamic-sampling budget from the first step). The full table is in the appendix.

\smallskip
\noindent\textbf{Tier-2 outcomes.} \looseness=-1 On a held-out $500$-prompt high-accuracy cohort, the median answer probability falls from $1.000$ to ${\le}0.008$ within ten optimizer steps and to ${\le}2{\times}10^{-5}$ by step $8000$ (exact readouts, not sampling estimates), in three of three seeds, while forced-answer correctness on those prompts holds at $0.94$--$0.97$ throughout ($256$ rollouts per prompt; one-sided $95\%$ Clopper--Pearson lower bounds above $t^\ast$ at every capability checkpoint for $461/433/432$ of the $500$). The attribution control isolates the cause: zeroing only the low-accuracy tier's reward-channel advantages, keeping the KL channel and the paired randomness identical, holds the cohort median at $1.000$ at every probe point, in all three paired seeds. At the collapsed terminal state, on a fixed diagnostic batch, the task-gradient norm is $0$ and the gate-KL-anchor gradient norm is below $2{\times}10^{-10}$, against a task-gradient norm of $108$ in the paired control at the same step: the reward gradient and the restoring force die together, and only where the collapse happened (Figure~\ref{fig:live}a).

\smallskip
\noindent\textbf{The report level, and two boundaries measured at scale.} \looseness=-1 Trained on the same mixture with the same estimator (three seeds), the report channel moves opposite to the gate: coverage $0.47\to0.78$--$0.81$, accuracy $0.43\to0.69$, Brier $0.27\to0.22$, jointly (Figure~\ref{fig:live}b), the direction the interior attracting optimum of Proposition~\ref{prop:composite} predicts. One caveat scopes this: at this scale the binding constraint is the readout, not the mechanism (a linear prompt-state probe opens at coverage $0.47$ on a tier whose true correctness exceeds $0.9$), so the 1.5B runs certify the direction of the report channel rather than near-rational deployed utility; the 7B run closes most of that gap. Single-seed runs at 7B sharpen two boundaries. First, the overconfident-base premise is a property of scale and template rather than of the mechanism: offered the abstain option, the 7B base answers only $0.51$ of its answerable questions (against $0.91$ without it), already partway toward Chow's rule. (B1) still holds on the training mixture and the saturation throttle is fully open at a half-answering start, so collapse remains the prediction, and it arrives: the 7B gate collapses by step $10$ and sits near $10^{-4}$ for roughly $6000$ steps, silencing even a base that already abstains on half of what it knows, while the paired control, started from the same gate, climbs to $0.93$ within the same ten steps and to full answering just after. The failed premise removes only the motivating overconfident scenario, not the collapse. Second, the escape is real, and late: the run leaves the plateau between steps $6.3$k and $7.9$k, recovering to median $1.0$ with the task gradient revived ($131$, against $0$ at the 1.5B terminal). The recovery is margin-selective (terminal median $1.0$, mean $0.65$), and the 1.5B horizon shows no escape: both regimes of the escape-time's exponential sensitivity (Proposition~\ref{prop:bounded}) are observed. The 7B report channel ends at coverage $0.98$, accuracy $0.94$, Brier $0.05$.

\section{Related Work}\label{sec:related}
\looseness=-1 Beyond the ternary-reward line already cited, RL-trained abstention includes advantage reweighting \citep{tiar2026}, knowledge-boundary, uncertainty-aware, and clarification rewards \citep{karl2026,ucpo2026,abstainr1_2026,xu2024rejection}, and confidence-shaped rewards \citep{zhang2025confclip,li2025klcrl,yang2025barrel}. Several report, as engineering obstacles, phenomena our theory derives: an ``abstention trap,'' advantage bias, vanishing gradients, refusal erosion \citep{song2025hallucination}. Reported ternary-reward successes \citep{truthrl2025,tiar2026} are reconciled with the theory in the limitations below. \citet{kalai2024calibrated} prove calibrated models must hallucinate and \citet{kalai2025why} prescribe error-penalized scoring; our results are the dynamical caveat to that program. Closest to our repair, \citet{damani2025rlcr} train correctness plus a Brier score on reported confidence. Remark~\ref{rem:clip} and Proposition~\ref{prop:composite} supply why the action-level alternative fails and why the correctness weight must exceed the properness weight. GRPO, its critiques, and its filters \citep{shao2024deepseekmath,liu2025drgrpo,ahmadian2024,yu2025dapo,hu2025reinforcepp} supply the estimator menu. Reward overoptimization \citep{gao2023,skalse2022} is orthogonal, since our score \emph{is} the deployment utility. Chow's rule and selective prediction \citep{chow1970,elyaniv2010,geifman2017,kamath2020,ren2023}, confidence elicitation and calibration \citep{kadavath2022,lin2022,tian2023,kumaran2026,guo2017,kuhn2023}, and conformal deployment-time abstention \citep{yadkori2024} support the deployment half of our design lesson. An extended discussion is in the appendix.

\section{Limitations and Conclusion}\label{sec:limitations}

\looseness=-1\textbf{Reconciling with reported ternary-reward successes.} TruthRL \citep{truthrl2025} and TIAR \citep{tiar2026} reduce hallucination without runaway refusal, and there is no contradiction. Both deploy $\lambda=1$, where the sparse-regime rewriting $\lambda\to\lambda_{\mathrm{eff}}=1$ is the identity and the over-answering band is empty. Collapse requires (B1) on the \emph{training} mix, which curated or retrieval-augmented data plausibly violate. And where (B1) holds, a rising mean reward at typical budgets \emph{is} the plateau's signature, not evidence against it. \textbf{Scope of the lifting.} Theorem~\ref{thm:lift}'s open cases remain open as theory, but not as observation: the Tier-2 runs sit in exactly this class, a fully shared decoder with a binary first-token gate in its own vocabulary, and exhibit the collapse and the joint gradient death. Free-form refusals, with no designated decision position, remain untested. Production estimator features (sampled-token variance, ratio clipping, the $\epsilon$ floor in the normalizer, length normalization) remain outside the analysis. Both live backbones are one model family (Qwen2.5) on short-form QA, with training mixtures constructed to satisfy (B1). \textbf{The report channel must be trainable toward correctness.} Verbalized confidence empirically tracks answer \emph{commitment} more than correctness \citep{kumaran2026}. The composite supervises the report against realized correctness, but our live runs read the report out through a linear probe on the prompt state: the verbalized channel remains untested. With an LLM judge crediting hedged text at rate $\nu$, properness degrades by $O(\nu)$: our claims are scoped to programmatic grading.

\smallskip
\noindent\textbf{Conclusion.} \looseness=-1 Error-penalized abstention is the right prescription for a rational agent and, we prove, a self-defeating one for a KL-anchored gradient learner whose prompts share a bounded readout, with the advantage estimator, not the designed rule, setting the collapse law's exponent, knee, and effective penalty. The remedy is not to stop penalizing errors but to move the penalty to the deployment rule. Scoring rules are designed for rational reporters; the geometry they induce on gradient learners is a separate design surface.

\bibliography{refs_abstention}

\clearpage
\appendix
\setcounter{secnumdepth}{2}

\noindent\emph{This appendix contains the extended related-work discussion, the complete proofs, the full lifting theorem, the complete estimator and report-level analyses, negative results, simulation details, the leaderboard derivation, and the language-model experiment design. Results restated from the main text are cited by their main-text numbers (``Theorem~1 of the main text''); results stated only in this appendix are numbered within their sections (e.g., Theorem~F.1).}
\medskip

\section{Extended Related Work}\label{app:related}
The main text's related-work section compresses the following discussion.

\paragraph{RL for abstention and hallucination.}
Recent work trains abstention with RL: ternary-reward GRPO \citep{truthrl2025}, advantage reweighting for abstention \citep{tiar2026}, knowledge-boundary-aware rewards \citep{karl2026}, uncertainty-decoupled advantages \citep{ucpo2026}, joint abstention and clarification rewards \citep{abstainr1_2026}, reliability rewards for unanswerable questions \citep{xu2025reliability}, RL from knowledge feedback \citep{xu2024rejection}, and confidence-shaped rewards \citep{zhang2025confclip,li2025klcrl,yang2025barrel}. Several of these report, as engineering obstacles, phenomena our theory derives: an ``abstention trap'' \citep{karl2026}, advantage bias under uncertainty rewards \citep{ucpo2026}, vanishing gradients from coarse rewards \citep{zhang2025confclip}, and refusal erosion under standard RL fine-tuning \citep{song2025hallucination}. We offer a common mechanism, rates, and conditions for when such training fails, together with a repair derived from the mechanism rather than patched onto it. \citet{truthrl2025} and \citet{tiar2026} report ternary-reward successes on real models; the main text's limitations reconcile this with the theory.

\paragraph{Vanishing gradients and RL fine-tuning dynamics.}
\citet{mei2020} give softmax policy-gradient rates governed by the optimal action's probability; \citet{razin2024} show reward fine-tuning stalls on inputs where the reward's standard deviation under the policy is small. Our contribution is not that $\sgm'$ can vanish, but the co-death identity (equation~(1) of the main text): the \emph{regularizer} believed to hold the policy near its base carries the same vanishing factor, an interaction absent from both analyses and specific to action-level abstention. Entropy collapse in RL for reasoning \citep{cui2025entropy} and RLHF's diversity loss \citep{kirk2024} are adjacent self-throttling phenomena on a different quantity (policy entropy); our collapse can occur with entropy intact.

\paragraph{Scoring rules and the statistical account of hallucination.}
\citet{kalai2024calibrated} prove calibrated models must hallucinate; \citet{kalai2025why} locate persistence in binary-graded incentives and prescribe error-penalized scoring. Our results are the dynamical caveat to that program: statically optimal scoring rules can be dynamically unreachable for a KL-anchored gradient learner. Closest to our repair, \citet{damani2025rlcr} train with correctness plus a Brier score on reported confidence and prove the objective's optimum is accurate and calibrated; our analysis supplies the missing half: \emph{why} the action-level alternative fails, why properness must act on a continuous report channel rather than a gated action (clipping the training signal below $t^\ast$ reinstates the failure; Remark~1 of the main text), and why the correctness weight must strictly exceed the properness weight (Proposition~9 of the main text).

\paragraph{KL-regularized RL, estimators, and reward hacking.}
KL regularization is ordinarily a stabilizer with provable averaging benefits \citep{geist2019,vieillard2020}; we exhibit a structural exception. GRPO \citep{shao2024deepseekmath}, its critiques \citep{liu2025drgrpo,ahmadian2024}, DAPO \citep{yu2025dapo}, and global normalization \citep{hu2025reinforcepp} supply the estimator menu; reward overoptimization \citep{gao2023,skalse2022} is an orthogonal failure channel (our score \emph{is} the deployment utility, so there is no proxy gap).

\paragraph{Selective prediction and calibration.}
The threshold $t^\ast$ is Chow's rule \citep{chow1970}; risk--coverage foundations are due to \citet{elyaniv2010,geifman2017}, with NLP instantiations in selective QA \citep{kamath2020,ren2023}. Confidence elicitation \citep{kadavath2022,lin2022,tian2023}, its commitment bias \citep{kumaran2026}, calibration \citep{guo2017,kuhn2023}, and conformal deployment-time abstention \citep{yadkori2024} together support the deployment half of our design lesson: the report channel is trainable, and thresholding belongs at inference.

\section{Notation, Conventions, and Observables}\label{app:notation}

Throughout, $\sgm(z)=1/(1+e^{-z})$, $\sgm'=\sgm(1-\sgm)$. Prompts carry correctness $q(x)$; the decision layer sees a signal $s\sim p(s)$ with posterior $\bar q(s)=\E[q\mid s]$; the penalty rule is $(+1,-\lambda,0)$ with margin $m(u)=(1+\lambda)u-\lambda$, threshold $t^\ast=\lambda/(1+\lambda)$, and gain density $g(s)=p(s)\,m(\bar q(s))$. The bounded feature is $\phi(s)=\sgm(ws+a)\in(0,1)$; the action-level gate is $v_\theta(s)=c_4\phi(s)+c_0$, $\theta=(w,a,c_4,c_0)$; the report head is $c_{\theta'}(s)=\kappa_0+(\kappa_1-\kappa_0)\phi(s)$, $\theta'=(w,a,\kappa_0,\kappa_1)$. The anchored objective is
\begin{align}
J_\beta(\theta)&=\int\sgm(v_\theta)\,g\;-\;\beta\,\mathcal K(\theta),\notag\\
\mathcal K(\theta)&:=\int p\,\KL\big(\Bern(\sgm(v_\theta))\,\|\,\Bern(\sgm(v_0))\big).
\label{eq:objS}
\end{align}
Standing condition \textbf{(B1)}: $D_0=-\int g>0$, equivalently $\E[q]<t^\ast$.

\paragraph{Observables.} $L(t):=-J(\theta_t)=\int\sgm(v)(-g)$ (minus the mean training reward); answer rate $P(t)=\int\sgm(v)\,p$; anchor share $\rho(t):=\beta\,\nabla J\cdot\nabla\mathcal K/\|\nabla J\|^2$; weighted gate mean $\bar\sgm(t):=\int\sgm^2(-g)\big/\!\int\sgm(-g)$ (the weight $\sgm(-g)$ is signed; no sign of $\bar\sgm$ is ever assumed). With $r(s):=c_4\phi(s)$: tilt $\Lambda(t)=\E_p[e^{r}]$; effective drift $D_{\mathrm{eff}}(t)=\int e^{r}(-g)$; tilted margin $\bar m_{\mathrm{tilt}}=\E_{\mathrm{tilt}}[m(\bar q)]$ under the measure $\propto e^{r}p$, so that $D_{\mathrm{eff}}=-\Lambda\,\bar m_{\mathrm{tilt}}$ exactly. $Z_L$ denotes the zero of $L$; $Z_m=t_{\mathrm{tilt}}$ the zero of $\bar m_{\mathrm{tilt}}$; $G_\pm:=\int g_\pm$; $\underline D(c_\star):=\min_{[0,c_\star t_{\mathrm{tilt}}]}D_{\mathrm{eff}}$; first moment $M_1:=\int|s||g|\,ds<\infty$. All measured values quoted below are taken at the working calibration (Section~\ref{app:sim}); ``the run'' means $(\mu,\lambda,\beta)=(1,4.5,10^{-3})$.

\paragraph{The parameter-space certificate.} Several derivations use the a-priori condition
\begin{equation}
\beta\,\|\theta_t-\theta_0\|_\infty\ \le\ \tfrac12\,D_{\mathrm{eff}}(t),
\qquad t\le c_\star t_{\mathrm{tilt}},\ c_\star<1,
\label{eq:plateauS}
\end{equation}
which certifies the observable hypothesis $\rho\le\tfrac12$ of Theorem~1 up to the $\Theta$ constant of $\nabla\mathcal K=\Theta(e^{c_0}\|\theta-\theta_0\|_\infty)$. The certificate is conservative by about two orders of magnitude: at $\beta=10^{-3}$ it lapses at $t=7.8\times10^3$ (against $t_{\mathrm{tilt}}=1.15\times10^4$), while the measured share $\rho$ peaks at $0.0080$ at $t\approx2\times10^3$, crosses zero near $t\approx7\times10^3$ as the reward and anchor gradients de-align, and reaches $-0.011$ just before $t_{\mathrm{tilt}}$: the anchor never spends more than $2\%$ of its allowed budget.

\section{Proofs of the Static Results}\label{app:static}

\paragraph{Threshold identity.} $g(s)=p(s)[(1+\lambda)\bar q(s)-\lambda]$ and $p>0$, so $g(s)>0\iff\bar q(s)>\lambda/(1+\lambda)$. Hence the penalty rule's optimal decision region and the calibrated-confidence threshold region coincide, and the two mechanisms are compared on the same target. \qed

\paragraph{Proof of Lemma 1 (shared throttling factor).} For $P=\Bern(\pi)$, $Q=\Bern(\pi_0)$, $\partial_\pi\KL=\mathrm{logit}\,\pi-\mathrm{logit}\,\pi_0$. With $\pi=\sgm(v)$, $\mathrm{logit}\,\pi=v$ and $d\pi/dv=\sgm'(v)$, so $\tfrac{d}{dv}\KL=\sgm'(v)(v-v_0)$; the gradient formula follows by differentiating \eqref{eq:objS} under the integral. \qed

\paragraph{Proof of Proposition 2 (the anchor cannot exclude the vertex).} Bernoulli KL to a fixed base is bounded: $\KL(\Bern(\sgm(v))\|\Bern(\sgm(v_0)))\le\max\{\log\frac1{\sgm(v_0)},\log\frac1{1-\sgm(v_0)}\}$ for every $v$. At the all-abstain vertex the cost is exactly $\overline{\KL}_\perp=\E_p[\log\frac1{1-\sgm(v_0)}]$, so $J_\beta(\text{all-abstain})-J_\beta(\text{base})=L(0)-\beta\overline{\KL}_\perp$, positive iff $\beta<\beta_{\mathrm{crit}}=L(0)/\overline{\KL}_\perp$. Since $\overline{\KL}_\perp\approx\log\frac1{1-p_{\mathrm{ans}}}$ grows only logarithmically in the base answer rate $p_{\mathrm{ans}}$, the threshold decays slowly: on the leaderboard-calibrated model of Section~\ref{app:empirical}, $\beta_{\mathrm{crit}}=0.139,\,0.116,\,0.089,\,0.060,\,0.041$ at $p_{\mathrm{ans}}=80\%,88\%,95\%,99\%,99.9\%$, above typical RLHF coefficients throughout (at $p_{\mathrm{ans}}=99.9\%$, $\beta=0.05$ the static argument alone no longer suffices and one must appeal to the dynamics). Two glosses. (i) The statement is that the anchor cannot \emph{exclude} the vertex from the reachable set; that the vertex is the \emph{destination} is the dynamical claim of Theorem~1. (ii) The comparison is not a dichotomy at the global level: under (B1) the abstain vertex (reward $0$) beats the base ($-D_0$), but the \emph{confidence} vertex $c\equiv0$ also beats an overconfident base under the Brier score. Only the local gradient at the vertex separates the mechanisms (Proposition~3). \qed

\paragraph{Proof of Proposition 3 (local dichotomy), with the tokenization caveat.} (a) $\partial_v\E[r]=\partial_v[\sgm(v)m(\bar q)]=\sgm'(v)m(\bar q)\to0$ as $v\to-\infty$, with $\sign=\sign m$. (b) By the Savage representation of a strictly proper score, $\E[S(c)]=\Gamma(c)+\Gamma'(c)(\bar q-c)$ with $\Gamma$ strictly convex, so $\partial_c\E[S]=\Gamma''(c)(\bar q-c)$; integrating ($\Gamma'$ strictly increasing) the expected score strictly increases from $c=0$ toward $\bar q$, so the vertex is never optimal and has no basin; if additionally $\Gamma''\ge \zeta>0$ near the boundary, the vertex gradient is $\ge \zeta\bar q>0$. The lower bound is not automatic for every proper score: $\Gamma(c)=c^4$ is strictly proper yet has $\partial_c\E[S]=12c^2(\bar q-c)=0$ at $c=0$, a first-order-degenerate (but still non-attracting) vertex. \emph{Caveat.} When the confidence is a sampled token, the vertex gradient in \emph{logit} space carries a softmax factor and vanishes degenerately; this is harmless in practice only because language models initialize overconfident, not underconfident \citep{guo2017,tian2023}, an empirical fact rather than a theorem. \qed

\paragraph{Proof of Corollary 1 (the regularizer class), with the floor.} $\partial_vR=(\partial\pi/\partial v)\,\partial_\pi R=\pi(1-\pi)\partial_\pi R\to0$ whenever $\pi\,\partial_\pi R\to0$. Membership: entropy has $\partial_\pi H=\log\frac{1-\pi}{\pi}$ and the forward anchor $\partial_\pi\KL=\mathrm{logit}\,\pi-\mathrm{logit}\,\pi_0$, both logarithmic, hence in the class; the reverse anchor $\KL(\pi_0\|\pi_\theta)$ has $\partial_vR=\pi-\pi_0\to-\pi_0\neq0$ and diverging vertex cost, hence outside; so is $R=-\log\pi$ ($\partial_vR\to-1$). The class fixes no rate: $\partial_\pi R=1/(\pi\log(1/\pi))$ qualifies yet gives $\partial_vR\asymp1/\log(1/\pi)$; entropy and forward KL themselves vanish like $\Theta(\pi\log(1/\pi))$, one logarithm slower than the reward's $\Theta(\pi)$, and that logarithm is the $|v-v_0|$ in the floor computation. (i) \emph{Entropy bonuses do not help}: $dH/dv=-v\,\sgm'(v)$ carries the same factor as the KL gradient and is of the same order on the plateau. (ii) \emph{The floor is exponentially small}: balancing $\sgm'(v)D_{\mathrm{eff}}$ against $\beta\sgm'(v)|v-v_0|$ gives $|c_0|\asymp D_{\mathrm{eff}}/\beta$ and $P_{\mathrm{floor}}\asymp e^{-D_{\mathrm{eff}}/\beta}$ (identically $e^{-D_{\mathrm{eff}}/\tau}$ for an entropy bonus of weight $\tau$), reached only after time $e^{\Theta(1/\beta)}$ in the frozen-readout bias dynamics; numerically, freezing the readout gain, the predicted stationary points $c_0=v_0-D_0/\beta$ (KL) and $c_0=-D_0/\tau$ (entropy) are reproduced to three decimals. The reverse KL floors at $P=\Theta(\beta)$ (balance $e^{v}|m|=\beta\pi_0$), one logarithm below the proximal floor of Proposition~4. \qed

\paragraph{Proof of Proposition 1 (tabular policies do not collapse).} With one logit per prompt the flow is $\dot v(s)=\sgm'(v(s))[m(\bar q(s))-\beta(v(s)-v_0(s))]$; since $\sgm'>0$, stationary points solve $m=\beta(v-v_0)$, i.e.\ $v^\ast(s)=v_0(s)+m(\bar q(s))/\beta$, and the bracket is decreasing in $v$, so $v^\ast$ is attracting. $\sign v^\ast=\sign m$ iff $m(m+\beta v_0)>0$, which fails only on a band of width $O(\beta)$ in $m$ around zero; a uniform margin $|m|\ge\gamma>\beta\|v_0\|_\infty$ removes it, and for every prompt with $m\neq0$ the decision converges to Chow's rule as $\beta\downarrow0$. Numerically, at $\beta=10^{-3}$ the tabular terminal policy agrees with Chow's rule on $100\%$ of the evaluation grid (the misclassified band, of width $\approx2\times10^{-3}$ in $m$, falls between grid points) and attains coverage $0.259$ against a rational $0.259$. \qed

\section{Proof of Theorem 1 (Collapse Law)}\label{app:onet}

\paragraph{(The identity.)} For a softmax policy and reward linear in the policy, $\partial J/\partial z_j=\pi_j(r_j-J)$ exactly. On a prompt with $m<0$, the abstain action is optimal ($r_{\mathrm{abs}}=0=J^\ast$), so with gap $\mathcal G=\sgm(v)|m|$, $|\partial_vJ|=(1-\sgm(v))\,\mathcal G$. In the population, $\partial_{c_0}J=\int\sgm'(v)g=\int\sgm(1-\sgm)g$, while $L=\int\sgm(-g)$; with $\bar\sgm$ as in Section~\ref{app:notation}, $\partial_{c_0}J=-(1-\bar\sgm)L$ \emph{exactly}. Measured $|\partial_{c_0}J|/L=0.971,\,0.995,\,0.9987,\,0.9997$ at $t=30,10^2,3\times10^2,10^3$. The exponent below is a property of the \emph{parameterization class}: under gradient flow in the logit (or any fixed linear reparameterization) it is $-1$; a nonlinear reparameterization can change it (flowing $\theta>0$ with $v=-\log\theta$ gives $\mathcal G\asymp t^{-1/3}$). Everything below is at the standard logit parameterization.

\paragraph{($\bar\sgm$ control.)} The weight $\sgm(-g)$ is signed, so $\bar\sgm\in(0,1)$ is not automatic (measured, $\bar\sgm$ touches $-0.0008$ at $t=6\times10^3$; near $Z_L$ it can diverge, which $t\le c_\star t_{\mathrm{tilt}}$ excludes). Since $|g|=-g+2g_+$,
\[
|\bar\sgm|\ \le\ \sup_s\sgm(v(s))\cdot\Big(1+\frac{2\int\sgm g_+}{L}\Big).
\]
Work in the run's gauge $c_4\le0$, so $e^{r}\le1$ and $\sup_s\sgm(v)\le e^{c_0}$. From $\sgm(v)\ge e^{v}(1-e^{\sup_sv})$, $L\ge e^{c_0}(D_{\mathrm{eff}}-e^{c_0}\!\int e^{r}g_-)\ge\tfrac12e^{c_0}D_{\mathrm{eff}}$ once $e^{c_0}\le\underline D(c_\star)/(2G_-)$, which the bias envelope below grants for $t\ge T_0(c_\star):=2C_bG_-/\underline D(c_\star)^2$ ($C_b:=2/(1-\bar\Lambda)$). Then $\int\sgm g_+/L\le2G_+/\underline D(c_\star)$, so on $T_0\le t\le c_\star t_{\mathrm{tilt}}$,
\[
|\bar\sgm(t)|\ \le\ \Big(1+\frac{4G_+}{\underline D(c_\star)}\Big)e^{c_0(t)}\ =\ O_{c_\star}(1/t).
\]
No sign of $\bar\sgm$ is needed; a negative excursion only speeds the certified decay.

\paragraph{(Gain growth: $|c_4|=O_{c_\star}(\log t)$, derived.)} Under \eqref{eq:plateauS} and the sign input $c_4\le0$ on the plateau (only the sign is used), $\dot c_4=\int\sgm'(v)\phi[g-\beta p(v-v_0)]$. Pointwise $|v-v_0|\le2\|\theta-\theta_0\|_\infty+|c_4(0)|$, so \eqref{eq:plateauS} gives $\beta\|v-v_0\|_\infty\le D_{\mathrm{eff}}+\beta|c_4(0)|$; with $\phi\le1$, $\sgm'\le e^{v}$, $e^{r}\le1$:
\[
|\dot c_4|\ \le\ \big(G_++2G_-+\beta|c_4(0)|\big)\,e^{c_0}.
\]
For the bias envelope, $\dot c_0\le e^{c_0}[-D_{\mathrm{eff}}+e^{c_0}G_-+(D_{\mathrm{eff}}+\beta|c_4(0)|)\Lambda]$; once the tilt input $\Lambda(\tau)\le\bar\Lambda<1$ holds past the transient (measured $\Lambda\le0.3$ from $t\ge10^2$; $\Lambda<1$ is automatic for $c_4<0$, a quantitative bound is not) and the small-$\beta$ condition
\begin{equation}
\beta\,|c_4(0)|\,\bar\Lambda\ \le\ \tfrac14\,(1-\bar\Lambda)\,\underline D(c_\star)
\label{eq:smallbeta}
\end{equation}
holds (a trajectory-level hypothesis \emph{not} implied by \eqref{eq:plateauS}; on the run it holds with factor $13$--$84$ to spare), $e^{-c_0}$ grows at rate $\gtrsim(1-\bar\Lambda)\underline D(c_\star)/2$ past a transient, whence $e^{c_0(\tau)}\le\min\{e^{c_0(0)},C_b/(\tau\underline D(c_\star))\}$. Integrating $|\dot c_4|$ against this envelope gives $|c_4(t)|\le|c_4(0)|+O_{c_\star}(\log t)$. The same bounds control the anchor gradient: $\beta|\partial_{c_0}\mathcal K|,\beta|\partial_{c_4}\mathcal K|\le(D_{\mathrm{eff}}+\beta|c_4(0)|)e^{c_0}=O_{c_\star}(L)$, while $\beta|\partial_w\mathcal K|,\beta|\partial_a\mathcal K|$ carry an extra $|c_4|$ and an $s$-factor bounded in first-moment form ($\phi'\le\tfrac14$; $\sup_s|\phi'(ws+a)s|$ can blow up as $w\to0$, $\E_p|s|$ cannot), so $\beta\|\nabla\mathcal K\|=O_{c_\star}((1+|c_4|)L)$. The certified constants are far from sharp; the measured fit is $|c_4|=0.469\log t-0.390$ with $R^2=0.9998$ (against $R^2=0.9739$ for the best power law).

\paragraph{(Upper bound and envelope.)} $\dot L=-\nabla J\cdot\nabla J_\beta=-\|\nabla J\|^2+\beta\nabla J\cdot\nabla\mathcal K$. Under the anchor-share hypothesis $\rho\le\tfrac12$ (for which \eqref{eq:plateauS} is the up-to-$\Theta$ certificate),
\[
\dot L\ \le\ -\tfrac12\|\nabla J\|^2\ \le\ -\tfrac12(\partial_{c_0}J)^2\ =\ -\tfrac12(1-\bar\sgm)^2L^2,
\]
so $1/L(t)\ge1/L(0)+\tfrac12\int_0^t(1-\bar\sgm)^2d\tau$: the envelope, needing nothing beyond $\rho\le\tfrac12$ (at $\beta=0$ the factor $\tfrac12$ disappears). The upgrade to $L=O(1/t)$ needs $\liminf_t\tfrac1t\int_0^t(1-\bar\sgm)^2>0$, supplied by $\bar\sgm$ control from \eqref{eq:plateauS} with the sign, tilt, and small-$\beta$ inputs. No pointwise $L\le((1-\bar\sgm)^2t)^{-1}$ follows (that would hold the time-varying $\bar\sgm$ at its smallest value; measured $Lt=1.26>1$ at $t=30$ is the counterexample); the finite-$t$ bound keeps the intercept $1/L(0)$.

\paragraph{(Lower bound.)} $\partial_{c_0}J,\partial_{c_4}J=O(L)$ ($\phi\in[0,1]$), but $\partial_wJ=\int\sgm'(v)c_4\phi'(ws+a)s\,g$ is bounded with $\phi'\le\tfrac14$ and $M_1<\infty$: $|\partial_wJ|\le\tfrac14|c_4|e^{c_0}M_1=O_{c_\star}(|c_4|L)$, likewise $\partial_aJ$. With $|c_4|=O(\log t)$ (gain growth) one gets $\|\nabla J\|^2=O(L^2\log^2t)$, and with $\beta\|\nabla\mathcal K\|=O_{c_\star}((1+|c_4|)L)$ the cross term is controlled in absolute value \emph{whatever its sign}: $|\dot L|\le\|\nabla J\|^2+\beta\|\nabla J\|\|\nabla\mathcal K\|=O(L^2\log^2t)$, so $L=\Omega(1/(t\log^2t))$. Both bounds require $t\le c_\star t_{\mathrm{tilt}}$, and every constant degrades as $c_\star\to1$ (the two lobes of $g$ cancel in $L$ at $t_{\mathrm{tilt}}$ but not in $\int\sgm'|g|$).

\paragraph{(Observables, and the endpoint.)} $D_{\mathrm{eff}}=-\Lambda\bar m_{\mathrm{tilt}}$ by definition; $L=e^{c_0}D_{\mathrm{eff}}(1+\varepsilon_L)$ and $P=\Lambda e^{c_0}(1+\varepsilon_P)$ with the \emph{exact} error functions
\[
\varepsilon_L=-\frac{\int e^{r}\sgm(v)(-g)}{D_{\mathrm{eff}}},\qquad
\varepsilon_P=-\frac{\int e^{r}\sgm(v)\,p}{\Lambda},
\]
satisfying $|\varepsilon_P|\le e^{\bar v}$ and $|\varepsilon_L|\le e^{\bar v}(1+2G_+/\underline D(c_\star))$ on the window ($\bar v:=\sup_sv=c_0$ in the gauge $c_4\le0$). Hence
\[
P=\frac{L}{|\bar m_{\mathrm{tilt}}|}\cdot\frac{1+\varepsilon_P}{1+\varepsilon_L}
\]
exactly away from $Z_L\cup Z_m$, and differentiating the defining integrals term by term gives $t(|\dot\varepsilon_L|+|\dot\varepsilon_P|)=O(e^{\bar v}\log t)=o(1)$: the slope identity
$\mathrm{d}\log P/\mathrm{d}\log t=\mathrm{d}\log L/\mathrm{d}\log t-\mathrm{d}\log|\bar m_{\mathrm{tilt}}|/\mathrm{d}\log t+o(1)$, which the run reproduces to three decimals in every instrumented window. Because $\Lambda\in[t^{-\Theta(1)},t^{\Theta(1)}]$ under $|c_4|=\Theta(\log t)$, the aggregate exponent of $P$ is \emph{not} determined by the theorem while that of $L$ is.

\emph{Transversality, reduced to a checkable condition.} $D_{\mathrm{eff}}$ does not depend on $c_0$; for $i\in\{w,a,c_4\}$, $\partial_iD_{\mathrm{eff}}=-\int(\partial_iv)e^{r}g=:-A_i$, while the flow moves $\dot\theta_i=e^{c_0}(A_i+e_i)$ with explicit plateau-approximation errors $e_i$. The chain rule gives
\[
\dot D_{\mathrm{eff}}\;=\;-\,e^{c_0}\big(A_w^2+A_a^2+A_{c_4}^2\big)\;-\;e^{c_0}\sum_iA_ie_i:
\]
minus a sum of squares. Consequently (i) $D_{\mathrm{eff}}$ is monotone decreasing on the plateau up to the stated errors (verified: exact $\dot D_{\mathrm{eff}}<0$ on all of $[30,1.2\times10^4]$, matching $-e^{c_0}\sum A_i^2$ to $0.99$--$1.01$); (ii) at a zero $Z$ of $D_{\mathrm{eff}}$, single-crossing of $g$ in $\phi$ (automatic for strictly monotone $\bar q$ and $w\neq0$) gives $|A_{c_4}(Z)|=\int e^{r}|\phi-\phi^\ast||g|=:\delta(Z)>0$ (measured $\delta(Z_m)=0.0299$ to five decimals). Provided the \emph{error-dominance condition} $\sum_i|A_ie_i|\le\tfrac12\sum_iA_i^2$ holds near $Z$, a smallness condition that must be checked separately (it is \emph{not} implied by \eqref{eq:plateauS}, whose right side vanishes at $Z_m$; measured ratio $1.01$), the zero is transversal with rate carrying its own $e^{c_0}$, and combining with the exact identity $|D_{\mathrm{eff}}(Z_L)|\le e^{c_0}\int e^{2r}|g|$ at $Z_L$, the two $e^{c_0}$ factors cancel:
\[
|Z_L-Z_m|\ \le\ \frac{2\int e^{2r}|g|}{\delta(Z_m)^2}\,\big(1+o(1)\big)\;=\;O(1)
\]
in \emph{absolute} time units. Measured: first-order prediction $|D_{\mathrm{eff}}(Z_L)|/|\dot D_{\mathrm{eff}}(Z_m)|=11.5$ time units against a measured gap $Z_L-Z_m=11.5$ ($Z_m=11452.3$, $Z_L=11463.8$: $0.10\%$). \qed

\paragraph{The four zeros, and which are observable.} The bias obeys $\dot c_0=e^{c_0}\Lambda\bar m_{\mathrm{tilt}}+\beta e^{c_0}\Theta(|c_0|)$, the anchor correction positive. Distinguish: $Z_L$ (zero of $L$; directly observable, no gauge); $Z_m$ (zero of $\bar m_{\mathrm{tilt}}$; observable by stratifying on confidence; $0.10\%$ from $Z_L$ on the run); $Z_A$ (zero of $\dot c_0$; \emph{not} observable, since $c_0$ is a coordinate of our parameterization); $Z_C$ (zero of $\mathrm{d}\log P/\mathrm{d}\log t$). The anchor makes $Z_A<Z_m$, with measured gaps $Z_m-Z_A=2285,\,981,\,355,\,96,\,17$ at $\beta=10^{-2},3\times10^{-3},10^{-3},3\times10^{-4},10^{-4}$; the gaps are \emph{not} proportional to $\beta$ (ratios vary by a factor of two), so no proportionality is claimed. $Z_L$ itself moves with $\beta$: $8745,\,10709,\,11464,\,11756,\,11842$ over the same grid: a $26\%$ shift at $\beta=10^{-2}$ but $3.2\%$ between $10^{-3}$ and $10^{-4}$. The defensible statement is: \emph{the mean reward crosses zero at a time that stabilizes as $\beta\downarrow0$, to within $5\%$ for $\beta\le10^{-3}$.}

\paragraph{Gauge-dependence, and the coincidence of zeros.} The decomposition of the $P$-slope into a bias term and a tilt term is gauge-dependent (the reparameterization of Section~\ref{app:bounded} moves mass between $c_0$ and $c_4$ while fixing $v(\cdot)$ pointwise); the observables $L$, $P$, $\bar m_{\mathrm{tilt}}$ are not, so the observed ``compensation'' between the two terms needs no proof. The falsifiable endpoint prediction is a \emph{coincidence of zeros}, not the value of any exponent: over successive plateau windows the aggregate slope $\mathrm d\log P/\mathrm d\log t$ runs $-1.016,\,-0.838,\,-0.618,\,-0.270,\,+0.068$ while $\bar m_{\mathrm{tilt}}$ runs $-1.318\to+0.013$. The slope flattens and flips exactly as $\bar m_{\mathrm{tilt}}$ does, and it is the coincidence of the two zero crossings that is the prediction.

\paragraph{Where the plateau ends, mechanically.} The tilt $e^{c_4\phi}$ up-weights the region of $\phi$ selected by the sign of $c_4$, and the trained head orders $\phi$ so that this is the \emph{profitable} region (on the run, $w<0$ and $c_4<0$: the mean of $\phi$ is $0.282$ on $g_+$ against $0.915$ on $g_-$, and $e^{-|c_4|\phi}$ up-weights exactly the former). Hence $\bar m_{\mathrm{tilt}}$ increases with $|c_4|$ and crosses zero at a value that is a constant of the problem; because $|c_4|$ grows like $\log t$, $t_{\mathrm{tilt}}$ is finite and, in the limit $\beta\downarrow0$, $\beta$-independent. The model escapes by \emph{re-weighting} its answering toward the profitable prompts, which is also why the escape is invisible to greedy deployment until much later (Section~\ref{app:bounded}).

\paragraph{A certified slope band with no unknown constants.} Put $R(t):=\|\nabla J\|^2/(\partial_{c_0}J)^2\ge1$. Exactly,
\[
-R\,(1-\bar\sgm)^2\,tL\ \le\ \frac{\mathrm{d}\log L}{\mathrm{d}\log t}\ \le\ -\tfrac12(1-\bar\sgm)^2\,tL,
\]
the left edge an edge only where $\rho\ge0$. All three factors are measurable; at $t=10^2,3\times10^2,10^3,3\times10^3$ the lower edges are $-1.315,-1.132,-1.095,-1.239$ against exact slopes $-1.310,-1.125,-1.086,-1.229$. The gap between slope and edge, in units of the edge, \emph{is} the anchor share $\rho(t)$ (identically), measured at $0.4$--$0.8\%$ across the windows and matching the independently computed $\rho$ to five decimals: this is the one place the anchor's effect is measured rather than bounded. The band \emph{widens} like $\log^2t$: $R-1=\Theta(c_4^2)$, which is the lower bound's polylog in observable form, an independent and fit-free prediction.

\section{Bounded Readouts and the Two Escapes}\label{app:bounded}

\paragraph{Proof of Proposition 5.} $\sup_sv(s)=\max(c_4,0)+c_0$ since $\phi$ ranges over $(0,1)$; greedy decoding answers a positive-measure set iff this exceeds $0$. The scalar is gauge-invariant: the reparameterization $(w,a,c_4,c_0)\mapsto(-w,-a,-c_4,c_0+c_4)$ leaves $v(\cdot)$ pointwise unchanged while flipping the sign of $c_4$. For an unbounded feature $v=us+b$ with $u>0$ and full-support $s$, greedy answers the tail $\{s>-b/u\}$ for every finite $b$. For the tilt: escape requires $\int e^{r}g_+$ to outgrow $\int e^{r}g_-$; with $r=c_4\phi$, $\phi\in[0,1]$, $e^{r}\le e^{|c_4|}$ uniformly, so under gain growth ($|c_4|=O_{c_\star}(\log t)$) the available tilt is polynomial in $t$ and reaching gain $M$ takes time $e^{\Omega(M)}$; for a Gaussian feature the tilt $e^{u^2/2}$ is super-polynomial at the same gain growth. \qed

\paragraph{Two escapes.} The \emph{sampling} policy escapes at $t_{\mathrm{tilt}}$, where $D_{\mathrm{eff}}$ crosses zero; \emph{greedy} coverage recovers only when $\sup_sv(s)$ crosses zero, strictly later (Figure~\ref{sfig:bounded}). On the run $c_4<0$ throughout ($-1.78$ at $t=10^2$, $-19.5$ at $t=10^5$), so greedy recovery waits for $c_0$ itself to turn positive, between $t=3\times10^4$ and $5\times10^4$ at the working point, \emph{after} the sampling escape at $t_{\mathrm{tilt}}\approx1.15\times10^4$. Exit times are erratic in the signal quality because they depend on how far past $t_{\mathrm{tilt}}$ the recovery of $c_0$ must run: at $\beta=10^{-3}$ the first time greedy coverage is positive is $\le10$ for $\lambda\le3.5$ at every $\mu$; at $\lambda=4$ it is $2.1\times10^4$ ($\mu{=}1$), $7.4\times10^3$ ($\mu{=}2$), ${>}10^7$ ($\mu{=}0.5$); at $\lambda=4.5$, $3.3\times10^4$ ($\mu{=}1$) and ${>}10^7$ for $\mu\in\{0.5,2\}$. Exit times vary by less than a factor $2$ across $\beta\in[10^{-4},10^{-2}]$, as the asymptotic $\beta$-independence of $t_{\mathrm{tilt}}$ predicts; the non-monotonicity in $\mu$ is real (a ridge in the exit-time landscape, Figure~\ref{sfig:exit}), spanning more than five orders of magnitude across a band of $\lambda$ only two units wide. \emph{The plateau is a slow manifold, not a trap; its length relative to any realistic training budget, not absorption, is the claim.}

\begin{figure}[t]\centering
\includegraphics[width=0.95\linewidth]{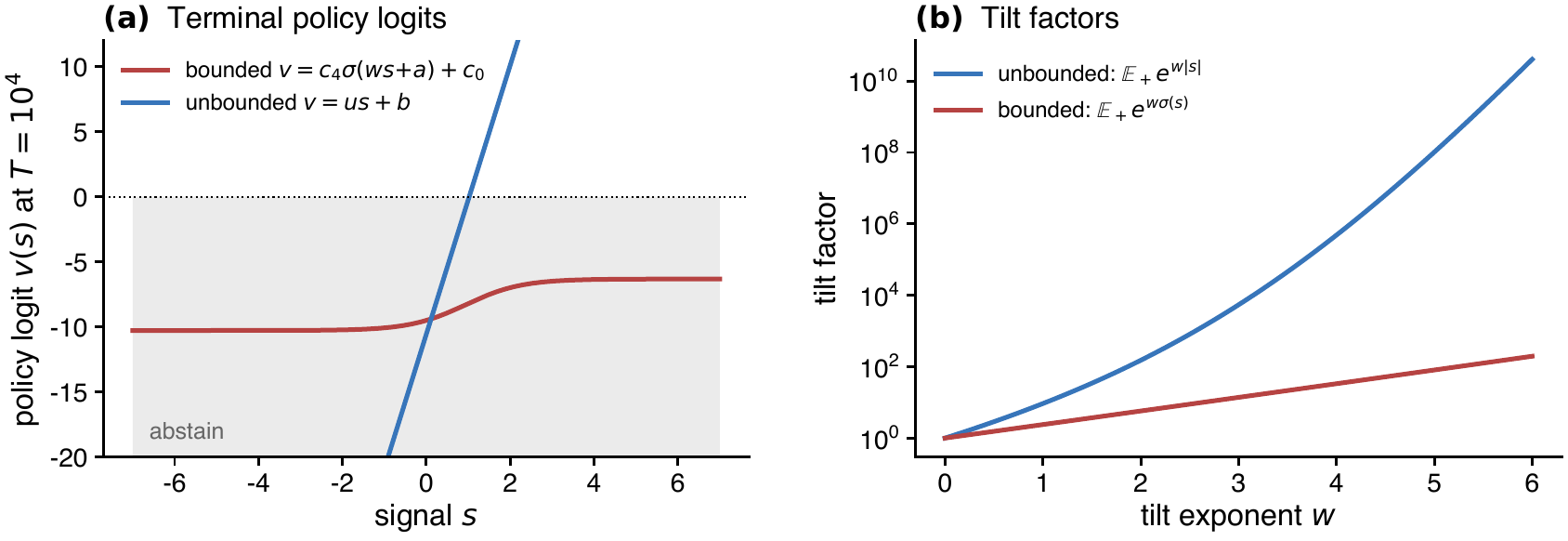}
\caption{(a)~Terminal policy logits at $T=10^4$, $\mu=1$, $\lambda=4.5$: with a bounded readout $v(s)<0$ for every $s$ (greedy abstains on everything); with an unbounded feature the logit crosses zero and greedy answers a tail. (b)~The tilt factor required for escape diverges for unbounded features and is capped for bounded ones.}
\label{sfig:bounded}
\end{figure}

\begin{figure}[t]\centering
\includegraphics[width=0.97\linewidth]{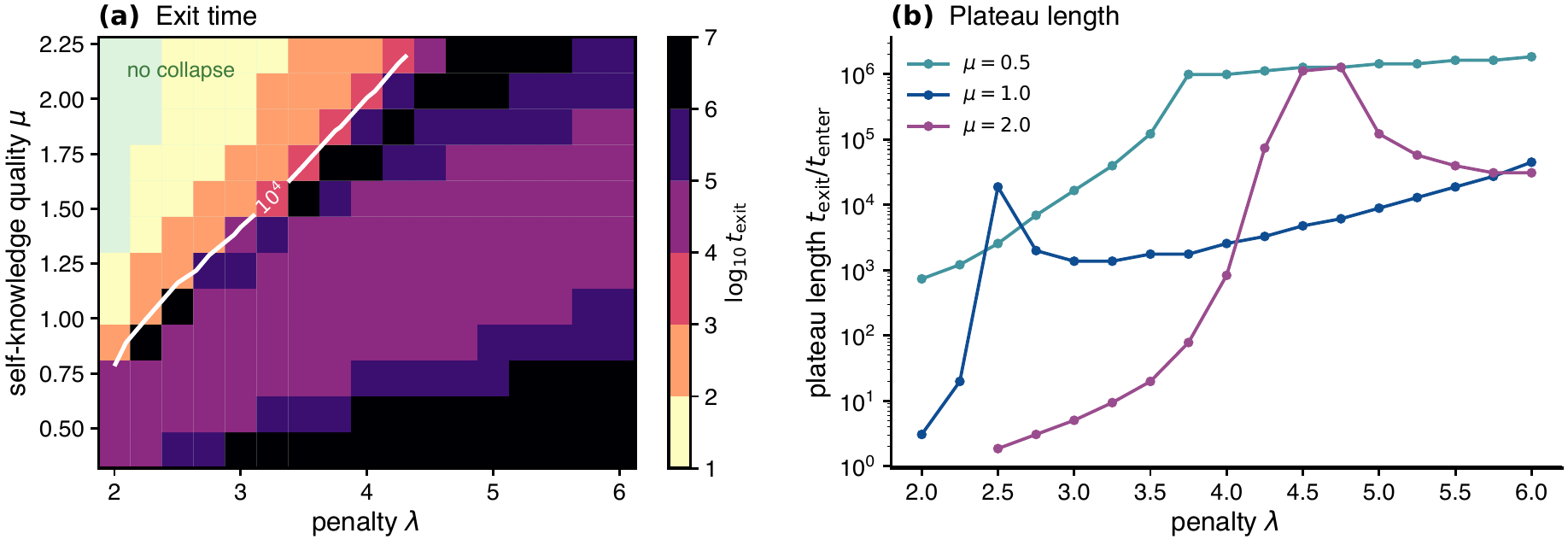}
\caption{A horizon-free view: $t_{\mathrm{enter}}$ is the first time greedy coverage falls below $2\%$ of rational, $t_{\mathrm{exit}}$ the first time thereafter it exceeds $20\%$. (a)~$\log_{10}t_{\mathrm{exit}}$ under true KL with a bounded head at $\beta=10^{-3}$; green cells never collapse. (b)~Plateau length $t_{\mathrm{exit}}/t_{\mathrm{enter}}$ explodes past a critical $\lambda$ that increases with signal quality $\mu$, non-monotonically; the ridge is displayed rather than smoothed.}
\label{sfig:exit}
\end{figure}

\section{The Lifting Theorem in Full}\label{app:lift}

\begin{stheorem}[Gate-to-sequence lifting; full statement]\label{sthm:lift}
Let $\pi_\theta(y\mid x)$ be a sequence policy, $A$ the refusal event, $\pi_{\mathrm{ans}}(x)=1-\pi_\theta(A\mid x)$, and anchor the objective with the full sequence KL. Write $K_{\mathrm{ans}}(x)=\KL(\pi_\theta(\cdot\mid x,A^c)\|\pi_0(\cdot\mid x,A^c))$, $K_{\mathrm{abs}}(x)$ likewise for the refusal branch, and $q_\theta(x)$ for the answer-conditional correctness.

\textbf{Part I (exact identities; any mixture parameterization).} For every policy,
\[
\KL_{\mathrm{seq}}(x)=\KL_{\mathrm{gate}}(x)+\pi_{\mathrm{ans}}K_{\mathrm{ans}}+\pi_{\mathrm{abs}}K_{\mathrm{abs}},
\]
where $\KL_{\mathrm{gate}}=\KL(\Bern(\pi_{\mathrm{ans}})\|\Bern(\pi^0_{\mathrm{ans}}))$.
If the gate has a logit $v$ on which the branch conditionals do not depend, then
\[
\partial_vJ_\beta=\sgm'(v)\big[m(q_\theta)-\beta(v-v_0)-\beta(K_{\mathrm{ans}}-K_{\mathrm{abs}})\big]:
\]
every term carries $\sgm'(v)$; Lemma~1, Proposition~2 (the all-abstain state with refusals at base costs exactly $\overline{\KL}_\perp$ under the sequence KL, so $\beta_{\mathrm{crit}}$ is unchanged) and Corollary~1 lift verbatim with $m_{\mathrm{eff}}=m(q_\theta)-\beta(K_{\mathrm{ans}}-K_{\mathrm{abs}})$. The anchor taxes only the branch taken: once answer content has drifted ($K_{\mathrm{ans}}>K_{\mathrm{abs}}$), its gate component pushes \emph{toward} abstention: abstaining hides content drift from the KL. (In the closed model this shift is second-order: $Z_L$ moves by $<0.1$ time units when the $K_{\mathrm{ans}}$ term is switched off; it becomes first-order when content drifts exogenously.)

\textbf{Part II (branch-separable content).} Assume $\psi=(\psi_{\mathrm{ans}},\psi_{\mathrm{abs}})$ with $q_\theta,K_{\mathrm{ans}}$ depending only on $\psi_{\mathrm{ans}}$ and $K_{\mathrm{abs}}$ only on $\psi_{\mathrm{abs}}$; refusals initialized at base; the answer branch within bounded KL of base, $\overline K_0:=\sup_xK_{\mathrm{ans}}(x,0)<\infty$; and the regularity bound $C_K:=\sup_{x,t}(|\partial_{\psi_{\mathrm{ans}}}q_\theta|+\|\nabla_{\psi_{\mathrm{ans}}}K_{\mathrm{ans}}\|)<\infty$, an explicit assumption (it holds in the Bernoulli content model, whose correctness probabilities stay in a compact subinterval of $(0,1)$; it does \emph{not} follow from a bound on the scalar correctness). Then the refusal branch is invariant ($K_{\mathrm{abs}}\equiv0$ along the flow), the collapsed manifold $\{\pi_{\mathrm{ans}}=0,\ \psi_{\mathrm{abs}}=\psi_{\mathrm{abs},0}\}$ is stationary in the limit, and the value term is controlled along the flow by the path bound
\[
\sup_xK_{\mathrm{ans}}(x,t)\ \le\ \overline K_0+(1+\lambda+\beta)\,C_K^2\!\int_0^tP(\tau)\,d\tau .
\]
The only force surviving off the manifold restores \emph{refusals} to base, never answering. With content \emph{shared} across branches this fails: $-\beta\pi_{\mathrm{abs}}\nabla_\psi K_{\mathrm{abs}}$ is $O(\beta)$ and unthrottled ($\pi_{\mathrm{abs}}\to1$); that case is open.

\textbf{Part III (finite-time sandwich, lifted).} Assume additionally $\rho_{\mathrm{full}}:=\beta\,\nabla J\cdot\nabla\KL_{\mathrm{seq}}/\|\nabla J\|^2\le\tfrac12$ on the full gradient; a capability ceiling $q_\theta\le q_{\max}$ with ceiling-\textup{(B1)} ($\underline D^{\max}(c_\star):=\min_{\mathrm{window}}\int e^{r}(-g_{\max})>0$ for $g_{\max}:=p[(1+\lambda)q_{\max}-\lambda]$); and the sign, tilt, and small-$\beta$ inputs of Theorem~1. Then the envelope and the $O(1/t)$ upper bound survive verbatim, and the certified lower bound weakens by one logarithm, $L=\Omega(1/(t\log^3t))$, with constants carrying $C_K$ and $\overline K_0$; the window hypothesis $\sup_{x,\tau\le c_\star t_{\mathrm{tilt}}}K_{\mathrm{ans}}\le\overline K<\infty$ (automatic in the Bernoulli content model) restores $\log^2t$. Without a ceiling, collapse persists whenever the race inequality
\[
\Delta_{\mathrm{B1}}\;>\;\big[(1+\lambda)\,C_q(T)+\beta\,C_{\nabla K}(T)\big]\int_0^{T}\!P(\tau)\,d\tau
\]
holds at $T=Z_L$, where $\Delta_{\mathrm{B1}}$ is the \textup{(B1)}-breaking parameter distance and $C_q,C_{\nabla K}$ split $C_K$ by force. The criterion is a \emph{logged-data certificate}, stated in the run's own parameterization and norm (the constants are not reparameterization-invariant): it certifies a given run, not a function-space property.
\end{stheorem}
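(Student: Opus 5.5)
I will prove the three parts in order, each reusing the previous one and the machinery of Theorem~1.

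\textbf{Part~I.} The sequence-KL decomposition is the chain rule for KL under the coarsening $y\mapsto\ind{y\in A}$. Since $A$ is a measurable event of the sequence, $\pi_\theta(y\mid x)=\pi_\theta(A^c\mid x)\,\pi_\theta(y\mid x,A^c)$ on $A^c$, and similarly on $A$. Substituting into $\E_{\pi_\theta}\log(\pi_\theta/\pi_0)$ and splitting the expectation over $A$ and $A^c$ gives $\KL_{\mathrm{gate}}+\pi_{\mathrm{ans}}K_{\mathrm{ans}}+\pi_{\mathrm{abs}}K_{\mathrm{abs}}$ exactly.

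For the gradient I write
\[
J_\beta=\E_x\big[\pi_{\mathrm{ans}}m(q_\theta)-\beta\KL_{\mathrm{gate}}-\beta\pi_{\mathrm{ans}}K_{\mathrm{ans}}-\beta(1-\pi_{\mathrm{ans}})K_{\mathrm{abs}}\big],
\]
with $\pi_{\mathrm{ans}}=\sgm(v)$. By hypothesis $q_\theta$, $K_{\mathrm{ans}}$ and $K_{\mathrm{abs}}$ do not depend on $v$. Differentiating in $v$ and applying Lemma~1 to the gate term yields the bracket, with the common factor $\sgm'(v)$.

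The lifts then follow directly:
\begin{itemize}
\item Lemma~1 and Corollary~1 follow by replacing $m$ with $m_{\mathrm{eff}}$.
\item Proposition~2 follows because at $\pi_{\mathrm{ans}}=0$ with $K_{\mathrm{abs}}=0$, only $\KL_{\mathrm{gate}}$ survives, and it equals $\log\frac{1}{1-\sgm(v_0)}$.
\end{itemize}

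\textbf{Part~II.} I compute the $\psi_{\mathrm{abs}}$-gradient: $\nabla_{\psi_{\mathrm{abs}}}J_\beta=-\beta\,\E[\pi_{\mathrm{abs}}\nabla K_{\mathrm{abs}}]$. This gradient vanishes wherever $K_{\mathrm{abs}}$ attains its minimum $0$. That minimum holds at base, and the refusals are initialized at base. Uniqueness of ODE solutions then gives $K_{\mathrm{abs}}\equiv0$ along the flow.

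The $\psi_{\mathrm{ans}}$-gradient is
\[
\E\big[\pi_{\mathrm{ans}}\big((1+\lambda)\partial q_\theta-\beta\nabla K_{\mathrm{ans}}\big)\big],
\]
so its norm is at most $(1+\lambda+\beta)C_K\,P$. Integrating this bound, and using $\|\nabla K_{\mathrm{ans}}\|\le C_K$ to convert parameter displacement into KL change, gives the stated path bound on $\sup_x K_{\mathrm{ans}}$.

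Stationarity in the limit follows because every remaining force either carries $\pi_{\mathrm{ans}}$ or vanishes at $K_{\mathrm{abs}}=0$. The $\psi_{\mathrm{abs}}$ force is the only one not throttled by $\pi_{\mathrm{ans}}$, and it points toward the base refusal. For shared content I will exhibit the unthrottled term $-\beta\pi_{\mathrm{abs}}\nabla_\psi K_{\mathrm{abs}}$ explicitly, to justify leaving that case open.

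\textbf{Part~III.} I rerun the proof of Theorem~1 with $g$ replaced by $g_\theta=p\,m(q_\theta)$.

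The envelope needs only the softmax identity $\partial_{c_0}J=-(1-\bar\sgm)L$, which holds for any $g$, together with $\rho_{\mathrm{full}}\le\tfrac12$. It therefore survives verbatim.

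For the upper bound, the bias envelope needs a lower bound on $D_{\mathrm{eff}}$. The ceiling gives $-g_\theta\ge -g_{\max}$, so $D_{\mathrm{eff}}\ge\int e^r(-g_{\max})\ge\underline D^{\max}>0$, and the $\bar\sgm$-control and gain-growth steps go through unchanged.

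For the lower bound, $\|\nabla J\|^2$ acquires a content block of size at most $\big(C_K(1+\lambda)P\big)^2=O(L^2)$. The anchor's content gradient contributes $\beta C_K P\,(1+\sup K_{\mathrm{ans}})$. By the Part~II path bound and $\int_0^t P=O(\log t)$ (since $P\asymp L\asymp 1/t$), this factor is $O(\log t)$. The resulting $|\dot L|=O(L^2\log^3 t)$ integrates to $\Omega(1/(t\log^3 t))$, and a window bound $\sup K_{\mathrm{ans}}\le\overline K$ removes the extra logarithm.

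The race inequality replaces the ceiling. The displacement of the answer parameters up to time $T$ is at most $[(1+\lambda)C_q+\beta C_{\nabla K}]\int_0^T P$, and if this stays below $\Delta_{\mathrm{B1}}$ then (B1) continues to hold on the whole window.

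I expect the main obstacle to be the lower-bound bookkeeping. The cross term $\beta\nabla J\cdot\nabla\KL_{\mathrm{seq}}$ couples the gain growth $|c_4|=O(\log t)$ with the slowly growing $K_{\mathrm{ans}}$, and the argument must track that exactly one extra logarithm appears. I would first try to bound that cross term crudely, with Cauchy--Schwarz against the path bound, before attempting anything sharper.
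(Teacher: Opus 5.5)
Your Parts I and II and the race criterion follow the paper's own route: the chain rule over $\{A,A^c\}$, invariance of $\psi_{\mathrm{abs}}$ because $\nabla K_{\mathrm{abs}}$ vanishes at its minimum, and the displacement bound $\|\dot\psi_{\mathrm{ans}}\|\le(1+\lambda+\beta)C_KP$ turned into a KL bound through $C_K$.

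The gap is that after Part I you never track the gate-channel value term $-\beta\,\sgm'(v)\,(K_{\mathrm{ans}}-K_{\mathrm{abs}})\,\nabla_\theta v$. This is exactly the term the path bound exists to control.

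\emph{Part II (stationarity).} ``Every remaining force carries $\pi_{\mathrm{ans}}$'' does not suffice, because $K_{\mathrm{ans}}$ is not bounded by $C_K$ along an unbounded trajectory. You need $\E_x[\sgm'(v)K_{\mathrm{ans}}]\le P(t)\sup_xK_{\mathrm{ans}}(x,t)$ combined with the path bound. That product is $O(\log t/t)$ on a window where $P=O(1/t)$.

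\emph{Part III (bias envelope).} The gate is driven by $m_{\mathrm{eff}}$, not $m(q_\theta)$. So ``goes through unchanged'' needs the remark that, with $K_{\mathrm{abs}}\equiv0$, the value term is nonpositive on $c_0$ and can be dropped from the upper bound on $\dot c_0$.

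\emph{Part III (lower bound).} This is the more serious consequence. You attach the factor $\sup K_{\mathrm{ans}}$ to the anchor's content gradient. But that gradient is $\beta\,\E_x[\pi_{\mathrm{ans}}\nabla_{\psi_{\mathrm{ans}}}K_{\mathrm{ans}}]$, bounded by $\beta C_KP=O(\beta L)$, with no $K_{\mathrm{ans}}$ factor at all.

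Taken at face value, your pieces give $\beta\|\nabla\KL_{\mathrm{seq}}\|=O(L\log t)$, since Theorem~1's own term dominates, and hence $|\dot L|=O(L^2\log^2t)$. So the $\log^3t$ you then state does not follow from what you wrote. Worse, your accounting would yield $\log^2t$ with no window hypothesis, which is a sign that a term is missing.

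The extra logarithm comes from the value term on the gate's feature coordinates. Since $\partial_wv=c_4\phi'(ws+a)s$, the term $\beta\int\sgm'(v)\,|c_4\phi'(ws+a)s|\,p\,K_{\mathrm{ans}}$ is $O(\beta\,|c_4|\,L\,\sup_xK_{\mathrm{ans}})$. This follows from $\sgm'(v)\le e^{c_0}=O(L)$ and $\E_p|s|<\infty$. Gain growth and the path bound then make it $O(\beta L\log^2t)$. On $c_0$ and $c_4$ the same term is only $O(\beta L\log t)$.

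Against $\|\nabla J\|=O(L\log t)$, the feature-coordinate term makes the cross term $O(L^2\log^3t)$. That integrates to $\Omega(1/(t\log^3t))$. The window bound $K_{\mathrm{ans}}\le\overline K$ is what reduces these feature-coordinate value terms to $O(|c_4|L)$ and restores $\log^2t$.

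You correctly sensed that the coupling of $|c_4|$ with $K_{\mathrm{ans}}$ is the crux. It lives on the gate's $(w,a)$ coordinates, though, not on the content channel.
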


\begin{proof}
\emph{(Part I)} The identity is the chain rule for KL over the partition $\{A,A^c\}$, plus $\partial_v[\sgm(v)K_{\mathrm{ans}}+(1-\sgm(v))K_{\mathrm{abs}}]=\sgm'(v)(K_{\mathrm{ans}}-K_{\mathrm{abs}})$, the conditionals being $v$-independent in a mixture parameterization. At the all-abstain state with refusals at base, $\pi_{\mathrm{ans}}=0$ and $K_{\mathrm{abs}}=0$, so $\KL_{\mathrm{seq}}(x)=\log\frac1{1-\sgm(v_0(x))}$ pointwise.

\emph{(Part II)} Under separability, $\nabla_{\psi_{\mathrm{abs}}}$ of the reward and of $K_{\mathrm{ans}}$ vanish identically, so $\dot\psi_{\mathrm{abs}}=-\beta\E_x[\pi_{\mathrm{abs}}\nabla_{\psi_{\mathrm{abs}}}K_{\mathrm{abs}}]$, zero at base where every $K_{\mathrm{abs}}(x)$ is minimized: $\psi_{\mathrm{abs}}$ is invariant. On the content channels the $\pi_{\mathrm{ans}}\to0$ limit follows by dominated convergence (integrands bounded by $C_K$, $\sup|m'|$). The gate channel carries the \emph{value} term $-\beta\sgm'(v)K_{\mathrm{ans}}$, which $C_K$ does not control along an unbounded trajectory; the flow structure does: $\dot\psi_{\mathrm{ans}}=\E_x[\pi_{\mathrm{ans}}((1+\lambda)\partial_\psi q-\beta\nabla_\psi K_{\mathrm{ans}})]$ gives $\|\dot\psi_{\mathrm{ans}}\|\le(1+\lambda+\beta)C_KP(t)$, whence the displayed path bound, and $\E_x[\sgm'(v)K_{\mathrm{ans}}]\le P(t)\sup_xK_{\mathrm{ans}}(x,t)$, which on any window with $P=O_{c_\star}(1/t)$ is $O(\log t/t)\to0$; $\E_x[\pi_{\mathrm{ans}}K_{\mathrm{ans}}]$ obeys the same bound, so the collapse path's sequence-anchor cost remains the gate's $\overline{\KL}_\perp$ alone. (Measured, on the trainable-content run: $\overline K_0=5.6\times10^{-4}$; $\sup_xK_{\mathrm{ans}}(x,Z_L)=1.6\times10^{-3}$ against $2.3\times10^{-3}$ from the measured displacement and $5.4\times10^{-3}$ fully certified; the gate value term peaks at $7.5\times10^{-5}$ and is $3.3\times10^{-8}$ at $Z_L$.) With shared $\psi$ the first step fails: $\nabla_\psi K_{\mathrm{abs}}\neq0$ once answer-side updates have moved $\psi$.

\emph{(Part III)} Pointwise $q_\theta\le q_{\max}$ with $m$ increasing gives $D_{\mathrm{eff}}(q_\theta)\ge\underline D^{\max}(c_\star)$ on the window, and $m_{\mathrm{eff}}\le m(q_{\max})$ ($-\beta K_{\mathrm{ans}}\le0$ dropped, $K_{\mathrm{abs}}\equiv0$), so the bias envelope, $\bar\sgm$ control, and gain growth run as in Section~\ref{app:onet} with $\underline D^{\max}$-type constants. \emph{Envelope:} under $\rho_{\mathrm{full}}\le\tfrac12$ (the hypothesis must sit on the full gradient because the content channels contribute their own cross term $\beta\,\partial_\psi J\cdot\partial_\psi\KL_{\mathrm{seq}}$, not sign-definite), $\dot L\le-\tfrac12(1-\bar\sgm)^2L^2$, the identity $\partial_{c_0}J=-(1-\bar\sgm)L$ holding at each instant for the current $g$. \emph{Lower bound:} past $T_0$, $P=\frac{L}{|\bar m_{\mathrm{tilt}}|}\cdot\frac{1+\varepsilon_P}{1+\varepsilon_L}\le\frac{2L}{\underline D^{\max}(c_\star)}$, so $|\partial_\psi J|\le(1+\lambda)C_KP=O_{c_\star}(L)$ and the anchor's content component is $O_{c_\star}(\beta L)$. The anchor's gate coordinates carry, beyond $\sgm'(v)(v-v_0)$, the value term $\sgm'(v)(K_{\mathrm{ans}}-K_{\mathrm{abs}})$: on $c_0,c_4$ ($|\partial v|\le1$) it is $O_{c_\star}(\beta L\log t)$ by the path bound; on $w,a$, $\partial_wv=c_4\phi'(ws+a)s$ carries the extra $|c_4|$, giving $O_{c_\star}(\beta L\log^2t)$; these coordinates are $c_4\phi'$-weighted, not $\phi$-weighted, so no a fortiori argument applies. Hence $\|\nabla J\|=O_{c_\star}(L\log t)$ but $\beta\|\nabla\KL_{\mathrm{seq}}\|=O_{c_\star}(L\log^2t)$, so $|\dot L|=O_{c_\star}(L^2\log^3t)$ and $L=\Omega(1/(t\log^3t))$; bounded $K_{\mathrm{ans}}$ on the window makes every value term $O(|c_4|L)$ and restores $\log^2t$. The total content signal is $\int|\partial_\psi J|=O_{c_\star}(\log t_{\mathrm{tilt}})$, which yields the race criterion: $\|\dot\psi_{\mathrm{ans}}\|\le[(1+\lambda)C_q+\beta C_{\nabla K}]P(t)$ pointwise, so if the displayed inequality holds at $T=Z_L$, (B1) holds on $[0,Z_L]$ and the collapse completes first.
\end{proof}

\paragraph{Numerical instantiation.} Simulated with a trainable correctness shift $q_u=q_0+(q_{\max}-q_0)\sgm(u)$ and minimal Bernoulli content KL: under the ceiling ($\E[q_{\max}]=0.65<t^\ast$) the collapse law is unchanged, with $Lt=0.71,\,0.51$ at $t=10^2,10^3$ against the frozen-content $0.71,\,0.51$, and content moves $\E[q]$ by $10^{-4}$ during the plateau; without any ceiling ($q_{\max}=0.95$, (B1) breakable) collapse still wins: (B1) first fails at $t=2.9\times10^4$, a factor $2.8$ \emph{after} $Z_L=1.0\times10^4$, the model having moved $\E[q]$ by $0.005$ of the needed $0.27$. Race certificate on that run: $\int_0^{Z_L}P=12.4$; window constants $C_q=0.023$, $C_{\nabla K}=0.0031$ give right side $1.54$ against $\Delta_{\mathrm{B1}}=4.71$ (actual movement $0.56$), so the run is certified; the cruder global constants give $12.8$ and fail to certify despite the conclusion holding: the certificate's sharpness depends on the Lipschitz constants supplied. \emph{The plateau starves the very channel that could avert it.}

\section{Estimator Analysis in Full}\label{app:estimator}

\paragraph{Derivation of the drift menu (Proposition 6).} GRPO forms $\hat A_i=(r_i-\hat\mu_r)/\hat\sigma_r$ within groups of $G$; since $\sum_i\hat A_i=0$, the per-prompt expected gate drift is $\E[S]/G$ with $S=\sum_{\mathrm{answering}}\hat A_i$, which we evaluate by exact enumeration over group outcomes $(k,j)$ ($k$ answers, $j$ correct among them, probabilities $\binom Gkp^k(1-p)^{G-k}\binom kj\bar q^j(1-\bar q)^{k-j}$, population-variance normalizer with denominator $G$ and $\epsilon=0$; degenerate groups $\hat A\equiv0$). (a) For vanilla PG and mean-baseline estimators the drift is $\Theta(p\,m)$ at every $p$ (mean-baseline: exactly $p\,m\,(1-p)(G-1)/G$, which the enumeration reproduces to $10^{-15}$). (b) For $1/G\ll p\ll1$, the sample statistics concentrate: $\hat\mu_r\approx pm$, $\hat\sigma_r\approx\sqrt{pM_2}$ with $M_2=\bar q+(1-\bar q)\lambda^2$, so drift $\approx m\sqrt{p/M_2}$; the enumerated log-log slopes approach $0.5$ from below as $G$ grows ($0.451/0.468/0.479$ at $G=64/256/1024$), the deficit being the $O(1/G)$ statistic-correlation bias. (c) For $p\ll1/G$, a group has at most one answer with probability $\approx Gp$; for the single-answer group $r=(r,0,\dots,0)$: $\hat\mu=r/G$, $\hat\sigma^2=r^2(G-1)/G^2$, so $\hat A_1=\sign(r)\sqrt{G-1}$ \emph{independently of $|r|$}. Hence drift $\to p\,(2\bar q-1)\sqrt{G-1}$: the penalty magnitude is erased, $\lambda_{\mathrm{eff}}=1$. Enumerated drift$/p$ matches $(2\bar q-1)\sqrt{G-1}$ to three decimals for every $\lambda\in\{1,2,4.5,9\}$.

\paragraph{Rate saturation in $\lambda$ (above the knee).} $m/\sqrt{M_2}\to-\sqrt{1-\bar q}$ as $\lambda\to\infty$: enumeration at $\bar q=0.3$, $G=256$, $p=0.05$ gives drift$/\sqrt p=-0.378,-0.582,-0.705,-0.751,-0.788,-0.793$ for $\lambda=1,2,4.5,9,50,200$ (limit $-0.837$; Figure~\ref{fig:saturation}), while vanilla drift$/p=m$ grows linearly from $-0.40$ to $-34.7$. The sign is preserved ($\sign(m/\sqrt{M_2})=\sign m$): regime (b) distorts the rate, not the threshold.

\begin{figure}[t]\centering
\includegraphics[width=0.8\linewidth]{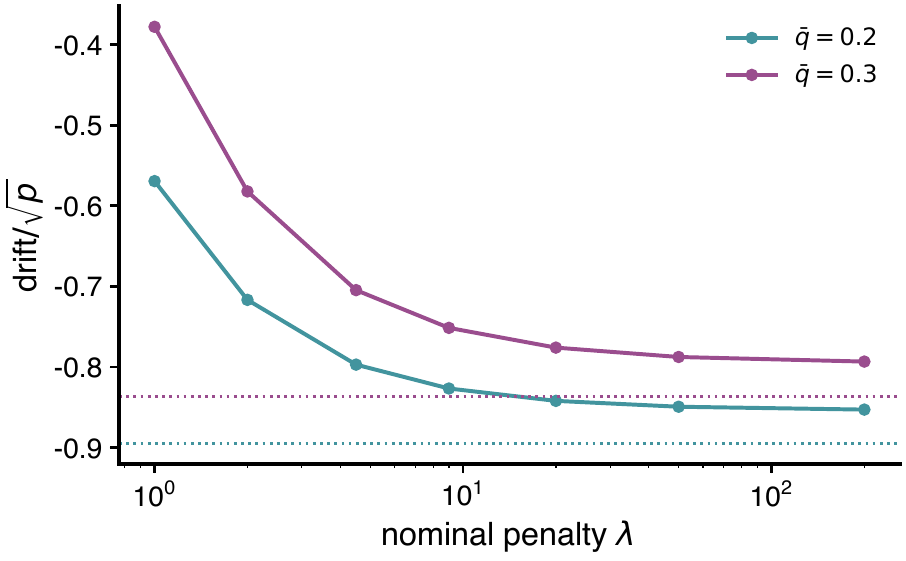}
\caption{Rate saturation above the knee: the population-normalized amplitude drift$/\sqrt{p}$ (enumerated at $p=0.05$, $G=256$) saturates at $-\sqrt{1-\bar q}$ (dotted) as the nominal $\lambda$ grows. Past the knee, making the penalty harsher stops making the collapse faster.}
\label{fig:saturation}
\end{figure}

\paragraph{Threshold shift and the $1/G$ fixed point (below the knee).} The regime-(c) drift $\propto(2\bar q-1)$ changes sign at $\bar q=1/2$, not at $t^\ast$ (Figure~\ref{sfig:tiers}): the optimizer answers the entire band $\bar q\in(1/2,t^\ast)$ on which the rule assigns negative gain (at $\lambda=4.5$: $(0.5,0.818)$). On that band regimes (b) and (c) push in opposite directions, so the drift crosses zero from above at a stable $p^\ast=\Theta(1/G)$ (constant increasing in $\bar q$); a tabular policy under GRPO therefore lands on a three-tier terminal profile ($0$ for $\bar q<1/2$; $\Theta(1/G)$ on the band; $1$ above $t^\ast$): \emph{the abstention rate on the contested band is set by the group size, not by the penalty.}

\paragraph{Proof of Proposition 7 (conservation), and its scope.} Let $D$ be the degenerate event (constant group reward). Tautologically $\E[\hat g]=\E[\hat g\ind{D^c}]=\Pr[D^c]\,\E[\hat g\mid D^c]$. GRPO spends $G$ rollouts per step and moves by $\E[\hat g]$; the resampling filter spends $G/\Pr[D^c]$ rollouts per step and moves by $\E[\hat g\mid D^c]$: the expected \emph{reward} drift per rollout is $\E[\hat g]/G$ for both, for every $p,G,\bar q,\lambda$ and every prompt distribution, with the acceptance weight cancelling the per-group rollout cost prompt by prompt. Hence the reward-only ($\beta=0$) mean-field expected trajectories coincide pointwise in rollout count $N$ (confirmed to six significant figures: $P=2.007716\times10^{-3}$ at $N=10^4$ under both). The anchor does \emph{not} cancel: it is applied once per optimizer step, and the filter takes $\Pr[D^c]\approx Gp$ as many steps per rollout, diluting the anchor's restoring force by that factor (at $G=8$, $p=10^{-3}$: $125\times$). Scope: the statement covers filters that discard \emph{zero-gradient} groups; filters that discard groups with nonzero gradient (variance thresholds, curricula) change the drift and are outside it, as are clip-higher and token-level losses (objective modifications). Per optimizer step the sparse-regime law becomes exponential, $\dot c_0\to(2\bar q-1)\sqrt{G-1}/G$, i.e.\ $L=\Theta(e^{-\omega t})$, $\omega\asymp|2\bar q-1|/\sqrt G$; per rollout, nothing changes (Figure~\ref{sfig:conserve}). \qed

\begin{sproposition}[The anchor under a sampling filter: a barrier, not a floor]\label{sprop:barrier}
In the homogeneous scalar model (one effective prompt, logit $c_0$, $p=\sgm(c_0)\approx e^{c_0}$ in the sparse regime, knee at $c_0^{\mathrm{knee}}=-\log G$), the filtered flow is
\[
\dot c_0=-\frac kG+\beta\,e^{c_0}(v_0-c_0),\qquad k=|2\bar q-1|\sqrt{G-1}.
\]
Assume $v_0+\log G>1$ (equivalently $c_0^{\mathrm{knee}}<v_0-1$), so the anchor term is strictly increasing on the sparse regime and largest at the knee. Comparing anchor and drive there gives
\[
\beta^{\mathrm{filter}}_{\mathrm{crit}}=\frac{k}{\,v_0+\log G\,},
\]
with an all-or-nothing structure (every sparse-regime root is repelling since $\mathrm{d}\dot c_0/\mathrm{d}c_0>0$ there): for $\beta>\beta^{\mathrm{filter}}_{\mathrm{crit}}$ a trajectory descending from above meets $\dot c_0>0$ at the knee and, provided the above-knee field also points down at the knee (a two-sided sign condition we check by enumeration rather than prove), stalls at $p\approx1/G$, an \emph{observable} answer rate; for $\beta<\beta^{\mathrm{filter}}_{\mathrm{crit}}$ it passes the knee and no stationary point exists thereafter. Numerically $\beta^{\mathrm{filter}}_{\mathrm{crit}}=0.259$ at $G=8$ and $0.516$ at $G=64$ ($\bar q=0.3$, $v_0=2$; the ratio $1.99$ matches the $\sqrt G/\log G$ scaling); realistic $\beta\le10^{-2}$ sit $1.4$--$1.7$ orders below, so collapse proceeds. The anchor is not weakened into a lower floor; it is converted into a switch.
\end{sproposition}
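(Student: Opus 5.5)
The plan is to prove Proposition~\ref{sprop:barrier} as a one-dimensional phase-line argument on the scalar ODE. The first step is to derive the flow itself. Below the knee, Proposition~\ref{prop:conserve} says the filtered estimator moves the logit per optimizer step by the conditional drift $\E[\hat g\mid D^c]$. The unconditional sparse-regime drift is $p(2\bar q-1)\sqrt{G-1}$ (Proposition~\ref{prop:menu}(c)), and $\Pr[D^c]\approx Gp$, so the conditional drift is $(2\bar q-1)\sqrt{G-1}/G$. Since $\bar q<1/2$ on the collapsing prompt, this equals $-k/G$. The anchor is not filtered; by Lemma~\ref{lem:klgrad} it contributes $-\beta\sgm'(c_0)(c_0-v_0)$. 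With $\sgm'(c_0)\approx e^{c_0}$ in the sparse regime, we obtain $\dot c_0=-k/G+\beta e^{c_0}(v_0-c_0)$.

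The second step studies $h(c):=\beta e^{c}(v_0-c)$ on $c\le c_0^{\mathrm{knee}}=-\log G$. Its derivative is $h'(c)=\beta e^{c}(v_0-c-1)$, which is positive whenever $c<v_0-1$. The hypothesis $v_0+\log G>1$ is exactly $c_0^{\mathrm{knee}}<v_0-1$, so $h$ is strictly increasing on the whole sparse regime and attains its maximum at the knee. There it equals $\beta(v_0+\log G)/G$, using $e^{-\log G}=1/G$.

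The third step compares this maximum with the drive $k/G$. Equality occurs at $\beta=k/(v_0+\log G)=\beta^{\mathrm{filter}}_{\mathrm{crit}}$.

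If $\beta<\beta^{\mathrm{filter}}_{\mathrm{crit}}$, then $\dot c_0<0$ on the entire sparse regime. A trajectory that enters below the knee therefore decreases monotonically. Moreover $h(c)\to0$ as $c\to-\infty$, so $\dot c_0\le -k/G+h(c_0^{\mathrm{knee}})<0$ uniformly. Hence no stationary point exists below the knee and $c_0\to-\infty$, at least linearly in $t$, which is the exponential law of Proposition~\ref{prop:conserve}.

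If $\beta>\beta^{\mathrm{filter}}_{\mathrm{crit}}$, the field is positive at the knee. Because the field is continuous and strictly increasing on the regime, it has exactly one root $c^\dagger<c_0^{\mathrm{knee}}$. At that root the slope is $h'>0$, so the root is repelling; the same holds for any sparse-regime root, which gives the all-or-nothing structure. A trajectory descending from above the knee then meets $\dot c_0>0$ at the knee and cannot cross it downward. Combined with the assumed downward above-knee field at the knee, the knee becomes a trapping point, and the answer rate stalls at $p=e^{c_0^{\mathrm{knee}}}\approx1/G$.

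The numerical values are checked by plugging in $\bar q=0.3$ and $v_0=2$. This gives $k=0.4\sqrt{G-1}$ and hence $0.4\sqrt7/(2+\log8)\approx0.259$ at $G=8$, and $0.4\sqrt{63}/(2+\log64)\approx0.516$ at $G=64$. The ratio follows the stated $\sqrt G/\log G$ scaling.

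The main obstacle is the stall claim at the knee. It requires gluing regime (b) to regime (c): the knee is not a sharp boundary, and the $\sqrt p$ drift of Proposition~\ref{prop:menu}(b) must still point downward there. As the statement itself concedes, I would not prove this. I would state it as the hypothesis, checked by the exact enumeration of the drift menu at $p=1/G$, and rigorously certify only the sparse-side monotonicity and the threshold $\beta^{\mathrm{filter}}_{\mathrm{crit}}$. A secondary point to control is the approximation $\sgm'\approx e^{c_0}$. Its relative error is $O(1/G)$ near the knee, so it perturbs $\beta^{\mathrm{filter}}_{\mathrm{crit}}$ only by a factor $1+O(1/G)$.
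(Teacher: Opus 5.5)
Your proposal is correct and follows the paper's own argument. The flow is obtained from the per-step conditional drift of Proposition~\ref{prop:conserve} plus the unfiltered anchor of Lemma~\ref{lem:klgrad}. The rest is the same phase-line comparison of the increasing anchor term $\beta e^{c}(v_0-c)$ against the drive $k/G$ at the knee, with repelling sparse-regime roots and the two-sided knee sign condition left as an enumerated hypothesis.

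One small correction to your numerics: the ratio $1.99$ comes only from the exact expression $\sqrt{G-1}/(v_0+\log G)$. The bare $\sqrt G/\log G$ scaling gives about $1.41$ between $G=8$ and $G=64$, so that scaling describes only the large-$G$ asymptotics and cannot be cited as a check on these two values.
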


\begin{sremark}[Two different $1/G$'s]
The barrier above sits at the same answer rate as the drift fixed point $p^\ast=\Theta(1/G)$, and the mechanisms are unrelated: one is the anchor balancing a constant drive, the other the reward drift changing sign. They separate experimentally by varying $\beta$, which moves the barrier and leaves $p^\ast$ fixed; any experiment that finds a plateau at $p\approx1/G$ must run this control before attributing it. This is also the one row of the estimator menu whose behavior is \emph{not} $\beta$-independent.
\end{sremark}

\paragraph{Zero gradients are exact, not small.} When every sample in a group abstains, all $G$ rewards equal $0$, $\hat\sigma_r=0$, and $\hat A_i\equiv0$: the prompt's contribution is identically zero. The fraction of prompts in this state is $(1-p)^G$: at $p=10^{-3}$, $99.2\%$ for $G=8$ and $77.4\%$ for $G=256$.

\paragraph{From drift to decay law.} With $L\asymp e^{c_0}D_{\mathrm{eff}}$ and $D_{\mathrm{eff}}$ slowly varying, $\dot L/L=\dot c_0(1+o(1))$, so the menu's drift scalings convert to laws in one line each:
\begin{align*}
\dot c_0\propto-e^{c_0}&\Rightarrow L\sim t^{-1};\qquad
\dot c_0\propto-e^{c_0/2}\Rightarrow L\sim t^{-2};\\
\dot c_0\to-c\ &\Rightarrow L\sim e^{-ct}.
\end{align*}
(GRPO is not a gradient flow on $J$, so Theorem~1's envelope does not apply to it; the transfer above is what does.)

\paragraph{Two practical notes on dynamic sampling.} (i) The published method also removes the KL term entirely \citep{yu2025dapo}, so the anchor-dilution channel is moot in practice; the conservation of the reward drift is the operative content. (ii) The resampling budget is capped in practice: once $Gp$ falls below the cap the batch cannot be filled and training stalls, so the real curve tracks the unfiltered estimator on the rollout axis and then halts.

\paragraph{Group normalization and the report channel.} The report-level mechanism survives group-based estimation for a reason worth separating from determinism: within a prompt all $G$ samples answer, so normalization rescales a strictly proper score by a prompt-level statistic. For \emph{fixed} scale this preserves properness \citep{gneiting2007}; but $\hat\mu_r,\hat\sigma_r$ are sample statistics correlated with $r_i$ (exactly the bias identified by \citet{liu2025drgrpo}), so under group-standard-deviation normalization properness is preserved only up to an $O(1/G)$ bias, and \emph{exactly} under mean-only baselines (RLOO, Dr.~GRPO). This gives the design lesson a second leg: \emph{move abstention to the report level, and drop the standard-deviation normalization.} Action-level rules enjoy no such invariance at all, because the abstain baseline $0$ and the penalty $-\lambda$ are coupled through the action distribution, which is the content of this section.

\begin{figure}[t]\centering
\includegraphics[width=0.97\linewidth]{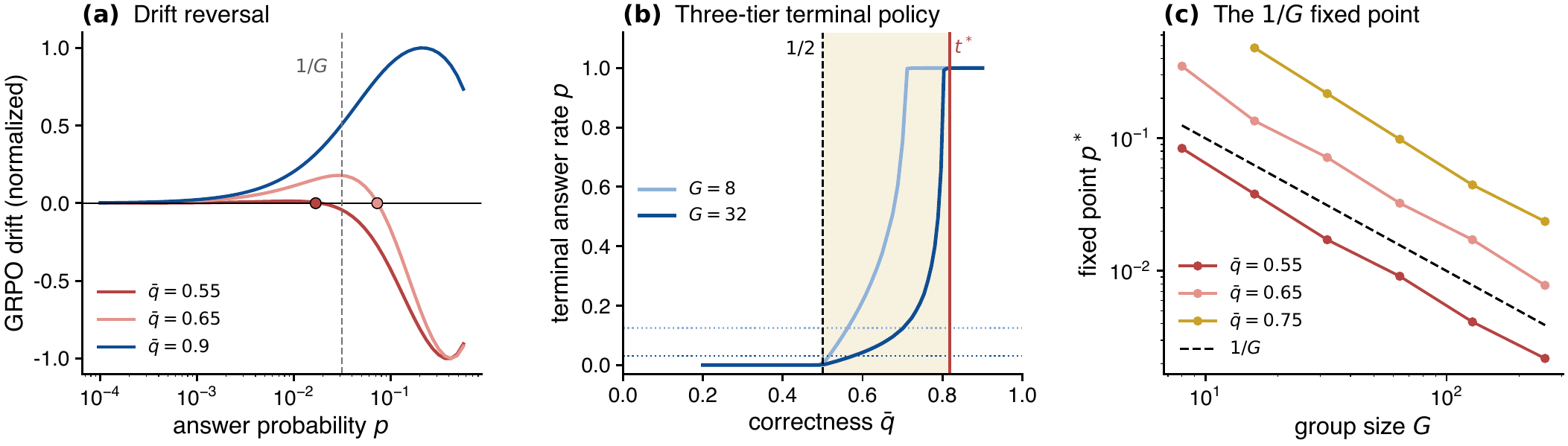}
\caption{The fixed point that group normalization creates. (a)~For $\bar q$ inside the band $(1/2,t^\ast)$ the GRPO drift is negative above $p\approx1/G$ and positive below it: a stable interior fixed point (circles); for $\bar q>t^\ast$ the drift is positive everywhere. (b)~Terminal profile of a tabular policy under GRPO: three tiers, with the answering threshold at $\bar q=1/2$ rather than the rule's $t^\ast=0.818$; the shaded band is where the rule assigns negative gain and the optimizer answers anyway. (c)~The fixed point scales as $\Theta(1/G)$.}
\label{sfig:tiers}
\end{figure}

\begin{figure}[t]\centering
\includegraphics[width=0.97\linewidth]{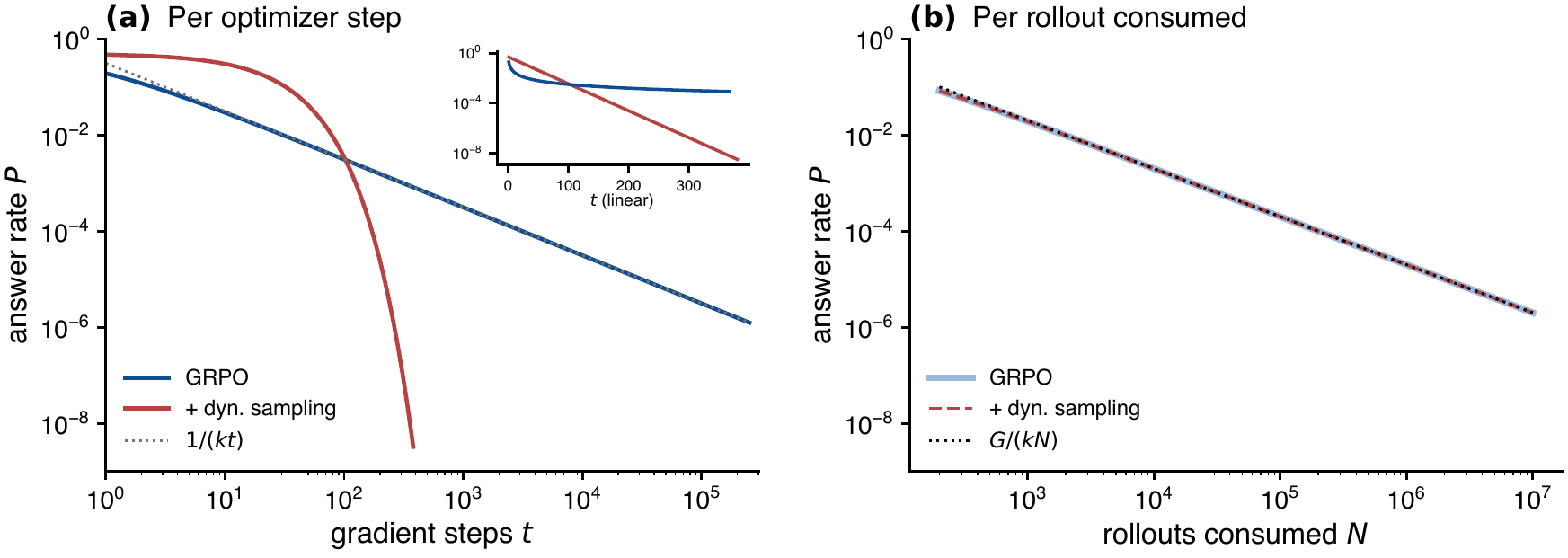}
\caption{Conservation of collapse per rollout. (a)~Against optimizer steps the two estimators obey different laws: unfiltered on log--log, filtered on semi-log (inset). (b)~Against rollouts consumed, the two curves coincide with each other and with $G/(kN)$ to six significant figures (reward-only, $\beta=0$; with the anchor on, the fields differ by the $\Pr[D^c]$-diluted restoring term). The paired test (separation in steps, coincidence in samples) requires no curve fitting, and no plausible confound produces both.}
\label{sfig:conserve}
\end{figure}

\section{The Report-Level Mechanism in Full}\label{app:report}

\paragraph{Boundary versus interior.} Under action-level abstention two information channels die together at the vertex: the action channel's per-episode Fisher information about the gate logit is $I_v=\E_p[\sgm'(v)]=\Theta(P(\mathrm{answer}))\to0$, and the correctness channel is throttled by $P(\mathrm{answer})$ because the abstain branch returns a constant (for the feature parameters $(w,a)$ the Fisher matrix carries additional Jacobian factors, so the honest statement is $I(w,a)=O(P(\mathrm{answer}))$ absent a nondegeneracy floor). Under report-level scoring every episode returns a $q$-correlated score, and the driving term $\partial_wB=-2\E_p[(c-\bar q)(\kappa_1-\kappa_0)\sgm'(ws+a)s]$ loses its discriminative factor only if the reported range collapses ($\kappa_1\to\kappa_0$), an \emph{interior} point of parameter space where the negative-definite Hessian below supplies restoring force, versus the \emph{boundary} point $P\to0$ where, by Corollary~1, every policy-space restoring force in its class vanishes. Boundary versus interior is the exact content of ``kills.'' (The two sides are analogous, not identical, objects: a per-episode Fisher information about the gate logit on one side, the curvature of the population objective in the report's range parameters on the other; a unified comparison would fix one observation likelihood for both mechanisms, which we have not done. The invariant content is the boundary-versus-interior distinction; Figure~\ref{sfig:info}a shows the two quantities falling as one curve.)

\begin{figure}[t]\centering
\includegraphics[width=0.97\linewidth]{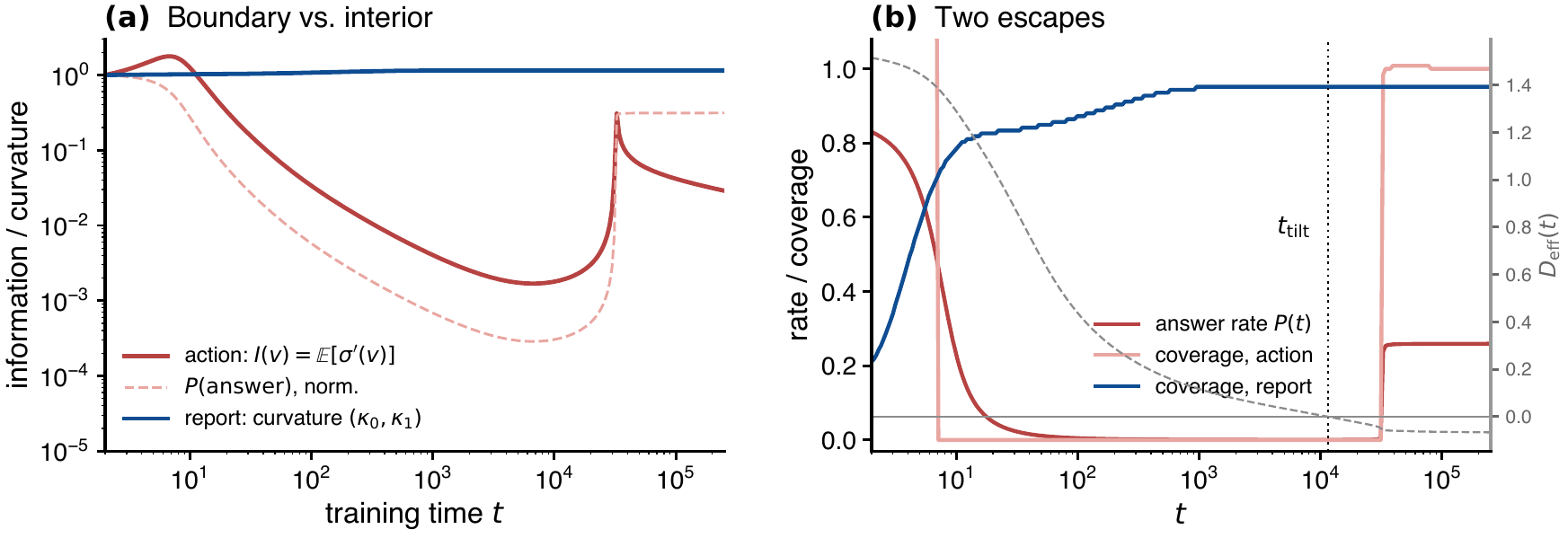}
\caption{(a)~Boundary versus interior: the action-level rule's per-episode information $\E_p[\sgm'(v)]$ tracks $P(\mathrm{answer})$ to zero, while the report-level objective's curvature $\E_p[\phi^2+(1-\phi)^2]$ is bounded below; its degenerate direction is interior, where restoring force is strictly positive. (b)~Two escapes: the sampled rate turns up at $t_{\mathrm{tilt}}$, where $D_{\mathrm{eff}}$ (gray, right axis) crosses zero; greedy coverage recovers strictly later; the report-level mechanism never leaves near-rational coverage.}
\label{sfig:info}
\end{figure}

\paragraph{Proof of Proposition 8 (report equilibrium; anchors).} $\E[(Y-c)^2\mid s]=(c-\bar q)^2+\bar q(1-\bar q)$, so $B(\theta')=\mathrm{const}-\E_p[(c_{\theta'}-\bar q)^2]$, with $\partial_{\kappa_0}B=-2\E_p[(c-\bar q)(1-\phi)]$, $\partial_{\kappa_1}B=-2\E_p[(c-\bar q)\phi]$: no action-saturation factor. At the realizing parameter (which exists in the working model: $\bar q(s)=q_L+(q_H-q_L)\sgm(2\mu s)$, realized by $(w,a,\kappa_0,\kappa_1)=(2\mu,0,q_L,q_H)$) the Hessian is $-2\E[\nabla c\,\nabla c^\top]\prec0$ (components of $\nabla c$ linearly independent in $L^2(p)$), and the implicit function theorem applied to $\nabla B(\theta')-\beta(\theta'-\theta'_0)=0$ yields an attracting anchored equilibrium within $O(\beta)$; since $\bar q$ is strictly increasing with $\bar q'(s^\ast)>0$ and $g(s^\ast)=0$, an $O(\beta)$ perturbation of $c$ moves the deployment threshold by $O(\beta)$ and costs $O(\beta^2)$ utility (the operative condition is the margin bound $\Pr_p(|\bar q-t^\ast|\le\epsilon)=O(\epsilon)$, implied here by the single transversal crossing, $\bar q'(s^\ast)>0$, with $p$ bounded near it: the anchored solutions above satisfy $\|c-\bar q\|_\infty=O(\beta)$ uniformly, so disagreement requires $|\bar q-t^\ast|=O(\beta)$, a set of mass $O(\beta)$ on which the per-point gap $(1+\lambda)|\bar q-t^\ast|$ is itself $O(\beta)$; without the margin bound, the generic $(1+\lambda)\E_p|c-\bar q|=O(\beta)$ stands). \emph{Function-space anchors.} (a) Quadratic $\tfrac\beta2\!\int p(c-c_0)^2$: pointwise maximization gives $c_\beta=\bar q+\tfrac{\beta}{2+\beta}(c_0-\bar q)$. (b) Bernoulli KL between reported confidences: $h(c)=2(\bar q-c)-\beta(\mathrm{logit}\,c-\mathrm{logit}\,c_0)$ is strictly decreasing with $h\to\pm\infty$ at the endpoints, so a unique root $c_\beta$ strictly between $\bar q$ and $c_0$, and $|c_\beta-\bar q|\le\tfrac\beta2\Lambda_0$ with $\Lambda_0=\sup_s|\mathrm{logit}\,\bar q-\mathrm{logit}\,c_0|<\infty$ for $\bar q$ bounded away from $\{0,1\}$ (numerically $\|c_\beta-\bar q\|_\infty=1.08\beta$ at $\beta=10^{-2}$, inside the bound $2.2\beta$). In both cases the same $O(\beta^2)$ deployment loss under the same margin condition. \emph{Misspecification.} If $\bar q$ is not representable by the head, the population objective is, in the range parameters, a concave quadratic; its maximizer over the box $[0,1]^2$ is the constrained $L^2(p)$ projection of $\bar q$, unique and attracting under the projected flow whenever $\phi$ is nonconstant on the support of $p$ (then $\E[\nabla c\,\nabla c^\top]\succ0$), and interior exactly when no box constraint is active; on the deployment disagreement set $\{\sign(c-t^\ast)\neq\sign(\bar q-t^\ast)\}$ the threshold lies between $c$ and $\bar q$, so $|m(\bar q)|=(1+\lambda)|\bar q-t^\ast|\le(1+\lambda)|\bar q-c|$ pointwise, and the deployed-utility gap to Chow's rule is at most $(1+\lambda)\,\E_p|c-\bar q|$. The structural point: at the action level, reward and anchor share the vanishing factor $\sgm'(v)$; at the report level they share none: the Brier gradient $2(\bar q-c)$ is $\Theta(1)$ away from calibration, and the report-KL derivative \emph{diverges} at the boundary, repelling degenerate reports; the direction of the KL, load-bearing at the action level, is moot here. Scope: these are statements about the report \emph{value}; for a sampled confidence token the logit-space caveat of Section~\ref{app:static} applies, and overconfident initialization is the condition that discharges it. \qed

\paragraph{Frozen-feature training-properness, with an explicit clock.} Freeze $(w,a)$ at any values with non-constant $\phi$ and flow $x=(\kappa_0,\kappa_1)$ under $B_\beta(x)=B(x)-\tfrac\beta2\|x-x_0\|^2$. $B$ is a concave quadratic with Hessian $-2M$, $M=\E_p[uu^\top]\succ0$ for $u=(1-\phi,\phi)^\top$, so $B_\beta$ is $(2\lambda_{\min}(M)+\beta)$-strongly concave and the flow converges to its unique maximizer from every initialization in $\R^2$ at rate $e^{-(2\lambda_{\min}(M)+\beta)t}$, with $\|x^\ast_\beta-x^\ast\|\le\beta\|x^\ast-x_0\|/(2\lambda_{\min}(M))$ (a projected variant on $[0,1]^2$ keeps $c\in[0,1]$ throughout at the same rate: projection onto a convex set is nonexpansive, so the strong-concavity contraction is preserved). On the working model $2\lambda_{\min}(M)=0.55$, so the matched overconfident initialization reaches $10^{-3}$ parameter error by $T\approx11$, against the action-level rule, which the \emph{same} frozen-feature reduction leaves collapsing with recovery only at $e^{\Theta(1/\beta)}$. The mechanism separation is thus \emph{proved}, not simulated, in the frozen-feature subfamily; the joint four-parameter basin is nonconvex and remains an empirical statement (Section~\ref{app:sim}). \qed

\paragraph{Proof of Proposition 9 (composite objective).} $\E[(Y-c)^2]=q-2qc+c^2$ gives $\E[r_\alpha]=\alpha q+1-q+2qc-c^2$, whence the displayed partials. (a) $\partial_c=2(q-c)$ drives the report to calibration for every $\alpha$. (b) At $\alpha=0$ the calibrated score is $1-q+q^2$, U-shaped with minimum at $q=\tfrac12$: the accuracy gradient $2q-1$ is negative for every $q<\tfrac12$; and $q<\tfrac12$ is, up to the value of $t^\ast$, the region (B1) selects: the defect of a bare proper score lands on the same questions as the penalty rule's pathology. (c) On the calibration manifold the accuracy gradient is $\alpha-1+2q$; at the accuracy vertex $q=c=0$ it equals $\alpha-1$: attracting iff $\alpha<1$, degenerate at $\alpha=1$, repelling iff $\alpha>1$. This is the same local vertex test as Proposition~3, applied in the $q$ coordinate to our own proposal, which is why we require $\alpha>1$ strictly. The composite has no discrete abstain action, every episode returns a $q$-correlated score, and the added $\alpha q$ term is independent of $c$, leaving the report-vertex analysis unchanged. \qed

\paragraph{The clipping ablation, quantified.} Clipping the Brier training signal below $t^\ast$ (constant score, zero gradient there) reinstates the dead zone: simulation exhibits a sliding-mode equilibrium on the threshold (learned confidence terminating at $0.7499$ against $t^\ast=0.750$ at moderate $\beta$, and above it at large $\beta$, where the model wrongly answers), and deployment utility pointwise weakly worse than unclipped at every $\beta$ tested, though the rule remains proper throughout, if no longer strictly: below the clip all reports tie, so truth-telling is still a maximizer, just not the unique one. The operative variable is the reachable region of identically zero training gradient, not properness.

\section{Negative Results and Degenerate Cases}\label{app:negative}

\paragraph{Proof of Proposition 4 (proximal floor).} With the proximal anchor, $\dot c_0=e^{c_0}(-D_{\mathrm{eff}})(1+o(1))+\beta(c_0(0)-c_0)$; the restoring term does not vanish as $c_0\to-\infty$, and setting $\dot c_0=0$ gives $e^{c_0}=\beta(c_0(0)-c_0)/D_{\mathrm{eff}}$, a stable root at $c_0=\log\beta+\log\log(1/\beta)+O(1)$, approached exponentially fast; then $P_\infty=\Theta(\beta\log(1/\beta))$ with bounded tilt. \emph{The surrogate is not conservative: true KL is more permissive of collapse than the weight-space stand-in.} \qed

\paragraph{A norm--sharpness impossibility statement, and why it is demoted.} Under the proximal surrogate, stationarity bounds $\|\theta_e-\theta_0\|\le G_1/\beta$, and an $\varepsilon$-optimal stochastic policy needs logit norm $\Theta(\log(1/\varepsilon))$, so no equilibrium beats $\Delta(\beta)=\frac{\gamma\delta_0^2}{8}e^{-\delta_0G_1/\beta}$-suboptimality. Both halves fail as a statement about RLHF: (i) under \emph{true} KL the norm budget does not exist (Bernoulli KL is bounded uniformly in $\|\theta\|$, so hard-threshold policies are legitimate limiting stationary points); (ii) even under the surrogate the bound is numerically vacuous at realistic anchoring ($\Delta=2.3\times10^{-3}$ against an actual loss $0.527$ at $\beta=0.3$; $\Delta\approx10^{-22}$ at $\beta=10^{-2}$). All of the interesting failure is dynamical. We retain the statement because the way it fails under true KL is what pointed us to Lemma~1.

\paragraph{Parameterization-dependence of the exponent.} The $1/t$ law is a property of the logit parameterization class: any fixed linear or redundant softmax-logit reparameterization changes only constants (flowing both logits rather than their difference halves the envelope constants), while the nonlinear $v=-\log\theta$, $\theta>0$, gives $\dot\theta\propto(1+\theta)^{-2}$ and $\mathcal G\asymp t^{-1/3}$. All exponent claims are therefore stated at the standard parameterization.

\section{Simulation Details}\label{app:sim}

\paragraph{Calibration.} Two latent types $(q_H,q_L)=(0.9,0.2)$ ($\E[q]=0.55$), signal $s\mid H\sim N(\mu,1)$, $s\mid L\sim N(-\mu,1)$; $\mu$ is the base model's self-knowledge quality (the main text's weak, medium, and strong signal levels are $\mu=0.5,1,2$). The calibration is deliberately favorable to the penalty rule and held fixed; the collapse threshold is crossed by varying $\lambda$, and (B1) $\iff\E[q]<t^\ast$ fails at $\lambda=1$ for this calibration (so the $\lambda=1$ rows of the deployment table show the no-collapse regime, a property of the chosen accuracy rather than of $\lambda=1$; the empirical anchoring across deployed models is Section~\ref{app:empirical}).

\paragraph{Matched comparison.} Every panel uses: true gate-level policy KL for the penalty arm; the proximal anchor the report arm structurally admits (it has no action distribution, hence no policy KL; the asymmetry is structural, not a choice, and function-space anchors leave the conclusions unchanged, Section~\ref{app:report}); identical four-parameter bounded heads; matched overconfident, weakly discriminative initialization (base answers $\approx88\%$; reported confidences in $[0.70,0.95]$ against true posteriors in $[0.2,0.9]$); greedy deployment for both; horizon $T=10^4$; $\beta=10^{-3}$ unless stated. Reported quantities are absolute utility and coverage, never the ratio $U/U^\ast$ (meaningless where $U<0$ or $U^\ast\approx0$). The horizon-free phase diagram over $(\mu,\lambda,\beta)$ is Figure~\ref{sfig:fair}.

\begin{figure}[t]\centering
\includegraphics[width=0.97\linewidth]{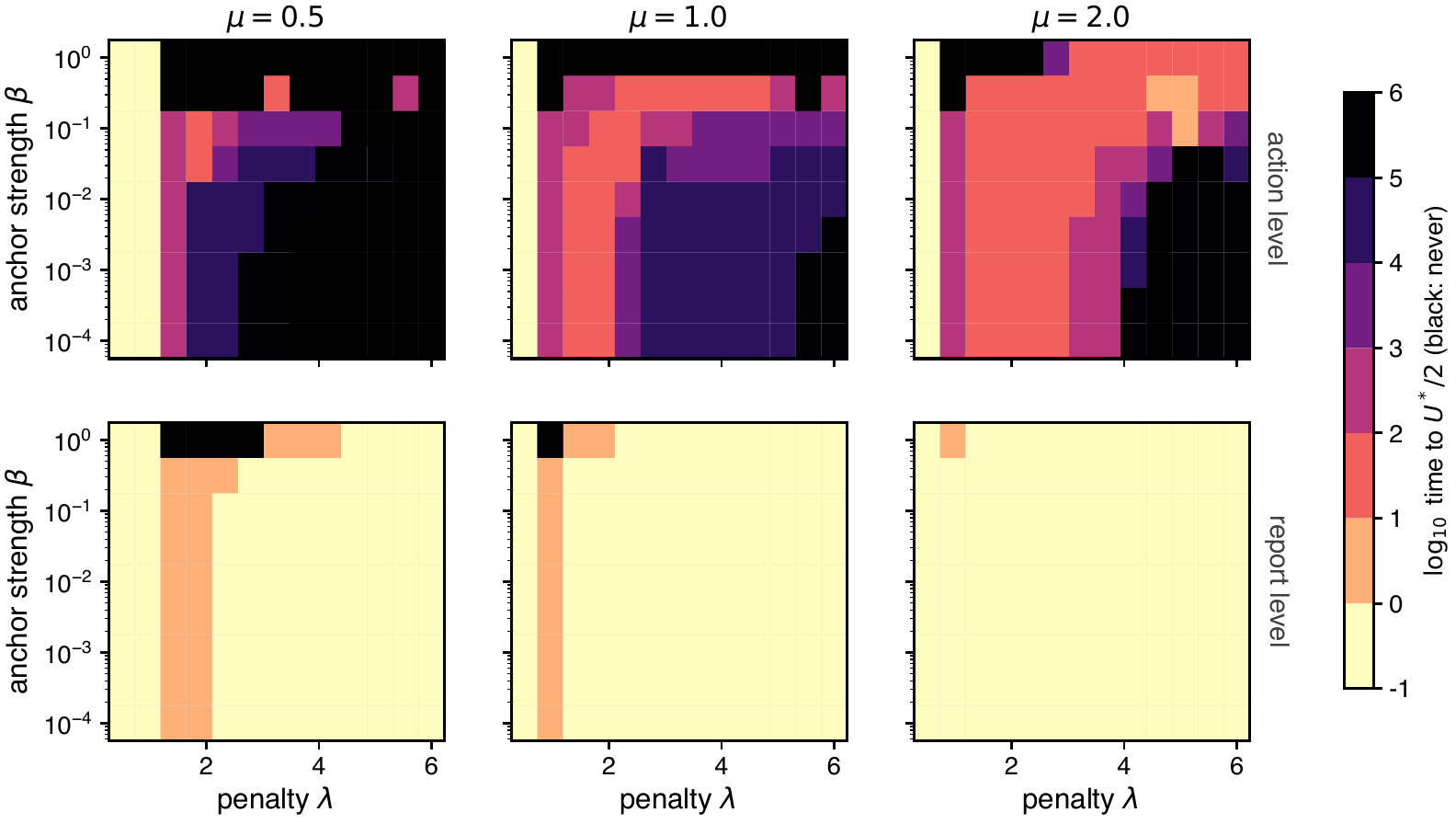}
\caption{Horizon-free phase diagram over $(\mu,\lambda)$. Colour is $\log_{10}$ of the time to competence, the first time the greedily deployed policy attains half of $U^\ast$; black cells never become competent within $10^6$. Top: action-level abstention under true policy KL. Bottom: report-level abstention. Both greedy, identical bounded heads; the two rows' $\beta$ multiply different objects (policy KL above, weight-space proximal term below) because a deterministic report policy has no action distribution; the asymmetry is structural, not a choice.}
\label{sfig:fair}
\end{figure}

\paragraph{Headline outcomes.} (i) The invariant $L$ for $\beta\in\{10^{-4},10^{-3},10^{-2}\}$ falls onto one curve, below the $\beta=0$ envelope past the transient ($Lt=1.26$ at $t=30$ against the intercept-carrying bound $1.53$; $1.14,1.05,1.01,1.002$ bounds against measured $0.71,0.57,0.51,0.32$ at $t=10^2,3\times10^2,10^3,6\times10^3$; the anchored guarantee carries the $\tfrac12$ and tends to $2$, the measured anchor share never exceeding $0.008$). (ii) Estimator contrast at $G=256$ on a homogeneous calibration where all estimators collapse: $\mathrm d\log L/\mathrm d\log t=-1.490$ (vanilla) and $-2.503$ (GRPO); both absolute values carry the same finite-time depression, and their \emph{difference} is $-1.013$ against a predicted $-1$; the difference, not the absolute exponent, is the stable statistic. (iii) Deployment at $\lambda=4.5$, $T=10^4$: the action-level arm deploys utility $0.000$ and coverage $0.000$ at every signal quality $\mu\in\{0.5,1,2\}$, forfeiting up to $U^\ast=0.199$ and $46.9\%$ coverage; the report-level arm attains $0.0045/0.074/0.194$ against optima $0.0045/0.0738/0.1994$ ($94$--$100\%$), with coverage within $2$ points of rational. (iv) At $\lambda=1$ ((B1) fails) both arms match the optimum: collapse is a property of the (B1) regime, not of training per se. (v) The unbounded-head control escapes by $t\approx10^4$ and recovers $U^\ast$ exactly, while the bounded head shows zero greedy coverage out to $10^7$ at $\mu=0.5$: the bounded readout is what makes the plateau consequential. (vi)~The anchor is load-bearing on the action side: swapping the true KL for the weight-space proximal term rescues deployment at $\mu\in\{1,2\}$ (coverage $0.234/0.468$ against rational $0.259/0.469$, $U=0.073/0.199$) but not at $\mu=0.5$ (coverage $0.000$ against rational $0.035$), measured at $T=10^4$, $\beta=10^{-3}$.

\section{An Empirical Sufficient Condition on Public Leaderboards}\label{app:empirical}

The blanket-answering condition (B1) is $\E[q]<t^\ast$, which public leaderboards do not identify: they report accuracy \emph{conditional} on the model's own abstentions, bounding $\E[q]$ only within $[\mathrm{acc},\mathrm{acc}+\mathrm{abst}]$. What is identified is a sufficient condition for the \emph{initial drift}. The drift at initialization is $\dot c_0(0)=\int\sgm'(v_0(s))g(s)\,ds$, and $\sgm'$ concentrates on the base policy's \emph{marginal} questions, not its confident ones. Suppose the base policy is describable by a sharp monotone confidence threshold, $v_0(s)=k(\bar q(s)-\theta_0)$ with $k$ large enough that $\sgm'(v_0)$ localizes on $\{\bar q\approx\theta_0\}$ at a scale on which $g$ varies little (a Laplace-type concentration condition; with monotone $\bar q$ the sign conclusion holds in the $k\to\infty$ limit, and we \emph{assume} it at finite $k$; this is the least rigorous step in this section). Then $\sign\dot c_0(0)=\sign(\theta_0-t^\ast)$, and since $\bar q>\theta_0$ on the attempted set,
\[
\theta_0\ \le\ \E[\bar q\mid\text{attempted}]\ =\ \frac{\mathrm{acc}}{\mathrm{acc}+\mathrm{err}} .
\]
Combining with $\mathrm{acc}-\lambda\,\mathrm{err}>0\iff\frac{\mathrm{acc}}{\mathrm{acc}+\mathrm{err}}>t^\ast$: \textbf{a negative error-penalized score (conditional accuracy below the Chow threshold) implies, within the threshold class, that the initial drift of penalty-rule training points toward collapse.} (Figure~\ref{sfig:money}.) The implication is one-directional, and the converse genuinely fails (a base with $\theta_0=0.30$, $k=30$ has conditional accuracy $0.532>t^\ast$ and a positive score, yet negative drift): the count \emph{undercounts} the affected set. On the AA-Omniscience leaderboard \citep{aa2025omniscience}, which scores $+1/-1/0$ over more than $36$ frontier models, only three scored above zero at the launch snapshot: at least $33$ satisfy the observable half of the condition at $\lambda=1$. Because $\lambda_{\mathrm{crit}}=\mathrm{acc}/\mathrm{err}$ varies across models (e.g.\ $1.17$, $1.00$, $0.79$, $0.73$ for the four models whose triples we analyzed), no single $\lambda$ is training-safe for a heterogeneous population. Scope: the behavioral half, that refusals follow a sharp monotone confidence threshold, is an assumption; models whose abstentions are driven by formatting or safety policies sit outside the certificate, and the benchmark itself is a measurement instrument whose rankings this analysis does not question.

\begin{figure}[t]\centering
\includegraphics[width=0.97\linewidth]{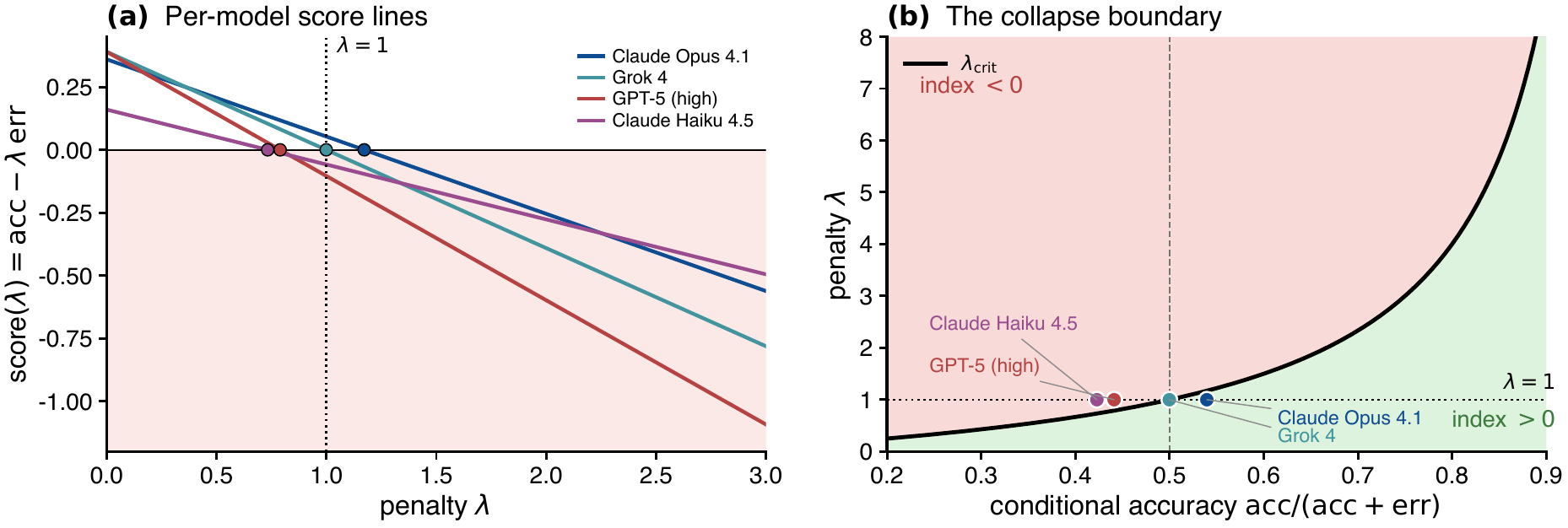}
\caption{(a)~Each model's score line $\mathrm{acc}-\lambda\,\mathrm{err}$; its zero is $\lambda_{\mathrm{crit}}=\mathrm{acc}/\mathrm{err}$, and at the deployed $\lambda=1$ the sign of the score is the published index. (b)~The same fact in the (conditional accuracy, $\lambda$) plane: a negative index \emph{implies} collapse drift; a positive one does not preclude it.}
\label{sfig:money}
\end{figure}

\paragraph{Re-scoring published evaluations.} Re-scoring published (accuracy, error, abstention) triples under $\mathrm{score}(\lambda)=\mathrm{acc}-\lambda\,\mathrm{err}$ across four public cohorts yields eight pairwise rank reversals at $\lambda<2.3$; the informative ones cross at $\lambda=0.041$ and $\lambda=0.834$ (a $16\%$-accuracy model overtaking a $39\%$-accuracy one). Reversals as such are guaranteed (score lines with different slopes must cross somewhere); what the re-scoring measures is \emph{where} they fall, and both informative crossings sit below the deployed $\lambda=1$: a leaderboard that fixes a single $\lambda$ is implicitly choosing among rankings that flip within the range of stakes it already spans.

\section{Language-Model Experiment Details}\label{app:llm}

Base models: Qwen2.5-1.5B (all multi-seed runs) and Qwen2.5-7B (single-seed runs at scale, with its own calibration of the pools and heads) \citep{qwen25}; short-form QA pools drawn from TriviaQA and PopQA \citep{joshi2017,mallen2023}, graded programmatically against reference answers, with a grading-error audit (manual audit of auto-wrong and auto-correct samples per cell; error masses $\delta_{\mathrm{FN}},\delta_{\mathrm{FP}}$ reported with confidence intervals, and the margin correction $m_{\mathrm{true}}=m_{\mathrm{obs}}+(1+\lambda)(\delta_{\mathrm{FN}}-\delta_{\mathrm{FP}})$ applied as a sensitivity analysis; conclusions whose sign depends on the correction are labeled grader-sensitive).

\paragraph{Infrastructure.} All runs are single-node, single-GPU jobs on a Slurm-managed cluster, each on one NVIDIA A100-SXM4-80GB (primary) or H100 80GB GPU, with $96$\,GB host memory and $8$ CPU cores per GPU job, Rocky Linux 8.10, Python 3.9.9, PyTorch 2.8.0 (CUDA 12.8, cuDNN 9.10.2), transformers 4.57.6, tokenizers 0.22.2, datasets 4.5.0, accelerate 1.10.1, numpy 2.0.2, scipy 1.13.1. Backbones (Qwen2.5-1.5B/7B-Instruct) run in bfloat16 with dropout disabled and log-probabilities computed in fp32; sampling is neutralized against the shipped generation configurations so rollouts come from the policy itself. GPU jobs total $637.5$ GPU-hours (longest single $8000$-step run ${\approx}35$\,h wall-clock); Tier-1 runs never call the language model and reproduce on CPU in minutes. Checkpointing is exact-resume, with randomness keyed per (seed, step).

\subsection{Tier 1: Head-Only Gate Dynamics on Frozen Features}
The paper's four-parameter gate head $v_\theta(x)=c_4\sgm(wz(x)+a)+c_0$ is trained on a frozen scalar feature $z(x)$ of the base model (the position-one logit difference between the two single-token decision symbols under an answer/abstain instruction template), with the answer/abstain decision implemented as an explicit two-token gate: the first generated position is restricted to the two decision symbols, and the per-prompt answer probability is read \emph{exactly} from the masked softmax, with no sampling error and no zero-inflation. Content and rewards come from pre-generated, pre-graded answer banks, so that $L(t)$, $\bar\sgm(t)$, $\rho(t)$, and $D_{\mathrm{eff}}(t)$ are exact pool sums at every step, and the reference distribution for the gate KL is the base head under the identical mask. This tier tests: (i) the integral envelope of Theorem~1(i), both as a deterministic expected-gradient flow (numerically integrated ODE; theorem-grade, tolerance-only) and under SGD sampling (sampling-robustness, multi-seed); (ii) the estimator menu, by sweeping the target per-prompt answer rate across half-decade points $10^{-5}$--$10^{-1}$ via bias offsets, at $G\in\{64,256\}$ and $\lambda\in\{1,4,9\}$, for vanilla policy gradient, mean-baseline, and group-standard-deviation normalization, against \emph{zero-parameter enumerated references} computed from the pool's own empirical correctness distribution. The tolerances: a GRPO-minus-vanilla local slope difference outside $[-1.3,-0.7]$ above the knee falsifies the $-2$ prediction; a measured knee outside a factor $2$ of the enumerated location, or a $G$-scaling ratio outside $[2,8]$ for $G=64{\to}256$, falsifies the knee prediction; below the knee, drift$/p$ outside $[2/3,1.5]$ of $(2\bar q-1)\sqrt{G-1}$, or a $\lambda$-ratio ($\lambda=9$ vs $4$) outside $[0.8,1.25]$, falsifies the $\lambda_{\mathrm{eff}}=1$ erasure; the dynamic-sampling comparison at $\beta=0$ tests rollout-axis coincidence. Raw-gradient probes at frozen checkpoints (fixed prompt subsets, fixed rollout counts) provide the optimizer-independent drift statistic.

\subsection{Tier 2: Full-Parameter Fine-Tuning}
Full-parameter RL fine-tuning with an \emph{unbiased} estimator (leave-one-out baselines) on a two-tier prompt mixture, $80\%$ low-accuracy ($\hat q\in[0.15,0.35]$) and $20\%$ high-accuracy ($\hat q\ge0.90$), chosen so that the aggregate satisfies (B1) while a certified subpopulation is unambiguously worth answering. The sequence-level KL to the frozen reference is implemented as a reward-side term (the summed per-token log-ratio) through its own leave-one-out advantage channel, i.e.\ the update direction is $\frac1G\sum_i(A_i^{r}-\beta A_i^{K})\nabla\log\pi_\theta(y_i\mid x)$ with separate baselines for the reward and KL channels; this is an unbiased gradient of the anchored objective \citep{williams1992,schulman2020kl,ahmadian2024}; the reference is masked identically to the policy at the gate position, and the forced end-of-sequence token after a refusal is excluded from the KL term.

\emph{Action-level arm.} The penalty rule $(+1,-\lambda,0)$ with the two-token decision interface of Tier 1, now carried by the full model's own next-token distribution with no added parameters: the first generated position is restricted to the answer/refusal pair, refusal is a single discrete action (immediate end of sequence), and the per-prompt answer probability is read exactly from the renormalized pair in one forward pass. Predictions: mean training reward negative and rising (the collapse signature), per-prompt answer rates on the high-accuracy tier crushed despite intact capability.

\emph{Capability certification.} The central confound, that the gate closed because the model got worse, is excluded by construction: at the crossing checkpoints and at the end of training, each high-tier prompt receives $256$ forced-answer rollouts, and the analysis conditions on the subset whose per-prompt one-sided $95\%$ Clopper--Pearson lower confidence bound on correctness exceeds $t^\ast$ at every capability checkpoint (retaining a positive answering margin). Drag-down is claimed only if the median gate probability on that certified subset falls below half its initial value at consecutive checkpoints.

\emph{Attribution control.} A paired run (common random numbers) in which the low-accuracy tier's \emph{reward-channel} advantages are zeroed while its KL channel is retained: if the high-tier gate collapse disappears, the collapse is attributed to the shared readout transmitting the low tier's negative drift (the mechanism) rather than to any high-tier-local force.

\emph{Report-level arm.} Same data, same estimator, no abstain action (the first position is forced to the answer symbol): a prompt-level confidence head is trained with the composite objective ($\alpha=2$; score channel through the group advantage on the composite reward with the report value detached, plus the pathwise Brier gradient through the head), the sequence KL applied to answer content only; abstention is applied only at deployment by thresholding the learned confidence at $t^\ast$. Predictions: no decay of mean reward under either estimator; final accuracy within $0.05$ and Brier score within $0.05$ of their early-training values (translation-invariant, component-wise criteria); deployment coverage and utility near the certified-capability optimum.

\subsection{Results}
\paragraph{Head-only gate dynamics.} Every quantitative tolerance above is met:\smallskip

\noindent{\small\setlength{\tabcolsep}{5pt}\begin{tabular}{@{}lcc@{}}
\toprule
quantity & theory & measured\\
\midrule
slope above the knee (group-norm.) & $\tfrac12$ & $0.507$\\
slope, mean-baseline & $1$ & $0.951$\\
knee scaling, $G=64{\to}256$ & ${\approx}4$ & $3.79$\\
knee vs.\ enumerated location & $1$ & $0.989$\\
\bottomrule
\end{tabular}}
\smallskip

\noindent Below the knee the drift is $\lambda$-invariant at every grid point, the amplitude ratios above the knee fall within their tolerance bands, and the first-window mean training reward rises in every sweep configuration ($p\le7{\times}10^{-4}$). The integral envelope of Theorem~1(i) holds along the exact expected-gradient flow and under SGD sampling in every seed. Two comparisons are structurally unavailable on this pool: the trajectory-slope contrast (the sparse-regime collapse leaves no common fitting window) and the dynamic-sampling comparison (the real base gate is bimodal, so zero-variance groups exhaust the resampling budget from the first step). No measured quantity contradicts a prediction.

\paragraph{Full-model collapse, per seed.} The certified cohort's median gate probability falls below half its initial value at consecutive checkpoints in three of three seeds, with $461/433/432$ of the $500$ prompts certified; the paired control's median never crosses and stays above $0.8$ of its start in three of three pairs. Step-$10$ cohort medians are $0.0059/0.0076/0.0037$ and terminal medians $5{\times}10^{-7}/1.6{\times}10^{-5}/3.7{\times}10^{-7}$, while the control's cohort median is exactly $1.000$ at every probe point. Cohort mean forced-answer correctness at the four capability checkpoints: $0.959\to0.965\to0.967\to0.939$. Capability at the two crossing checkpoints is measured at the exact steps, from replayed checkpoints whose weights are bit-identical to the run's: the per-prompt bound clears $t^\ast$ for $498/486/495$ of the $500$ prompts at step $10$ and $497/491/495$ at step $13$, with cohort mean correctness $0.95$--$0.97$ there.
Terminal diagnostic-batch gradient norms per seed: task $0/0/0$; gate-KL anchor $6{\times}10^{-13}/1.8{\times}10^{-10}/2.6{\times}10^{-13}$; the control's task-gradient norm at the same step is $108$. The mean reward rises over the first window in two of three seeds; the third collapses before the window closes. At 7B (single seed), the base answers $0.51$ of its answerable questions when offered the abstain option ($0.91$ without it); the cohort median collapses by step $10$, sits near $10^{-4}$ for roughly $6000$ steps, and leaves the plateau between the $6310$ and $7943$ probe points, recovering to terminal median $1.0$ with mean $0.65$ and a revived task gradient of $131$; the paired control rises from the same $0.555$ start to $0.930$ by step $10$ and $1.000$ thereafter, never dipping.

\paragraph{Report-level runs.} At 1.5B (three seeds each), a $3000$-step AdamW run (lr $10^{-6}$; an optimizer-robustness variant of the same objective) moves coverage $0.466\to0.598/0.604/0.602$ and accuracy $0.41\to0.57$ with Brier flat near $0.27$, and the $8000$-step plain-SGD run reaches coverage $0.778/0.810/0.794$, accuracy $0.685$, Brier $0.22$. At 7B (single seed) coverage passes $0.8$ by step $631$ and ends at $0.976$, accuracy $0.942$, Brier $0.052$ at step $8000$.
The 7B confidence head fits its calibration split $9{\times}$ better than at 1.5B (MSE $0.0025$ vs.\ $0.0220$) yet still opens at coverage $0.422$: linear-probe ranking improves with scale, absolute calibration at $t^\ast=0.818$ does not, so the readout remains the binding constraint at both scales.

\paragraph{Grading.} Answers are graded programmatically. The manual audit re-judged all $24$ cells (penalty and attribution-control arms $\times$ three seeds $\times$ two checkpoints $\times$ two tiers), $100$ auto-wrong and $50$ auto-correct answers per cell, $3600$ rows in total. The error profile is one-sided: false negatives (aliases, prefixes, sentence-wrapped and cross-language answers the string matcher missed) carry per-cell mass $\delta_{\mathrm{FN}}\in[0.002,0.033]$ on the high tier and $[0.046,0.131]$ on the low tier, while false positives are near-absent ($\delta_{\mathrm{FP}}\le0.020$ per cell, zero in $18$ of $24$). Measured correctness therefore understates true correctness throughout: the pooled margin correction $(1+\lambda)(\delta_{\mathrm{FN}}-\delta_{\mathrm{FP}})$ is $+0.058$ to $+0.161$ on three high-tier strata ($-0.057$ on the fourth) and $+0.376$ to $+0.619$ on the low tier; the capability certificates are conservative ($\delta_{\mathrm{FP}}\approx0$ leaves them no inflation channel); and the corrected low-tier accuracy (${\approx}0.30$--$0.35$ against a measured $0.23$) leaves (B1) a wide margin, the corrected mixture mean ${\approx}0.46$ sitting far below $t^\ast=0.818$. With $95\%$ Clopper--Pearson intervals on the audited fractions and unsure judgments counted both ways, no margin's sign flips within any cell's band: no conclusion is grader-sensitive.

\end{document}